\theoremstyle{plain}
\newtheorem{theorem}{Theorem}[section]
\newtheorem{proposition}[theorem]{Proposition}
\newtheorem{lemma}[theorem]{Lemma}
\theoremstyle{definition}
\newtheorem{definition}[theorem]{Definition}
\theoremstyle{remark}
\newtheorem{remark}[theorem]{Remark}
\title{\bf Parameters or Privacy: A Provable Tradeoff Between Overparameterization and Membership Inference}
\author{Jasper Tan, Blake Mason, Hamid Javadi, Richard G.\ Baraniuk \\
Rice University}
\date{}
\begin{document}

\maketitle

\begin{abstract}
A surprising phenomenon in modern machine learning is the ability of a highly {\em overparameterized} model to generalize well (small error on the test data) even when it is trained to memorize the training data (zero error on the training data).
This has led to an arms race towards increasingly overparameterized models (c.f., deep learning).
In this paper, we study an underexplored hidden cost of overparameterization: the fact that overparameterized models may be more vulnerable to 
{\em privacy attacks}, in particular the {\em membership inference} attack that predicts the (potentially sensitive) examples used to train a model.
We significantly extend the relatively few empirical results on this problem by theoretically proving for an overparameterized linear regression model in the Gaussian data setting that membership inference vulnerability increases with the number of parameters.
Moreover, a range of empirical studies indicates that more complex, nonlinear models exhibit the same behavior.
Finally, we extend our analysis towards ridge-regularized linear regression and show in the Gaussian data setting that increased regularization also increases membership inference vulnerability in the overparameterized regime.
\end{abstract}
\section{Introduction}\label{sec:intro}

As more machine learning models are being trained on sensitive user data (e.g., customer behavior, medical records, personal media), there is a growing concern that these models may serve as a gateway for malicious adversaries to access the models' private training data \cite{carlini2021extracting, 10.1145/3436755}.
For example, the possibility of performing \textit{membership inference} (MI), the task of identifying whether a specific data point was included in a model's training set or not, can be greatly detrimental to user privacy. 
In settings like healthcare, knowledge of mere inclusion in a training dataset (e.g., hospital visitation records) already reveals sensitive information. 
Moreover, MI can enable the extraction of verbatim training data from a released model \cite{carlini2021extracting}.

The general motivation for this paper is the recent realization that privacy issues around machine learning might be exacerbated by today's trend towards increasingly {\em overparameterized models} that have more parameters than training data points and so can be trained to memorize (attain zero error on) the training data.
Surprisingly, some overparameterized models (e.g., large regression models \cite{hastie2019surprises}, massive deep networks \cite{fedus2021switch, lin2021m6}) generalize extremely well \cite{belkin2019reconciling,dar2021farewell}. 
Limited empirical evidence suggests that overparameterization may lead to greater privacy vulnerabilities \cite{carlini2021extracting, leino2020stolen, yeom2018privacy, carlini2021membership, mireshghallah2022quantifying}.
However, there has been little to
no analytical work on this important problem.  

{\em In this paper, we take the first steps towards an analytical understanding of how the number of parameters in a machine learning model relates to MI vulnerability.}
In a theoretical direction, we prove for linear regression on Gaussian data in the overparameterized regime that increasing the number of parameters of the model increases its vulnerability to MI (Theorem~\ref{theorem:asymptotic_distributions}).
In a supporting empirical direction, we demonstrate that the same behavior holds for a range of more complex models: a latent space model, a time-series model, and a nonlinear random ReLU features model (Section \ref{sec:experiments}).

We also extend our analysis towards the techniques of ridge regularization and noise addition in Section \ref{sec:mitigation_strategies}.
We first prove that MI vulnerability similarly increases with the number of parameters for ridge-regularized models. 
Surprisingly, we also demonstrate analytically that additional regularization \emph{increases} this vulnerability in overparameterized linear models. 
Hence, ridge regularization is not always an effective defense against MI and can even be harmful. 
Afterwards, we show that the privacy-utility trade-off induced by reducing the number of parameters of a linear regression model is equivalent to that obtained by adding independent Gaussian noise to the output of the model when using all available parameters.

Overall, our analyses show that there are settings where reducing the number of parameters of a highly overparameterized model is a simple strategy to protect a model against MI. 
As the trend towards increasingly overparameterized machine learning models accelerates, {\em our results make the case for less overparameterization when privacy is a concern.}

{\bf Related work.}~
The problem of membership inference has received considerable attention, and we refer the reader to \cite{hu2021membership} for a survey of the prior art. 
Many works have demonstrated how deep neural networks can be highly vulnerable to MI attacks (cf., \cite{shokri2017membership, sablayrolles2019white, galinkin2021influence, wang2020against} for example).
Various types of attacks, such as shadow models \cite{shokri2017membership}, confidence-based attacks \cite{long2017towards, salem2018ml}, and label-only attacks \cite{pmlr-v139-choquette-choo21a} have risen to further expose privacy vulnerability of machine learning models. Our analysis in this work is based on optimal MI attacks, a framework also employed by other studies of MI \cite{yeom2018privacy, sablayrolles2019white, carlini2021membership}.

While we study the effect parameterization has on MI vulnerability, several studies examine how other aspects of the machine learning pipeline, such as data augmentation \cite{kaya2021does}, dropout \cite{galinkin2021influence}, sparsity \cite{bagmar2021membership}, and ensembles \cite{rezaei2021accuracy} affect MI vulnerability. 
\cite{wang2020against} show that pruning a neural network can defend against MI, lending further experimental support to the link between overparameterization and membership inference.
\cite{yeom2018privacy} analyzes the relationship between overfitting (as measured by generalization error) and membership inference but does not connect this to the number of parameters. 
Much recent work has shown though that overparameterization does not always lead to increased generalization error and indeed sometimes can even decrease it \cite{belkin2020two, belkin2019reconciling, hastie2019surprises, dar2021farewell}, 
suggesting that more work is needed linking overparameterization and membership inference beyond mere generalization error. 
Despite some empirical evidence that overparameterized models are vulnerable to MI attacks \cite{carlini2021extracting, mireshghallah2022quantifying}, to the best of our knowledge, there has been no theoretical study of the effect of this connection.

Differential privacy (DP) \cite{dwork2008differential} is another popular framework frequently used to study the privacy properties of machine learning algorithms, and models that have DP also enjoy MI guarantees \cite{sablayrolles2019white}. \cite{bun2018fingerprinting} bounds the minimum dataset size as a function of dimension to ensure privacy, however its results do not extend to data of continuous values (e.g., regression data) as
is done in this work and does not account for the randomness in the data generating process, which as argued in \cite{sablayrolles2019white}, is a necessary condition to bound the optimal MI risk. 

While our theoretical analysis focuses on linear regression, many works have studied overparameterized linear models as an interpretable and tractable test case of more complex overparameterized models such as deep networks
\cite{bartlett2020benign, tsigler2020benign, hastie2019surprises, belkin2019reconciling, belkin2020two}, suggesting links of our results to more complex nonlinear models.

{\bf Contributions.}~
In this paper, we establish the first theoretical connection between overparameterization and vulnerability to membership inference attacks.
Our contributions are as follows:
\begin{enumerate}

    \item We derive the optimal MI accuracy for linear regression on Gaussian data and show that increasing the number of parameters increases the vulnerability to MI.
    \item We empirically show for more complex models that increasing the number of parameters also increases MI vulnerability.
    \item We theoretically show for overparameterized linear models that increased ridge regularization increases MI vulnerability in the Gaussian data setting.
    \item We use our analysis to show that reducing the number of parameters of a linear regression model yields an equivalent privacy-utility trade-off as adding independent noise to the output of a model using all available parameters.
    
\end{enumerate}

Our code is provided in \url{https://github.com/tanjasper/parameters_or_privacy}.
\section{The Membership Inference Problem}\label{sec:problem_statement}

We now introduce our notation and formally state the membership inference (MI) problem. 
Let $D \in \Z^+$ denote the data dimension. Let $z = (\bx, y) \in \R^D \times \R$ denote a data point, and let $\D$ be a distribution over the data points. Consider a set $\S$ sampled $\S\sim \D^n$ of size $n \in \Z^+$ data points and denote it by $\S = \{z_1, z_2, ..., z_n\} = \{(\bx_1, y_1), ..., (\bx_n, y_n)\}$. Let $\mlmodel_S$ represent a machine learning model obtained by applying a training algorithm $T$ on the dataset $S$. In particular, $\mlmodel_S$ is a deterministic function in $\R^D \rightarrow \R$. Typically, it is a function that minimizes $\sum_{i=1}^n \ell(\mlmodel(x_i), y_i)$ over a family of functions $f$ for some loss function $\ell: \R\times\R\rightarrow \R_{\geq 0}$. We denote $\widehat{y}_i = \mlmodel_S(x_i)$.

Given a data point $z_0$ and a trained model $\mlmodel_S$, {\em membership inference} is the task of identifying if $z_0$ was included in the set $S$ used to train $\mlmodel_S$. Different MI attacks are characterized by the types of information accorded to the adversary. In this work, we focus on a blind black-box setting where the adversary has access to $\bx_0$, $\mlmodel_S(\bx_0)$, $n$, the training algorithm $T$, and the data distribution $\D$, but not to the ground truth $y_0$, the loss value $\ell(f_S(\bx_0), y_0)$, or any learned parameters of $\mlmodel_S$. 
Thus, the adversary is a function $\adversary:\R^d\times\R\rightarrow\{0, 1\}$ such that given $(\bx_0, \mlmodel_S(\bx_0))$, it outputs 0 or 1, representing its prediction as to whether $\bx_0$ is a member of $S$ based on $\mlmodel_S(\bx_0)$. 

There are two main interests in studying blind MI wherein the adversary does not have access to the loss value. First, there are many realistic scenarios where the ground truth variable of interest is unknown for the general population, and hence the need for a model $f_S$ to predict it. 
Second, since we study the interpolation regime where the training loss can be driven down to exactly 0, MI becomes trivial given the loss value: an adversary that predicts any data point that achieves 0 loss as a member will have perfect accuracy.
Multiple works in the literature also study this blind setting \cite{salem2018ml, hui2021practical}.
We emphasize that our results on the blind adversary lower bound the performance of adversaries that additionally have access to more information.

MI is often defined as an experiment to facilitate precise analysis, and we follow the setup of \cite{yeom2018privacy} summarized below.
\begin{experiment}
\label{exp:mi}
\textbf{Membership inference experiment.} Given a data distribution $\D$, an integer $n$, a machine learning training algorithm $T$, and an adversary $\adversary$, the MI experiment is as follows:
\begin{enumerate}[topsep=-1pt,itemsep=-1pt]
    \item Sample $\S \sim \D^n$.
    \item Apply the training algorithm $T$ on $\S$ to obtain $\mlmodel_{\S}$.
    \item Sample $m \in \{0, 1\}$ uniformly at random.
    \item If $m = 0$, sample a data point $(\bx_0, y_0) \sim \D$. Else, sample a data point $(\bx_0, y_0) \in \S$ uniformly at random.
    \item Observe the adversary's prediction $\adversary(x_0, \mlmodel_S(x_0)) \in \{0, 1\}$.
\end{enumerate}
\end{experiment}

We wish to quantify the optimal performance of the adversary $\adversary$ and turn to the popular {\em membership advantage} metric defined in \cite{yeom2018privacy}.
The membership advantage of $\adversary$ is the difference between the true positive rate and false positive rate of its predictions.

\begin{definition}[\cite{yeom2018privacy}]
\label{definition:advantage}
The \textit{membership advantage} of an adversary $\adversary$ is:
\begin{align*}
   \text{Adv}(\adversary)  := \Pr(\adversary(\bx_0, \mlmodel_S(\bx_0)) = 1 \mid m = 1) - \Pr(\adversary(\bx_0, \mlmodel_S(\bx_0)) = 1 \mid m = 0), 
\end{align*}
where $\Pr(\cdot)$ is taken jointly over all randomness in Exp.~\ref{exp:mi}.  
\end{definition}

Note that membership advantage is an average-case metric, as opposed to worst-case metrics considered by other privacy frameworks such as differential privacy. Our analysis is thus focused on average-case privacy leakage, and we do not provide worst-case guarantees.
\section{Theoretical Results}\label{sec:theory}

\subsection{Optimal Membership Inference Via Hypothesis Testing}

In this paper, rather than studying current MI attacks, we study the optimal MI adversary: the adversary that maximizes membership advantage. As such, our analysis and results are not restricted to the current known attack strategies, which are constantly evolving, and instead serve as upper bounds for the performance of any MI attack, now or in the future. The following proposition supplies the theoretically optimal MI adversary. 

\begin{proposition}
\label{proposition:optimal_adversary}
The adversary that maximizes membership advantage is:
\begin{align*}
    \adversary^*(\bx_0, \widehat{y_0}) = \begin{cases} 
        1 &\mbox{if } P(\widehat{y_0} \mid m=1, \bx_0) > P(\widehat{y_0} \mid m=0, \bx_0),\\
        0 &\mbox{otherwise},
    \end{cases}
\end{align*}
where $\widehat{y_0} = \mlmodel_S(\bx_0)$ and $P$ denotes the distribution function for $\widehat{y_0}$ over the randomness in the membership inference experiment conditioned on $\bx_0$.
\end{proposition}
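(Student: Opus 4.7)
The plan is to view the claim as a binary hypothesis testing problem with equal priors on $m \in \{0,1\}$ and apply a Neyman--Pearson style pointwise optimization. For an arbitrary deterministic adversary $\adversary$, I would rewrite its membership advantage by letting $E = \{(\bx, \widehat{y}) : \adversary(\bx, \widehat{y}) = 1\}$ and expressing each of the two conditional probabilities in Definition~\ref{definition:advantage} as an integral over $E$ of the corresponding joint conditional law of $(\bx_0, \widehat{y_0})$ given $m$. This gives
\[
\text{Adv}(\adversary) \;=\; \int_{E} \left[ P(\bx_0, \widehat{y_0} \mid m=1) - P(\bx_0, \widehat{y_0} \mid m=0) \right] d\bx_0\, d\widehat{y_0}.
\]

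Next I would verify that the marginal distribution of $\bx_0$ is identical under the two hypotheses: when $m=0$ it is the marginal of $\D$ by construction, and when $m=1$ the training set $\S \sim \D^n$ has i.i.d. components, so a uniformly-selected $(\bx_0, y_0) \in \S$ also has marginal $\D$ after integrating over $\S$. Hence the joint factors as $P(\bx_0, \widehat{y_0} \mid m) = P(\bx_0)\, P(\widehat{y_0} \mid m, \bx_0)$, and the advantage reduces to
\[
\text{Adv}(\adversary) \;=\; \int_{E} P(\bx_0)\, \left[ P(\widehat{y_0} \mid m=1, \bx_0) - P(\widehat{y_0} \mid m=0, \bx_0) \right] d\bx_0\, d\widehat{y_0}.
\]

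The optimization is now pointwise: since $P(\bx_0) \ge 0$, the integral is maximized by including $(\bx_0, \widehat{y_0})$ in $E$ exactly when $P(\widehat{y_0} \mid m=1, \bx_0) > P(\widehat{y_0} \mid m=0, \bx_0)$ and excluding it when the inequality is reversed. Ties contribute zero to the advantage and may be broken arbitrarily. This is precisely the decision rule defining $\adversary^*$ in the proposition, establishing optimality.

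I expect the only real obstacles to be notational and measure-theoretic bookkeeping: justifying the marginal identity $P(\bx_0 \mid m=0) = P(\bx_0 \mid m=1)$ despite the elaborate randomness in $\S$ (a short symmetry/i.i.d.\ argument), and being careful with densities (or Radon--Nikodym derivatives) if the conditional distributions of $\widehat{y_0}$ are not absolutely continuous. Both are routine; the substantive content is simply Neyman--Pearson applied to a simple-vs.-simple test with a uniform prior on $m$, and the membership advantage is the natural payoff made maximal by the likelihood-ratio rule.
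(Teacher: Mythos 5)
Your proposal is correct and follows essentially the same route as the paper's proof: both hinge on the observation that the marginal of $\bx_0$ is identical under $m=0$ and $m=1$, factor the advantage into an expectation over $\bx_0$ of an integral of the difference of the conditional densities of $\widehat{y_0}$, and maximize that integral pointwise by the likelihood-comparison rule. Your explicit attention to the measure-theoretic caveats is slightly more careful than the paper's treatment, but the argument is the same.
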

\begin{proof}
 Conditioned on $m=0$, we have that $(\bx_0, y_0)$ is drawn from $\D$. Conditioned on $m=1$, we have that $(\bx_0, y_0)$ is an element chosen randomly from $S$, whose elements are themselves drawn from $\D$. Thus, in both the $m=0$ and $m=1$ cases, $x_0$ has the same distribution. We thus have:
\begin{align}
    \adversary^* &= \argmax{\adversary} \text{Adv}(\adversary) \nonumber \\
    \begin{split}
    &= \argmax{\adversary} \Pr(\adversary((\bx_0, \widehat{y_0})) = 1 \mid m = 1) - \Pr(\adversary((\bx_0, \widehat{y_0})) = 1 \mid m = 0)
    \end{split}  \nonumber \\
    \begin{split}
    &= \argmax{\adversary} \E_{\bx_0}\bigl[ \Pr(\adversary((\bx_0, \widehat{y_0})) = 1 \mid m=1, \bx_0) - \Pr(\adversary((\bx_0, \widehat{y_0})) = 1 \mid m=0, \bx_0) \bigl]
    \end{split} \nonumber \\
    \begin{split}
    &= \argmax{\adversary} \E_{\bx_0} \left[ \int_{\R} \mathbbm{1}_{\adversary(\bx_0, \widehat{y_0}) = 1} \left( P(\widehat{y_0} \mid m=1, \bx_0) - P(\widehat{y_0} \mid m-1, \bx_0) \right) \,dP \right], \nonumber
    \end{split}
\end{align}

where in the third line, the randomness over $\bx_0$ is removed from the probability. To maximize the integral, we set $\adversary(\bx_0, \widehat{y_0}) = 1$ if $P(\widehat{y_0} \mid m = 1, \bx_0) - P(\widehat{y_0} \mid m=1, \bx_0) > 0$ and $0$ otherwise.
\end{proof}

As observed by \cite{ye2021enhanced, carlini2021membership}, the optimal adversary performs a hypothesis test with respect to the posterior probabilities, with the two hypotheses being:
\begin{align}
    H_0: \S \sim \D^n, (\bx_0, y_0) \sim \D \nonumber  \ \ \text{ and } \ \ 
    H_1: \S \sim \D^n, (\bx_0, y_0) \sim \S, \nonumber
\end{align}
so that maximizing membership advantage is achieved by performing a likelihood ratio test.

\subsection{Linear Regression with Gaussian Data}
\label{sec:linear_regression}

We begin our study of overparameterization's effect on MI with linear regression with Gaussian data. We find that in the sufficiently overparameterized regime, increasing the number of parameters increases vulnerability to MI. We denote by $n, p, D$ the number of data points, number of model parameters, and data dimensionality, respectively. \
Let $n, p, D \in \Z^+$ be given such that $p \leq D$, and let $\sigma > 0$ be given. 
Consider a $D$-dimensional random vector $\beta \sim \cN\left(0, \frac{1}{D}\bI_{D}\right)$, representing the true coefficients. 
We consider data points $(\bx_i, y_i)$ where $\bx_i \sim \cN(0, \bI_{d})$ and $y_i = \bx_i^\top \beta + \epsilon_i$, where $\epsilon_i \sim \cN(0, \sigma^2)$. Sampling $n$ data points, we denote by $\bX$ the $n\times D$ matrix whose $i^{\text{th}}$ row is $x_i^\top$ and by $\by$ the $n$-dimensional vector of elements $y_i$. Further, let $\bX_p$ be the $n\times p$ matrix containing the first $p$ columns of $\bX$.
Least squares linear regression finds the minimum-norm $p$-dimensional vector $\hat{\beta}$ that minimizes $\|\by - \bX_p {\beta}\|_2^2$, which is $\hat{\beta} = \bX_p^{\dagger} \by$, where $\bX_p^\dagger$ denotes the Moore-Penrose inverse of $\bX_p$. Then, for a given feature vector $\bx_0$, the model predicts $\widehat{y_0} = \bx_{0, p}^\top \hat{\beta}$, where $\bx_{0, p}$ is the vector containing the first $p$ elements of $\bx_0$.
In this setting, we can derive the membership advantage of the optimal adversary as a function of the number of parameters $p$ used by the model. 

\begin{theorem}
\label{theorem:asymptotic_distributions}
Let $n, p, D \in \mathbb{Z}^+$ be given such that $n+1 < p \leq D$. Let $\bx_0$ be a given $D$-dimensional vector. Let $\beta \sim \cN(0, \frac{1}{D}\bI_D), \bx_i \sim \cN(0, \bI_D), \epsilon_i \sim \cN(0, \sigma^2)$, and $y_i = \bx_i^\top \beta + \epsilon_i$ for $i \in \{1, 2, ..., n\}$. Let $m$ be a variable whose value is either 0 or 1 such that, if $m=1$, $\bx_k$ is set to $\bx_0$ for $k$ chosen uniformly at random from $1, 2, ..., n$. Let $\bX$ be the $n \times D$ matrix whose rows are $\bx_1^\top, \bx_2^\top, ..., \bx_n^\top$. Finally, let $\widehat{y_0} = \bx_{0, p}^{\top} \bX_p^\dagger \by$ where $\by$ is the $n$-dimensional vector with elements $y_i$. Then, as $n, p, D \rightarrow \infty$ such that $\frac{p}{n} \rightarrow \gamma \in (1, \infty)$, we have:
\begin{align}
    \left(\widehat{y_0} \mid m=0, \bx_0\right) \sim \cN(0, \sigma_0^2) \ \ \text{ and } \ \ \left(\widehat{y_0} \mid m=1, \bx_0\right) \sim \cN(0, \sigma_1^2) \label{eq:m0_dist}
\end{align}
where
\begin{align}
    \sigma_0^2 := \left(\frac{n}{p}\right)\left(\frac{1}{D} + \frac{1 + \sigma^2 - \frac{p}{D}}{p-n-1} \right) \|\bx_{0,p}\|^2 \ \ \text{ and } \ \
    \sigma_1^2 := \sigma^2 + \frac{\|\bx_0\|^2}{D}. \label{eq:m1_var}
\end{align}
Consider the case when $\sigma_1 > \sigma_0$. The resulting optimal membership inference algorithm $\adversary^*$ is
\begin{align}
    \adversary^*(\bx_0,\widehat{y_0}) := \1\left[\widehat{y_0}^2 > \alpha^2 \right]
     \ \ \text{ where } \ \  \alpha := \sqrt{\frac{\sigma_0^2\sigma_1^2\log\left(\frac{\sigma_1^2}{\sigma_0^2}\right)}{\sigma_1^2 - \sigma_0^2}}.
     \label{eq:theorem_lrt}
\end{align}
with membership inference advantage {\normalfont Adv}:
{\normalfont
\begin{align}
    \textrm{Adv}(\adversary^*) = \E_{\bx_0} \left[ 2\left\{ \Phi\left(\frac{\alpha}{\sigma_0}\right) - \Phi\left(\frac{\alpha}{\sigma_1}\right)    \right\} \right], \label{eq:theorem_advantage}
\end{align}
}
where $\Phi(\cdot)$ denotes the CDF of the standard normal, and $\alpha, \sigma_0, \sigma_1$ are conditioned on $\bx_0$.
\end{theorem}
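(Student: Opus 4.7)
The overall plan is to apply Proposition~\ref{proposition:optimal_adversary}, which reduces the optimal adversary to the likelihood-ratio test between the two conditional distributions of $\widehat{y_0}$; hence the main work is deriving these two distributions. The $m=1$ case is immediate from the interpolation property of overparameterized least squares: when $p>n$, the minimum-norm estimator $\hat\beta = \bX_p^\dagger \by$ satisfies $\bX_p\hat\beta = \by$ exactly, so taking $\bx_0 = \bx_k$ for the random training index $k$ gives $\widehat{y_0} = y_k = \bx_k^\top\beta + \epsilon_k$, which conditional on $\bx_0$ is $\cN(0,\|\bx_0\|^2/D + \sigma^2) = \cN(0,\sigma_1^2)$.

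For the $m=0$ case, $\bx_0$ is independent of $\bX,\beta,\epsilon$. Write $\by = \bX_p\beta_p + \bX_{p^c}\beta_{p^c}+\epsilon$, where $\beta_p$ and $\beta_{p^c}$ split $\beta$ into its first $p$ and last $D-p$ coordinates and $\bX_{p^c}$ is the last $D-p$ columns of $\bX$. Substituting into $\hat\beta = \bX_p^\dagger\by$ yields
\[
\hat\beta \;=\; P_{\mathrm{row}}\beta_p \;+\; \bX_p^\dagger \bX_{p^c}\beta_{p^c} \;+\; \bX_p^\dagger \epsilon,
\]
where $P_{\mathrm{row}} = \bX_p^\top(\bX_p\bX_p^\top)^{-1}\bX_p$ projects onto the row space of $\bX_p$. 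Conditional on $(\bx_0,\bX)$, these three summands come from independent centered Gaussians ($\beta_p,\beta_{p^c},\epsilon$), so $\widehat{y_0} = \bx_{0,p}^\top\hat\beta$ is conditionally Gaussian with mean zero and variance $V(\bX)$ decomposing into three pieces. I compute their expectations using: (i) $E[P_{\mathrm{row}}] = (n/p) I_p$, since $P_{\mathrm{row}}$ is a uniformly random rank-$n$ projection independent of $\bx_0$, producing the $n\|\bx_{0,p}\|^2/(pD)$ contribution; (ii) $E[\bX_{p^c}\bX_{p^c}^\top] = (D-p)I_n$, which collapses the second term to $(1-p/D)\cdot E[\bx_{0,p}^\top\bX_p^\top(\bX_p\bX_p^\top)^{-2}\bX_p\bx_{0,p}]$; (iii) the noise term contributes $\sigma^2$ times the same quadratic form. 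That quadratic form reduces, by rotational invariance of $\bX_p$, to $\|\bx_{0,p}\|^2/p$ times $E[\mathrm{tr}((\bX_p\bX_p^\top)^{-1})]$, which equals $n/(p-n-1)$ via the inverse-Wishart identity $E[(\bX_p\bX_p^\top)^{-1}] = I_n/(p-n-1)$ valid for $p>n+1$. Summing the three contributions gives exactly $\sigma_0^2$.

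The main technical obstacle is upgrading the above from an expected-variance calculation to an honest asymptotic Gaussian limit: since $\widehat{y_0}\mid(\bx_0,\bX)$ is Gaussian with random variance $V(\bX)$, I need $V(\bX)$ to concentrate at $\sigma_0^2$ as $n,p,D\to\infty$ with $p/n\to\gamma$. This follows from standard random-matrix concentration, pairing Hanson--Wright-type control on the quadratic forms with Marchenko--Pastur control of the spectrum of $\bX_p\bX_p^\top/p$; the latter is what makes the $(\bX_p\bX_p^\top)^{-2}$ piece behave deterministically and is the delicate step. Once both conditional distributions are identified as $\cN(0,\sigma_m^2)$, the likelihood-ratio test becomes a quadratic threshold $\widehat{y_0}^2 > \alpha^2$ after taking logarithms and solving, producing $\alpha^2 = \sigma_0^2\sigma_1^2\log(\sigma_1^2/\sigma_0^2)/(\sigma_1^2-\sigma_0^2)$. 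The advantage computation then follows from symmetry of the mean-zero Gaussians: $\Pr(\widehat{y_0}^2>\alpha^2\mid m) = 2(1-\Phi(\alpha/\sigma_m))$, giving $\mathrm{Adv}(\adversary^*) = 2(\Phi(\alpha/\sigma_0)-\Phi(\alpha/\sigma_1))$, and an outer expectation over $\bx_0$ (on which $\alpha,\sigma_0,\sigma_1$ depend through $\|\bx_0\|$ and $\|\bx_{0,p}\|$) yields the stated formula.
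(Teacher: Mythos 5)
Your proposal is correct and follows essentially the same route as the paper's proof: interpolation gives the $m=1$ distribution, the $m=0$ variance splits into a projection term (with $\E[P_{\mathrm{row}}]=(n/p)\bI_p$) plus a residual/noise term handled by an inverse-Wishart expectation, and the LRT and CDF computations close the argument. Your use of $\E[(\bX_p\bX_p^\top)^{-1}]=\bI_n/(p-n-1)$ with a trace/rotational-invariance step is just a repackaging of the paper's appeal to $\E[(\bX_p^\top\bX_p)^\dagger]=\tfrac{n}{p}\tfrac{1}{p-n-1}\bI_p$, and your concentration step for the conditional variance $V(\bX)$ plays the same role as the paper's Marchenko--Pastur-type lemma combined with its spherical-symmetry dot-product lemma.
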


\begin{remark}
The case where $\sigma_1 < \sigma_0$ occurs when $\gamma$ is small ($p\approx n$). The same membership advantage result holds in this, reversing the roles of $\sigma_0$ and $\sigma_1$ in $\text{Adv}(\adversary^\ast)$ in Eqs.~\eqref{eq:theorem_lrt} and \eqref{eq:theorem_advantage} and reversing the inequality in Eq.~\eqref{eq:theorem_lrt}. 
\end{remark}

\begin{remark}
The above result holds using the asymptotic distributions as $n, p, D \rightarrow \infty$. In Lemma~\ref{lem:nonasymp_pdf} in the Appendix, we derive the non-asymptotic distributions for the predictions of the minimum-norm least squares interpolator, though they cannot be written in closed form. 
\end{remark}

\begin{figure}[t]
     \centering
     \begin{subfigure}[b]{0.32\textwidth}
         \centering
         \includegraphics[width=\textwidth]{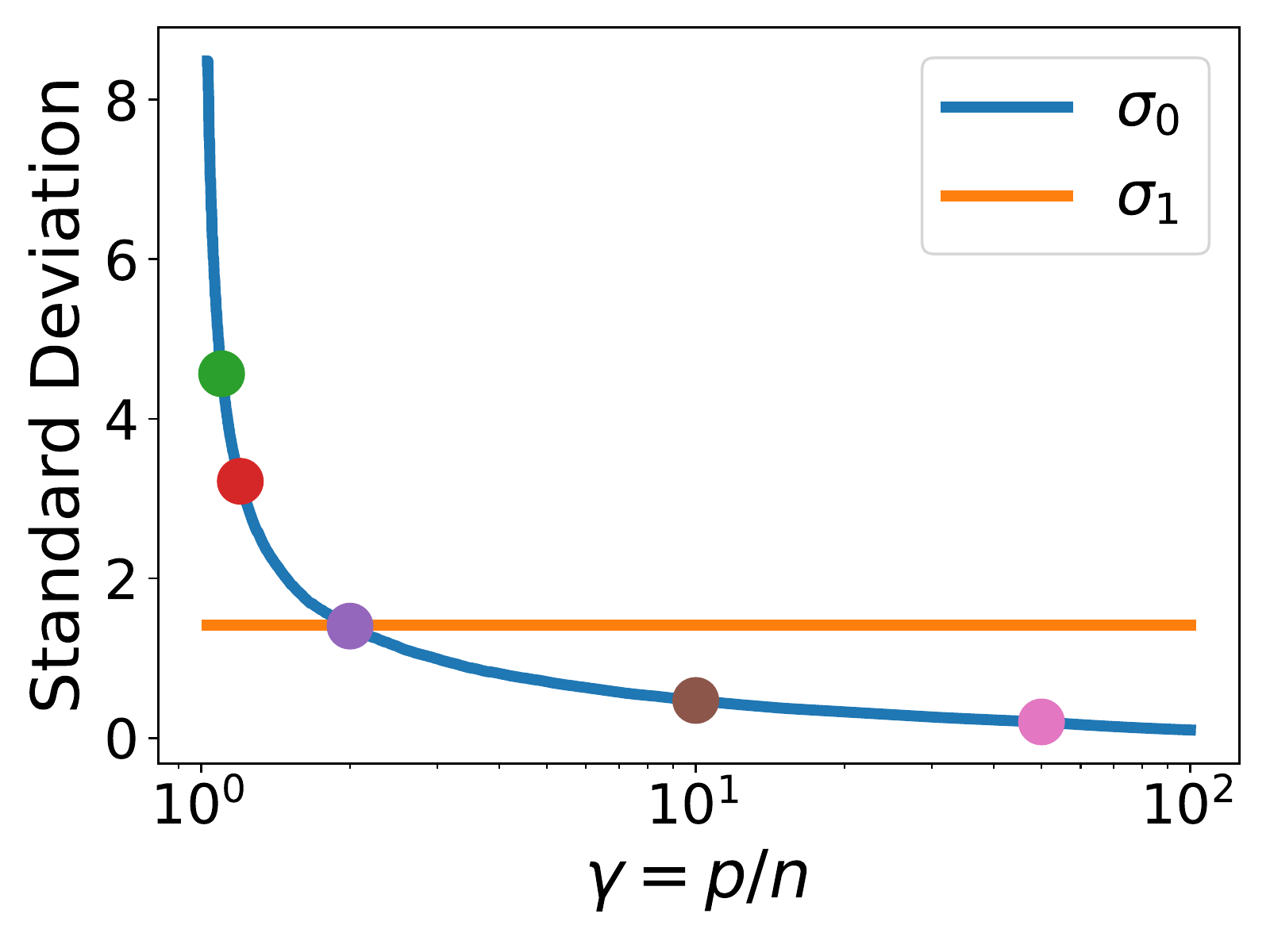}
         \vspace{-1.7em}
         \caption{Standard Deviations \label{fig:deviations}}
         \vspace{0.7em}
     \end{subfigure}
     \begin{subfigure}[b]{0.32\textwidth}
         \centering
         \includegraphics[width=\textwidth]{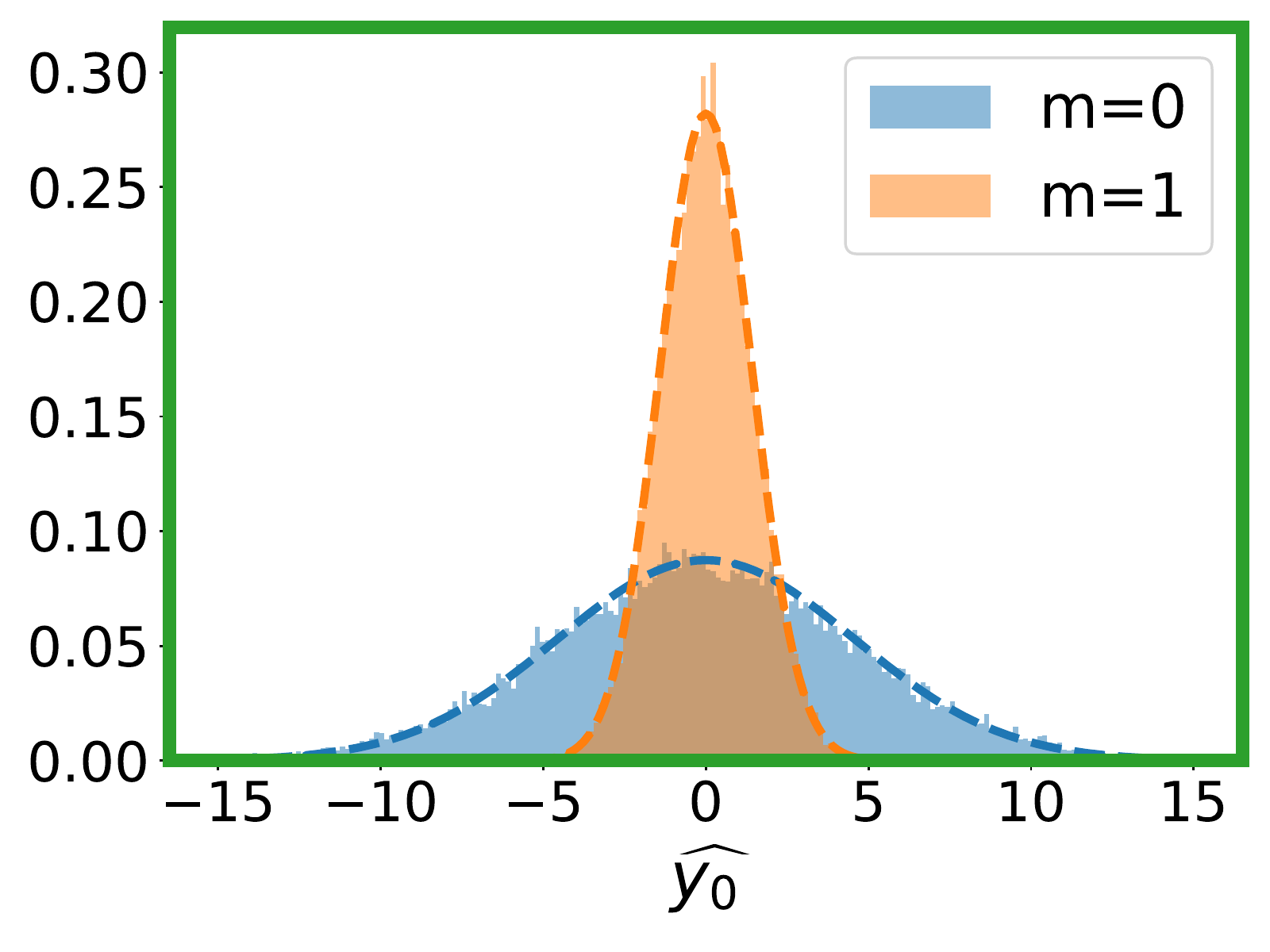}
         \vspace{-1.7em}
         \caption{$\gamma = 1.1$}
         \vspace{0.7em}
     \end{subfigure}
     \begin{subfigure}[b]{0.32\textwidth}
         \centering
         \includegraphics[width=\textwidth]{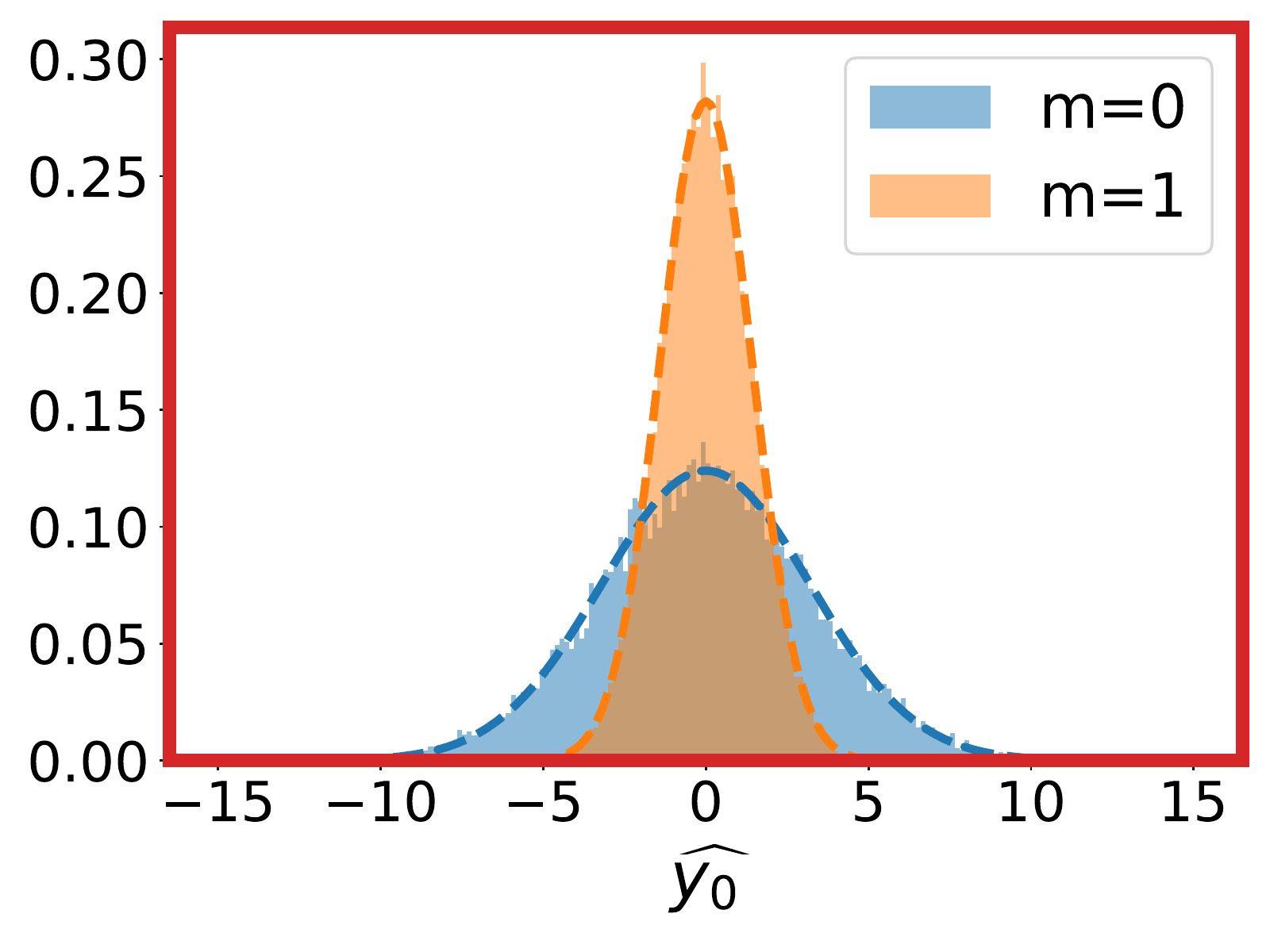}
         \vspace{-1.7em}
         \caption{$\gamma = 1.2$}
         \vspace{0.7em}
     \end{subfigure}
     \begin{subfigure}[b]{0.32\textwidth}
         \centering
         \includegraphics[width=\textwidth]{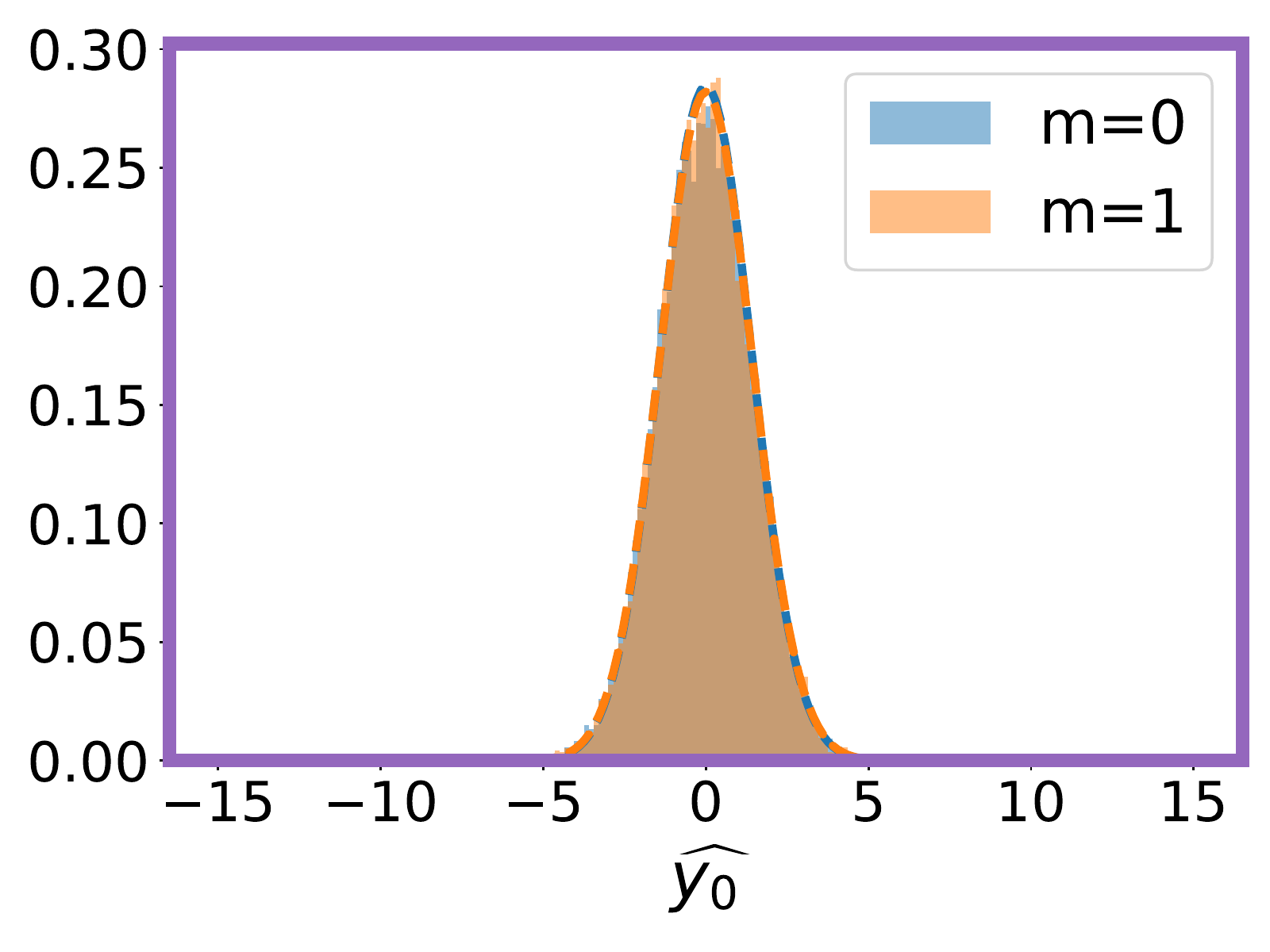}
         \vspace{-1.7em}
         \caption{$\gamma = 2$}
     \end{subfigure}
     \begin{subfigure}[b]{0.32\textwidth}
         \centering
         \includegraphics[width=\textwidth]{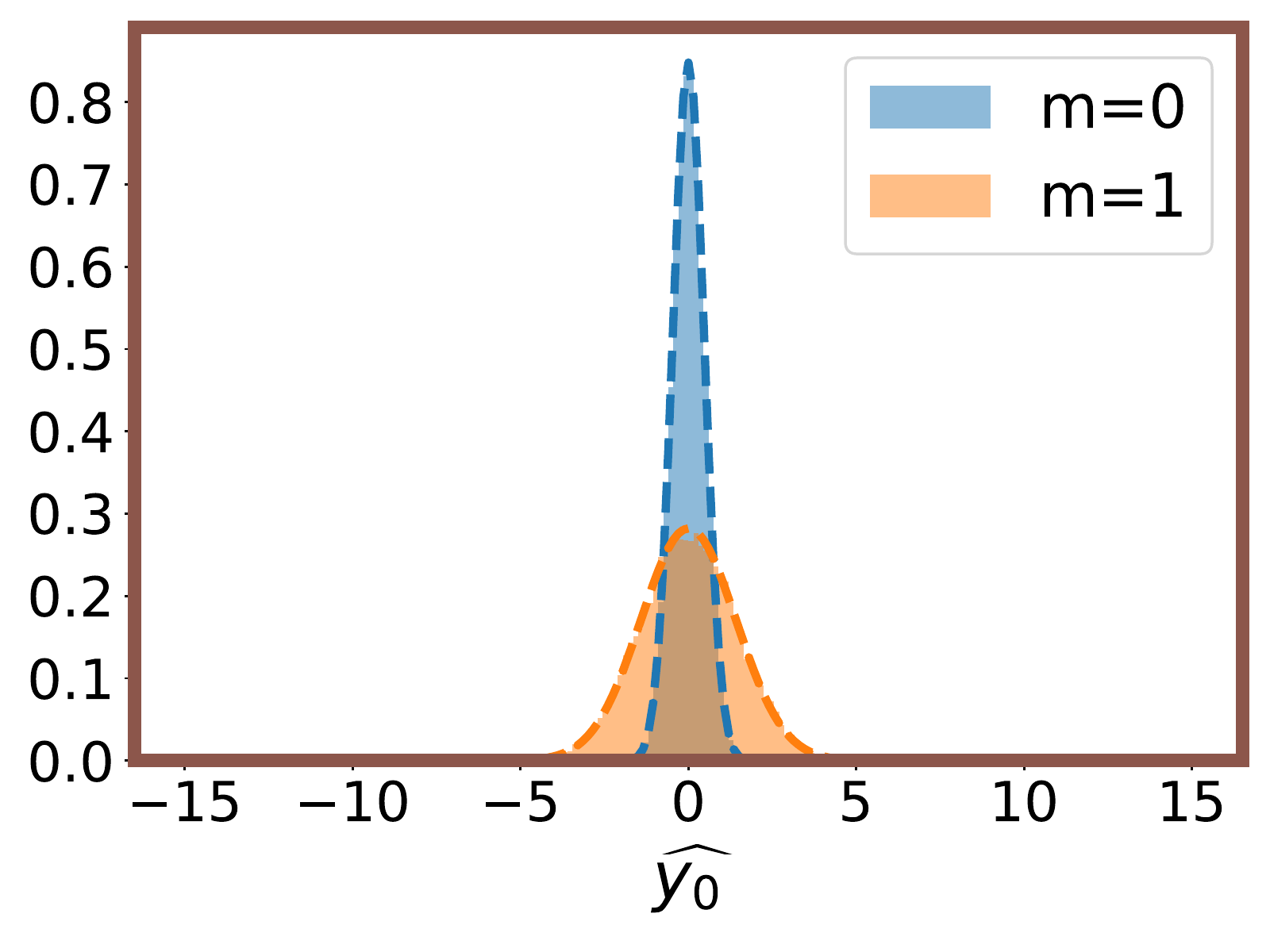}
         \vspace{-1.7em}
         \caption{$\gamma = 10$}
     \end{subfigure}
     \begin{subfigure}[b]{0.32\textwidth}
         \centering
         \includegraphics[width=\textwidth]{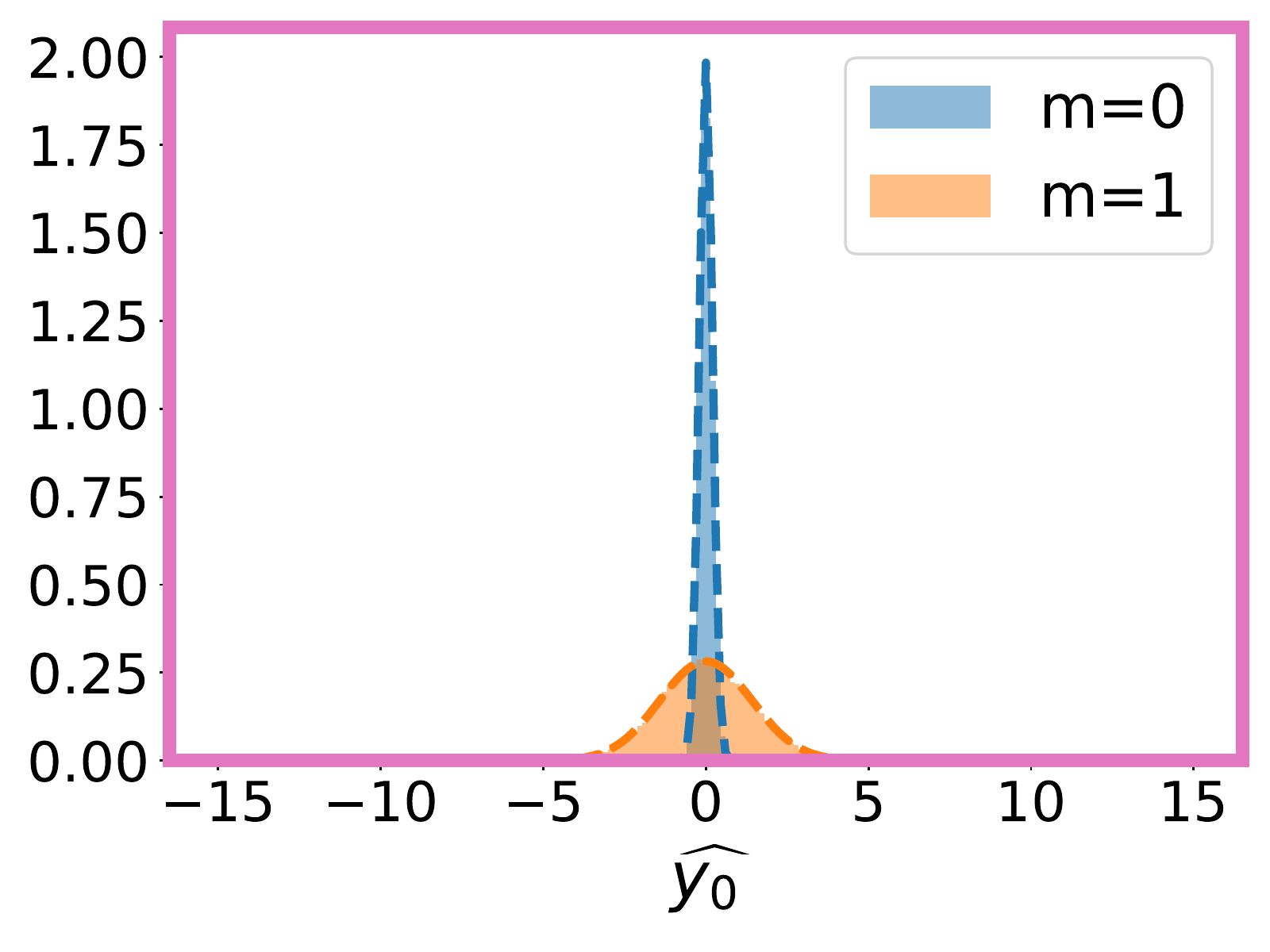}
         \vspace{-1.7em}
         \caption{$\gamma = 50$}
     \end{subfigure}
        \caption{
        In the highly overparameterized regime, increasing the number of parameters $p$ yields more distinguishable posterior distributions.
        (a) Standard deviations of model outputs $\widehat{y_0}$ for minimum-norm least squares as a function of parameterization for $\gamma > 1$. (b-f) The Gaussian distributions in Eq.\ \eqref{eq:m0_dist} (broken lines), as well as empirical histograms of 20,000 $\widehat{y_0}$ samples for different $\gamma$ values with $n = 400$, $D = 20,000$, $\sigma = 1$. The prediction variance for $m=1$ stays constant, while the variance for $m=0$ decreases with increased parameterization, making the distributions easier to distinguish.
        }
        \label{fig:variances}
\end{figure}

In Theorem~\ref{theorem:asymptotic_distributions}, Eq.~\eqref{eq:m0_dist} shows that the posterior distributions of the outputs for test points (when $m=0$) and training points (when $m = 1$) are both $0$-mean Gaussians but with variances $\sigma_0^2$ and $\sigma_1^2$ respectively given in Eq.~\eqref{eq:m1_var}. 
Recall from Prop. \ref{proposition:optimal_adversary} that the optimal MI adversary is a likelihood ratio test (LRT) between the distribution of the model's output for new test points (when $m=0$) and that for reused training points (when $ m=1$). 
This is reflected in Eq.~\eqref{eq:theorem_lrt} where we note that $\alpha$ is the standard sufficient statistic for an LRT between two $0$-mean Gaussian distributions with deviations $\sigma_0$ and $\sigma_1$. 
Finally, we compute the membership advantage of this adversary in Eq.~\eqref{eq:theorem_advantage} by taking an expectation over the random draw of $\bx_0$, as defined in Experiment~\ref{exp:mi}, noting that $\sigma_0$, $\sigma_1$, and $\alpha$ are all functions of $\bx_0$. 
Membership advantage is defined to be the difference between the true and false positive probabilities in Defn.~\ref{definition:advantage}. 
This difference is given by $\Phi(\sfrac{\alpha}{\sigma_0}) - \Phi(\sfrac{\alpha}{\sigma_1})$ for each side of the Gaussian distributions, leading to the expression in Eq.~\eqref{eq:theorem_advantage}. 

To understand the implication of the result for MI, first observe that when $m=1$ and $\bx_0$ is a training point that is memorized by the model, the variance of the model's output is equal to the variance of the measurement $y_0$ itself independent of $p$. 
On the contrary, when $m=0$ and $\bx_0$ is a test point, the variance of the model's output is \textit{decreasing} with $p$ (Figure \ref{fig:deviations}). Hence, though the output distribution means stay the same, as the variance for the $m=0$ case decreases far below that of the $m=1$ case, an LRT can easily distinguish these two distributions. 
In the extreme case, suppose $p, D \gg n$, then $\sigma_0^2 \rightarrow 0$, while $\sigma_1^2$ remains a nonzero value. 
Hence $\textrm{Adv}(\adversary^*) \rightarrow 1$. 

We confirm our derivations of the output distributions numerically in Figure~\ref{fig:variances} where we plot empirical histograms of the predicted distributions from Eq.~\eqref{eq:m0_dist} by repeatedly computing the minimum norm least squares solution over 20,000 independent trials\footnote{We detail additional experimental details and the computational hardware in Appendix~\ref{sec:appendix_experiments}.}. We also plot the standard deviations from Eq.~\eqref{eq:m1_var}
in Fig.~\ref{fig:deviations}. 
Visually, from the distributions in Figure~\ref{fig:variances}, given $f(\bx_0)$, MI reduces to identifying whether it is more likely that a sample came from the blue distribution or from the orange with the adversary simply predicting the more likely outcome. 
These distributions intersect at $\pm \alpha$, so it is sufficient to compare $\widehat{y_0}^2$ to $\alpha^2$ to perform the LRT.

In Figure~\ref{fig:asymptotic_advantage}, we plot the membership advantage for $\gamma > 1$ using Eq.\ \eqref{eq:theorem_advantage}. 
Since the difference between the variances of the model's output when $m=0$ and $m=1$ increases with $\gamma$ for $\gamma > 2$ (cf., Fig.~\ref{fig:deviations}), it becomes easier to distinguish the two distributions. 
This results in an increase in the membership advantage. 
The initial decrease in membership advantage when $\gamma \leq 2$ is a consequence of $\sigma_1 \leq \sigma_0$ in this regime as shown in Fig.~\ref{fig:deviations}. 
Since $\sigma_0$ is decreasing in $p$, initially, this decrease makes the train and test output distributions harder to distinguish, leading to lower membership advantage and $0$ advantage for $\gamma =2$. 
However, for $\gamma > 2$, $\sigma_0$ decreases past $\sigma_1$ and membership advantage approaches 1 as $\gamma \rightarrow \infty$. 
In practice, the $\gamma < 2$ regime (only slightly overparameterized) is less interesting since models suffer larger generalization error in this setting (cf., Theorem 1 of \cite{belkin2020two}) and are rarely (if ever) used in practice. 
A key takeaway is that for linear regression in the Gaussian data setting, extreme overparameterization increases the vulnerability of a machine learning model to MI.

While Theorem~\ref{theorem:asymptotic_distributions} operates in the asymptotic regime, we empirically approximate membership advantage for $n=100$ and $D=3,000$ by approximating the posterior distributions with histograms of samples. 
As we see in Figure~\ref{fig:experiment1}, the asymptotic formula agrees very closely with the non-asymptotic experiment. Experimental details are provided in Appendix \ref{sec:appendix_experiments}.

\begin{figure}
    \centering
    \begin{subfigure}[b]{0.47\textwidth}
         \centering
         \includegraphics[width=0.9\textwidth]{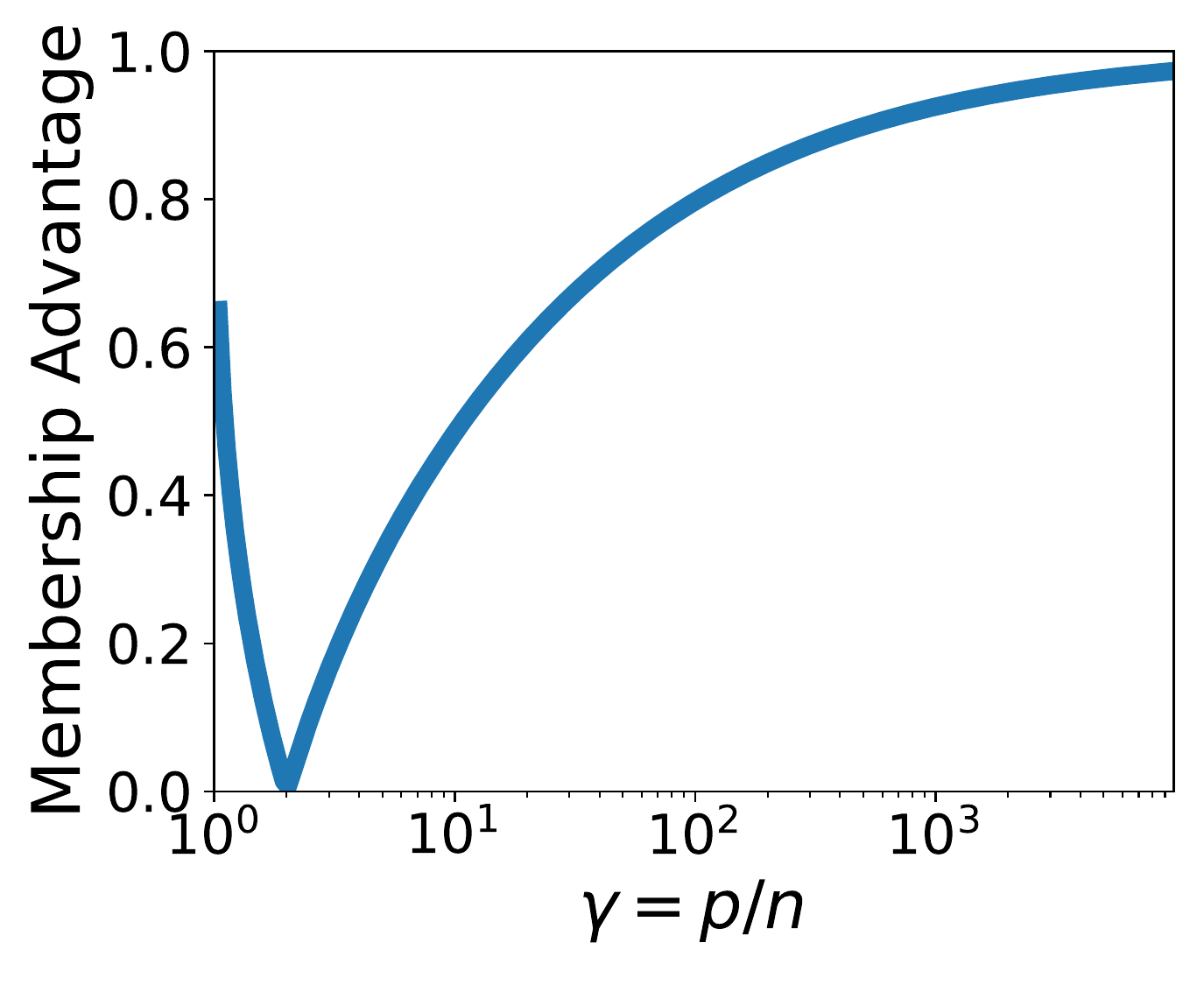}
         \caption{Asymptotic membership advantage \label{fig:asymptotic_advantage}}
     \end{subfigure}
     \begin{subfigure}[b]{0.47\textwidth}
         \centering
         \includegraphics[width=0.9\textwidth]{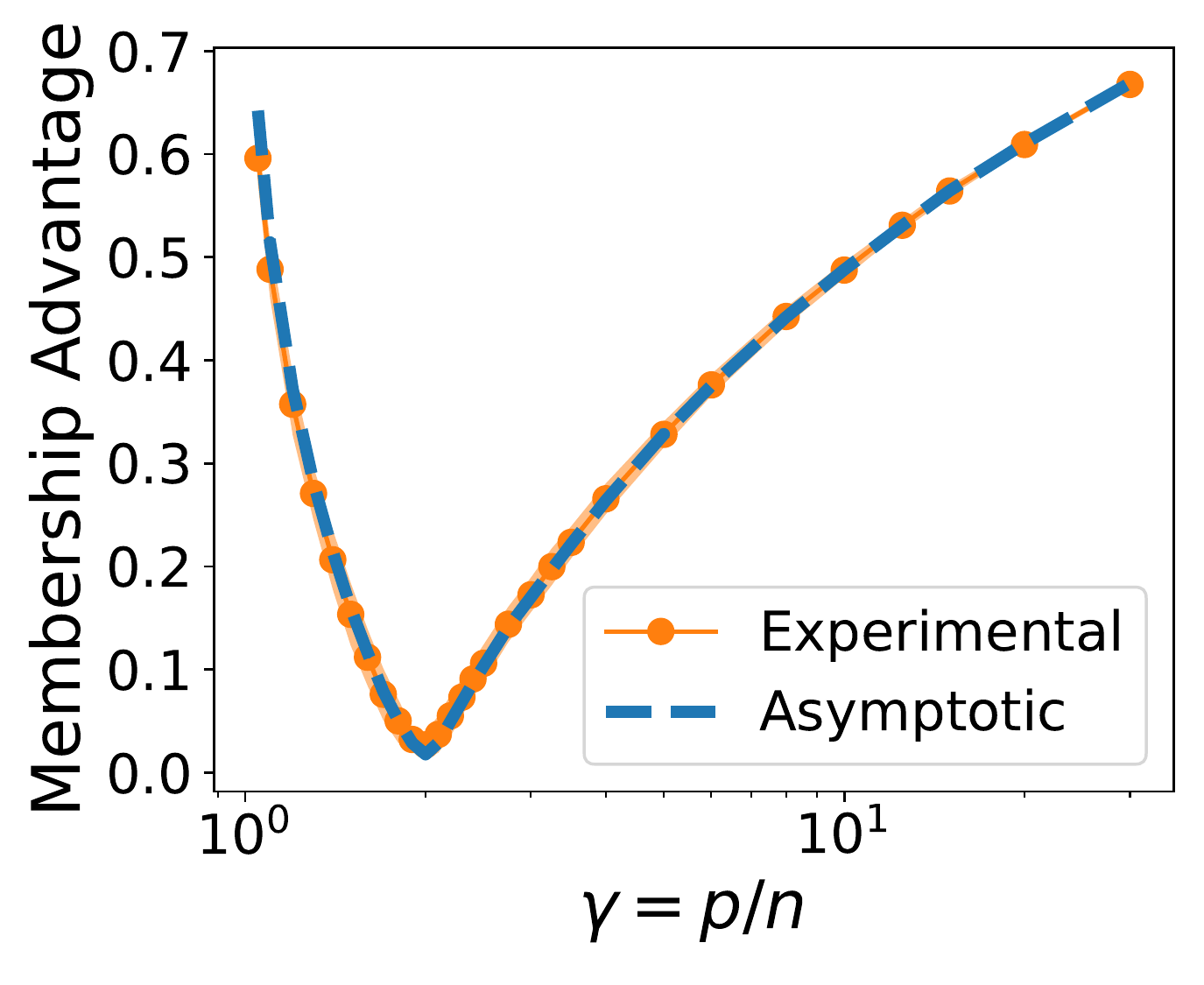}
         \caption{Empirical verification for non-asymptotic case}
         \label{fig:experiment1}
     \end{subfigure}
    \caption{Increasing the number of parameters increases membership advantage. (a) Theoretical membership advantage for linear regression with Gaussian data (Eq.\ \ref{eq:theorem_advantage}) as a function of the number of parameters for 100 sampled $\bx_0$'s, $n=10^3$, $D=10^7$. (b) We empirically approximate the membership advantage averaged over 20 sampled $\bx_0$'s for $n=100$, $D=3,000$, and $\sigma=1$ by estimating the two posterior distributions for each $\bx_0$ using empirical histograms with 100,000 samples. We plot it alongside the theoretical asymptotic membership advantage, showing a close agreement between the two.}
\end{figure}

\section{Mitigating Membership Inference Attacks}\label{sec:mitigation_strategies}

Next, we extend our analysis from the previous section towards two methods commonly used for preserving privacy: regularization and noise addition. We present two key results. First, we show that for the overparameterized Gaussian data setting, ridge regularization actually \textit{increases} membership advantage and is thus detrimental to privacy. Second, we show that the privacy-utility trade-off induced by reducing the number of parameters of a linear regression model with Gaussian data is equivalent to that of adding independent Gaussian noise to the output of a model that uses all available features.

\subsection{Ridge-Regularized Linear Regression}\label{subsec:regularized}

We analyze membership inference in \emph{ridge-regularized} linear regression in the same setting as Section~\ref{sec:linear_regression} 
except that, now, the estimate is $\hat{\beta}_\lambda = (\bX_p^\top \bX_p + n\lambda \bI_p )^\dagger \bX_p^\top y$, where $\lambda$ is a regularization parameter. Larger values of $\lambda$ yield greater regularization, and $\lambda \rightarrow 0$ reduces to the case of Section \ref{sec:linear_regression}. 
Regularization is a common method to reduce overfitting and has thus been proposed in previous works as a defense mechanism to protect models from MI attacks \cite{shokri2017membership, kaya2020effectiveness}.

A surprising observation resulting from our analysis is that, in the highly overparameterized regime ($\gamma \gg 1$), ridge regularization actually \textit{increases} the model's vulnerability to MI attacks for linear regression with Gaussian data. 
In Theorem~\ref{theorem:asymptotic_distributions_regularized}, we derive an analogous result for the distributions of the ridge-regularized predictions as Theorem~\ref{theorem:asymptotic_distributions} demonstrated for the unregularized case. 

\begin{theorem}\label{theorem:asymptotic_distributions_regularized}
\textbf{Membership advantage for ridge-regularized linear regression.} Consider the same setup as in Theorem \ref{theorem:asymptotic_distributions}, but now let $\hat{\beta}_\lambda = \left(\bX_P^\top \bX_P + n\lambda \bI\right)^{-1} \bX_P^\top$ for some $\lambda > 0$. Then, as $n, p, D \rightarrow \infty$ such that $\frac{p}{n} \rightarrow \gamma \in (1, \infty)$, we have:

\begin{align*}
    \left(\hat{y_0} \mid m=0, \bx_0\right) \sim \cN(0, \sigma_{0,\lambda}^2) \ \ \text{ and } \ \ \left(\hat{y_0} \mid m=1, \bx_0\right) \sim \cN(0, \sigma_{1,\lambda}^2), 
\end{align*}
where
\begin{align}
    \sigma_{0, \lambda}^2 &:= \frac{g^\prime(-\lambda)\gamma}{(1+g(-\lambda)\gamma)^2}\left(\sigma^2 + 1 - \frac{p}{D}\right)\frac{\|\bx_{0,p}\|_2^2}{p } +  (1 - 2 \lambda g(-\lambda) +  \lambda^2 g^\prime(-\lambda))\frac{\|\bx_{0,p}\|_2^2}{D} \nonumber\\
    \sigma_{1, \lambda}^2 &:= \left[\left(\frac{\lambda^2}{(\lambda + \gamma g(-\lambda))(\lambda + \gamma\frac{\|\bx_{0,p}\|_2^2}{p} g(-\lambda))}\right)^2\gamma g^\prime(-\lambda)\frac{\|\bx_{0,p}\|_2^2}{p}\right]\left(\sigma^2 + 1- \frac{p}{D}\right) \nonumber\\&\quad+ \left(\frac{\gamma g(-\lambda)\frac{\left\|\bx_{0,p}\right\|_2^2}{p}}{1+\gamma g(-\lambda)\frac{\left\|\bx_{0,p}\right\|_2^2}{p}}\right)^2 \left(\sigma^2+\frac{\|\bx_{0,\bar{p}}\|_2^2}{D}\right)\nonumber\\
    &\quad+  \left(1 -  \frac{2\lambda g(-\lambda)}{1+\frac{\gamma\|\bx_{0,p}\|_2^2}{p} g(-\lambda)} +  \frac{\lambda^2 g^\prime(-\lambda)}{\left(1+\frac{\gamma\|\bx_{0,p}\|_2^2}{p} g(-\lambda)\right)^2}\right)\frac{\|\bx_{0,p}\|_2^2}{D}, \nonumber\\
    g(-\lambda) &:= \frac{-(1 - \gamma + \lambda) + \sqrt{(1-\gamma + \lambda)^2 + 4\gamma \lambda}}{2\gamma \lambda}.\nonumber
\end{align}
Furthermore, in the case when $\sigma_{1, \lambda} > \sigma_{0, \lambda}$ and defining:
\begin{align}
    \alpha_\lambda = \sqrt{\frac{\sigma_{0, \lambda}^2\sigma_{1, \lambda}^2\log\left(\frac{\sigma_{1, \lambda}^2}{\sigma_{0, \lambda}^2}\right)}{\sigma_{1, \lambda}^2 - \sigma_{0, \lambda}^2}},\nonumber
\end{align}
the optimal membership inference advantage is then:
\begin{align*}
    \textrm{Adv}(\adversary^*_\lambda) = \E_{\bx_0} \left[ 2\left\{ \Phi\left(\frac{\alpha_\lambda}{\sigma_{0, \lambda}}\right) - \Phi\left(\frac{\alpha_\lambda}{\sigma_{1, \lambda}}\right)    \right\} \right]. 
\end{align*}
\end{theorem}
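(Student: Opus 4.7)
The plan is to mirror the structure of the proof of Theorem~\ref{theorem:asymptotic_distributions} but with the regularized resolvent $R := (\bX_p^\top \bX_p + n\lambda \bI)^{-1}$ in place of $\bX_p^\dagger$, and to evaluate the resulting trace functionals asymptotically via the Marchenko--Pastur Stieltjes transform. The function $g(-\lambda)$ in the theorem is exactly the positive root of $\gamma\lambda g^2 + (1-\gamma+\lambda)g - 1 = 0$, i.e.\ the companion Stieltjes transform of the Marchenko--Pastur law at $-\lambda$, and it is standard that under the stated conditions $\tfrac{1}{p}\mathrm{tr}\bigl((\tfrac{1}{n}\bX_p^\top \bX_p + \lambda \bI)^{-1}\bigr)\to g(-\lambda)$ and $\tfrac{1}{p}\mathrm{tr}\bigl((\tfrac{1}{n}\bX_p^\top \bX_p + \lambda \bI)^{-2}\bigr)\to g^\prime(-\lambda)$ almost surely. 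These are the two workhorse facts behind every limit that follows.

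For the $m=0$ case, I condition on $\bX$ and decompose $\by = \bX_p\beta_p + \bX_{\bar p}\beta_{\bar p} + \epsilon$, where $\beta_p$ and $\beta_{\bar p}$ are the first $p$ and last $D-p$ coordinates of $\beta$. Since $\bx_{0,p}$ is independent of $\bX$, $\widehat{y_0} = \bx_{0,p}^\top R \bX_p^\top \by$ is conditionally zero-mean Gaussian with variance
\begin{align*}
\bx_{0,p}^\top R \bX_p^\top \Bigl(\tfrac{1}{D}\bX_p\bX_p^\top + \tfrac{1}{D}\bX_{\bar p}\bX_{\bar p}^\top + \sigma^2 \bI\Bigr) \bX_p R\,\bx_{0,p}.
\end{align*}
Using the identity $\bX_p^\top \bX_p R = \bI - n\lambda R$ and the trace approximation $\bx_{0,p}^\top M \bx_{0,p} \approx \tfrac{\|\bx_{0,p}\|_2^2}{p}\mathrm{tr}(M)$ (valid because $\bx_{0,p}$ is independent of $M$) collapses each summand onto $g(-\lambda)$ or $g^\prime(-\lambda)$: the $\bX_{\bar p}$ piece contributes the $\gamma g^\prime(-\lambda)/(1+\gamma g(-\lambda))^2$ factor multiplied by $(\sigma^2 + 1 - p/D)\|\bx_{0,p}\|_2^2/p$, and the noise-plus-$\beta_p$ piece yields $(1 - 2\lambda g(-\lambda) + \lambda^2 g^\prime(-\lambda))\|\bx_{0,p}\|_2^2/D$, matching $\sigma_{0,\lambda}^2$.

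For the $m=1$ case, $\bx_0$ sits at a uniformly chosen training row $k$, so I apply Sherman--Morrison to peel off the rank-one perturbation: with $A := \bX_{p,-k}^\top \bX_{p,-k} + n\lambda \bI$,
\begin{align*}
R\,\bx_{0,p} = \frac{A^{-1}\bx_{0,p}}{1 + \bx_{0,p}^\top A^{-1}\bx_{0,p}}.
\end{align*}
The denominator converges to $1 + \gamma g(-\lambda)\|\bx_{0,p}\|_2^2/p$, producing precisely the shrinkage prefactors appearing throughout $\sigma_{1,\lambda}^2$. Decomposing $\bX_p^\top \by = \bx_{0,p} y_0 + \bX_{p,-k}^\top \by_{-k}$ splits $\widehat{y_0}$ into two pieces: the $y_0$-aligned term has variance $\sigma^2 + \|\bx_{0,\bar{p}}\|_2^2/D$, yielding the middle line of $\sigma_{1,\lambda}^2$, while the leave-one-out piece reproduces the $m=0$ calculation on $\bX_{p,-k}$ weighted by a further $\lambda/(\lambda + \gamma g(-\lambda)\|\bx_{0,p}\|_2^2/p)$ from Sherman--Morrison, giving the first and third lines. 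Because $(n-1)/n \to 1$, the leave-one-out matrix has the same limiting Marchenko--Pastur spectrum, so all $g(-\lambda)$ and $g^\prime(-\lambda)$ substitutions carry over unchanged.

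Finally, since both conditional laws are zero-mean Gaussians, the likelihood-ratio derivation from Theorem~\ref{theorem:asymptotic_distributions} applies verbatim with $(\sigma_0,\sigma_1,\alpha)$ replaced by $(\sigma_{0,\lambda},\sigma_{1,\lambda},\alpha_\lambda)$, producing the stated $\adversary^*_\lambda$ and $\mathrm{Adv}(\adversary^*_\lambda)$. The main obstacle is the bookkeeping in the $m=1$ case: after Sherman--Morrison one has cross terms mixing the $y_0$ contribution, the leave-one-out noise, and the unmodeled-coordinates signal $\bX_{\bar p}\beta_{\bar p}$, and collapsing all the emerging traces onto just $g(-\lambda)$, $g^\prime(-\lambda)$, $\|\bx_{0,p}\|_2^2$ and $\|\bx_{0,\bar{p}}\|_2^2$ requires careful tracking of which combinations of the resolvent and its derivative each term contributes.
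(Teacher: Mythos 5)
Your overall strategy coincides with the paper's: decompose $\by$ into the modeled signal $\bX_p\beta_p$ plus an effective noise $\eta$ of per-coordinate variance $1+\sigma^2-\tfrac{p}{D}$, evaluate the resulting quadratic forms through the Marchenko--Pastur trace limits $\tfrac1n\bx^\top(\widehat\bSigma+\lambda\Id_p)^{-1}\bx\to\tfrac{\|\bx\|_2^2}{n}g(-\lambda)$ and $\tfrac1n\bx^\top(\widehat\bSigma+\lambda\Id_p)^{-2}\bx\to\tfrac{\|\bx\|_2^2}{n}g^\prime(-\lambda)$, handle $m=1$ by a Sherman--Morrison leave-one-out, and transplant the likelihood-ratio and advantage computation verbatim from Theorem~\ref{theorem:asymptotic_distributions}. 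Your $m=0$ derivation is sound: your direct trace route via $(\widehat\bSigma+\lambda\Id_p)^{-1}\widehat\bSigma(\widehat\bSigma+\lambda\Id_p)^{-1}=(\widehat\bSigma+\lambda\Id_p)^{-1}-\lambda(\widehat\bSigma+\lambda\Id_p)^{-2}$ is reconciled with the paper's leave-one-out route by the identity $g(-\lambda)-\lambda g^\prime(-\lambda)=g^\prime(-\lambda)/(1+\gamma g(-\lambda))^2$.

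The gap is in the $m=1$ noise cross-term. After a single Sherman--Morrison peel you are left with $\sum_{i\neq k}\bx_{0,p}^\top\left(\bX_p^\top\bX_p+n\lambda\Id_p\right)^{-1}\bx_{i,p}\,\eta_i$, and you propose to treat it as ``the $m=0$ calculation on the leave-one-out matrix weighted by a further $\lambda/(\lambda+\gamma g(-\lambda)\|\bx_{0,p}\|_2^2/p)$.'' That yields the coefficient $\tfrac{\gamma g^\prime(-\lambda)}{(1+\gamma g(-\lambda))^2}\bigl(\tfrac{\lambda}{\lambda+\gamma g(-\lambda)\|\bx_{0,p}\|_2^2/p}\bigr)^2$, whereas the first line of $\sigma_{1,\lambda}^2$ carries $\bigl(\tfrac{\lambda^2}{(\lambda+\gamma g(-\lambda))(\lambda+\gamma g(-\lambda)\|\bx_{0,p}\|_2^2/p)}\bigr)^2\gamma g^\prime(-\lambda)$; these differ in general because $\lambda^2/(\lambda+\gamma g(-\lambda))^2\neq 1/(1+\gamma g(-\lambda))^2$. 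The reason your shortcut fails is that the resolvent in $\bx_{0,p}^\top(\bX_p^\top\bX_p+n\lambda\Id_p)^{-1}\bx_{i,p}$ still contains \emph{both} $\bx_{0,p}$ and $\bx_{i,p}$, so one rank-one update does not decouple it: the paper performs a rank-two (Woodbury) update removing both rows, and each removal contributes its own shrinkage factor --- $\lambda/(\lambda+\gamma g(-\lambda))$ from $\bx_{i,p}$ (whose normalized norm tends to one) and $\lambda/(\lambda+\gamma g(-\lambda)\|\bx_{0,p}\|_2^2/p)$ from the fixed test point. Without this leave-two-out step the first line of $\sigma_{1,\lambda}^2$ comes out wrong; it is the substantive computation of the proof, not mere bookkeeping. (A minor related point: the ``$y_0$-aligned term'' must contribute only its $\bx_{0,\bar p}^\top\beta_{\bar p}+\epsilon_0$ component to the middle line of $\sigma_{1,\lambda}^2$; its $\bx_{0,p}^\top\beta_p$ component has to be folded into the $\beta_p$ term that produces the third line, as the paper's signal/noise split does automatically.)
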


\begin{remark}
The above result holds using the asymptotic distributions as $n, p, D \rightarrow \infty$. In Lemma~\ref{lem:lem:nonasymp_pdf_regularized}, we derive the non-asymptotic distributions for the predictions of the ridge-regularized least squares estimator, though they cannot be written in closed form. 
\end{remark}

The membership advantage $\textrm{Adv}(\adversary^*_\lambda)$ achieved by the optimal adversary $\adversary^*_\lambda$ of the $\lambda$-regularized model is an increasing function in $\lambda$ when $\gamma \gg 1$. To visualize this, we plot the theoretical membership advantages for 100 sampled $\bx_0$'s (each with iid standard normal elements) for differing regularization strengths in Figure~\ref{fig:ridge_lmbd_sweep} with the same setting as in Figure~\ref{fig:asymptotic_advantage}. In Figure~\ref{fig:ridge_vs_gamma}, we plot membership advantage as a function of the  overparameterization ratio $\gamma$ for a few $n\lambda$ values. Conversely, In Figure~\ref{fig:ridge_vs_lmbd} we plot membership advantage versus regularization $n\lambda$ for a few different values of $\gamma$. 
In particular, note how in both subplots, increasing $\lambda$ never decreases the membership advantage. This observation has been made empirically (but not analytically) for techniques that have similar regularizing effects, such as dropout \cite{galinkin2021influence}, ensembling \cite{rezaei2021accuracy}, and weight decay \cite{kaya2020effectiveness}. 

Intuitively, the reason ridge regularization increases MI vulnerability is because while it decreases the variance of the model's output on training points, it also significantly decreases the variance for test points such that the two output distributions become more distinguishable. In Figure~\ref{fig:ridge_variances} in the appendix, we plot the gap between the variances for the $m=0$ and $m=1$ cases for different regularization amounts $\lambda$ to visualize this effect. 

\begin{figure}
    \centering
         \begin{subfigure}[b]{0.45\textwidth}
         \centering
         \includegraphics[width=0.8\textwidth]{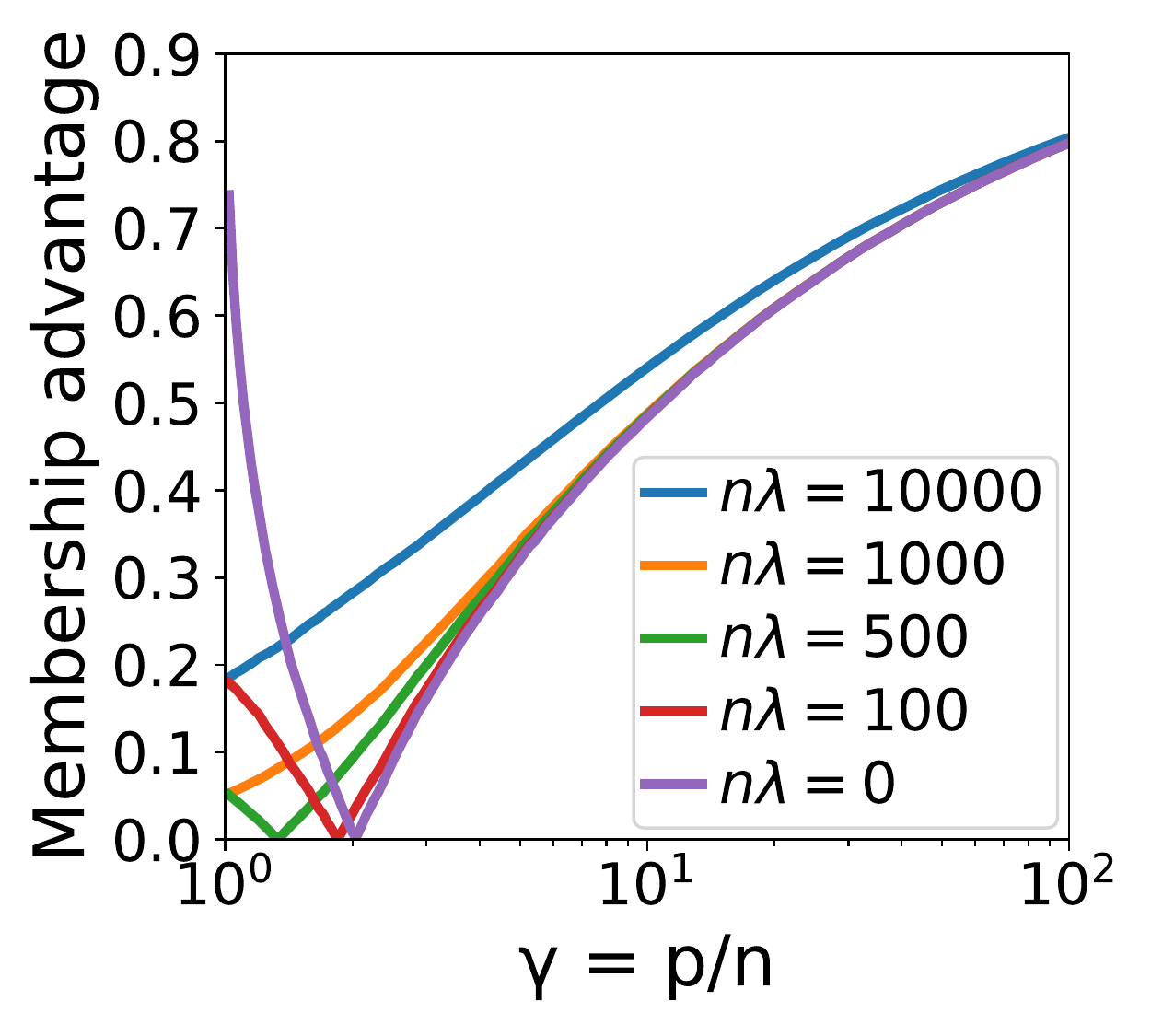}
        \caption{Ridge (vs.\ parameters)}
         \label{fig:ridge_vs_gamma}
     \end{subfigure}
     \begin{subfigure}[b]{0.45\textwidth}
         \centering
         \includegraphics[width=0.8\textwidth]{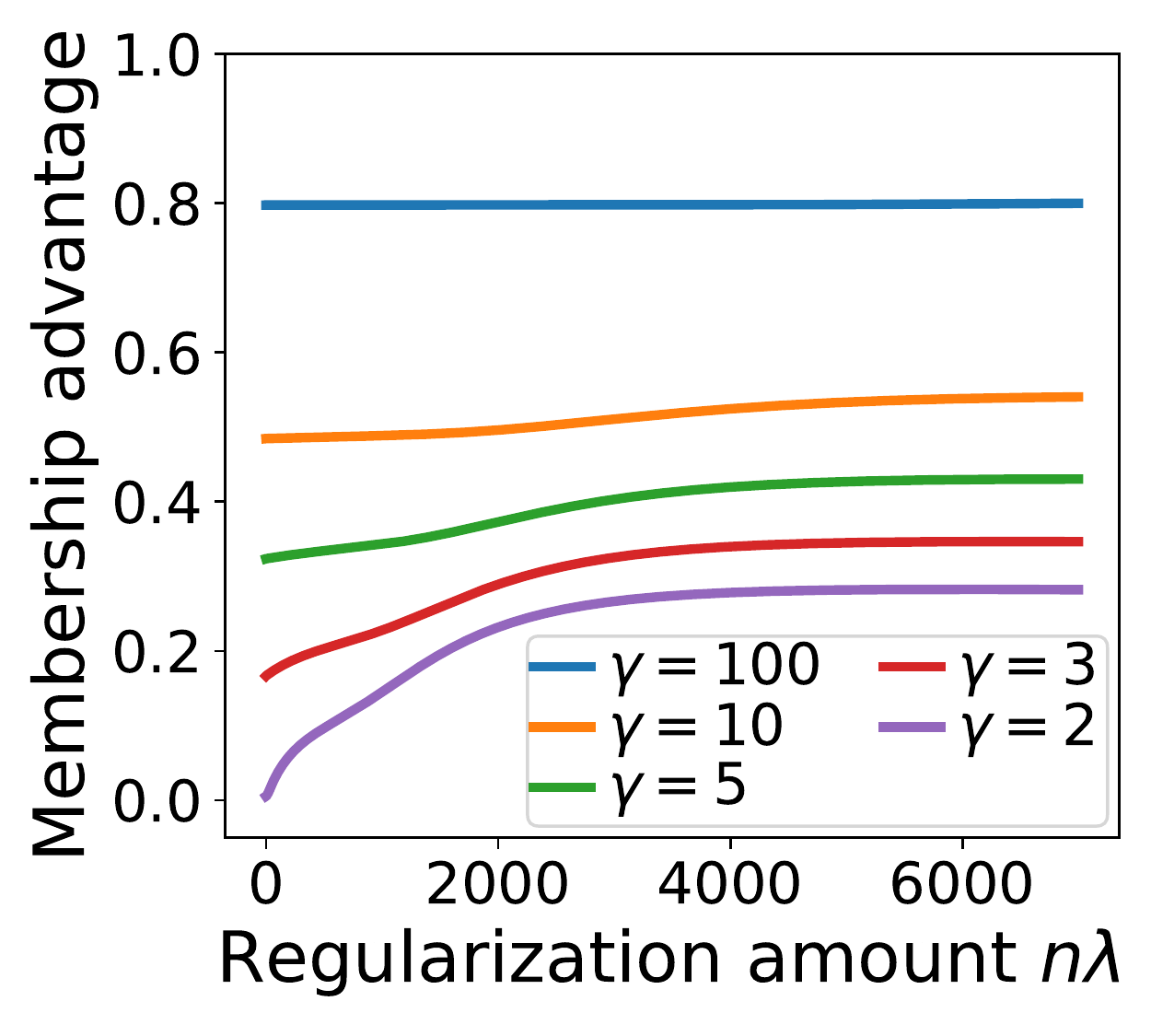}
         \caption{Ridge (vs.\ regularization)}
         \label{fig:ridge_vs_lmbd}
     \end{subfigure}
    \caption{For highly overparameterized models, ridge regularization does not decrease membership advantage (MA) in the Gaussian data setting. (a) Theoretical MA vs. overparamterization $\gamma$ for a few ridge regularization strengths $\lambda$. Stronger ridge regularization harms privacy (increased MA). (b) Theoretical MA vs. $\lambda$ for a few $\gamma$'s, explicitly showing how increasing $\lambda$ increases MA. For this analysis, we set $n=10^3$, $D=10^7$, and $\sigma=1$.}
    \label{fig:ridge_lmbd_sweep}
\end{figure}

\begin{figure}[t]
    \centering
    \includegraphics[width=0.45\textwidth]{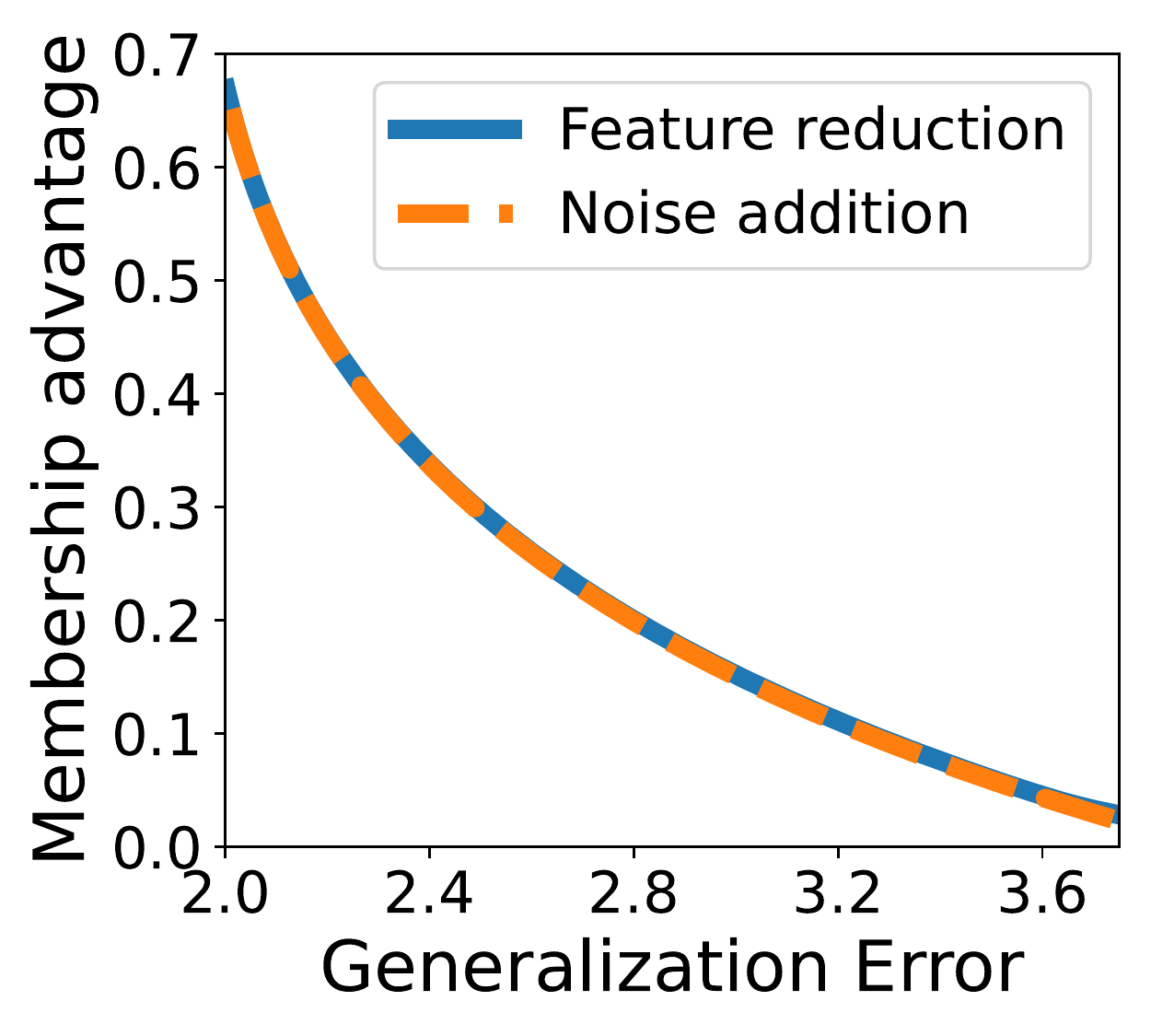}
    \caption{We plot the privacy-utility trade-off obtained when tuning the number of parameters (feature reduction; blue) and when adding independent noise to the output of a model that uses all available parameters (noise addition; orange) and demonstrate that the two trade-offs are essentially equivalent. We use $n=100$, $D=3000$, and $\sigma=1$ for this analysis.
    \label{fig:mse_vs_advantage}}
\end{figure}

\subsection{Noise Addition vs.\ Feature Reduction}\label{subsec:successful_mitigation}

A consequence of Thm.~\ref{theorem:asymptotic_distributions} is that, in the Gaussian data setting, one can reduce vulnerability to MI attacks by simply \textit{decreasing} the number of parameters/features. However, this comes at the cost of decreased generalization performance (utility) due to the ``double descent'' effect \cite{belkin2019reconciling, belkin2020two,dar2021farewell}, wherein generalization error decreases with increased overparameterization. An alternative and popular method to increase a model's privacy is adding independent noise to the model output \cite{dwork2008differential, sarathy2011evaluating}, but this also decreases generalization performance.
Interestingly, we show in this subsection that for Gaussian data, the privacy-utility trade-offs induced by both feature reduction and noise addition are actually equivalent.

\paragraph{Feature reduction:} To characterize the privacy-utility trade-off of feature reduction, we first need the generalization error for a given number of parameters, provided in Corollary 2.2 of \cite{belkin2020two}:
\begin{proposition}\label{prop:mse_for_asymp}
(Adapted from Corollary 2.2 of \cite{belkin2020two}). In the same setting as Theorem~\ref{theorem:asymptotic_distributions}, for $\bx_0 \sim \cN(0, \bI_D)$,
the generalization error is given by
\[
\E[(y - \hat{\beta}^\top\bx_{0,p})^2] = 1 + \sigma^2 + n\left(\frac{1 + \sigma^2 - \frac{p}{D}}{p-n-1} - \frac{1}{D} \right)\!.
\]
\end{proposition}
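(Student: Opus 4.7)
The plan is to do a bias-variance decomposition of the prediction error that exploits the independence of the fresh test features, the dropped features, and the measurement noise, then evaluate the one nontrivial expectation via an inverse Wishart formula. First I would split $\beta = (\beta_p, \beta_{\bar p})$ and $\bx_0 = (\bx_{0,p}, \bx_{0,\bar p})$, and use $y_0 = \bx_{0,p}^\top \beta_p + \bx_{0,\bar p}^\top \beta_{\bar p} + \epsilon_0$ together with $\hat\beta = \bX_p^\dagger \by = \bX_p^\dagger \bX_p \beta_p + \bX_p^\dagger \bX_{\bar p} \beta_{\bar p} + \bX_p^\dagger \epsilon$ to write
\begin{equation*}
y_0 - \hat\beta^\top \bx_{0,p} = \bx_{0,p}^\top \bigl[(\bI_p - \Pi)\beta_p - \bX_p^\dagger \bX_{\bar p} \beta_{\bar p} - \bX_p^\dagger \epsilon\bigr] + \bx_{0,\bar p}^\top \beta_{\bar p} + \epsilon_0,
\end{equation*}
where $\Pi = \bX_p^\dagger \bX_p$ is the projector onto the row space of $\bX_p$ (which has rank $n$ almost surely since $p>n$).

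Next I would observe that because $\bx_0$ is an independent fresh test draw, $\bx_{0,p}$ is independent of the bracketed quantity and of $\bx_{0,\bar p}, \beta_{\bar p}, \epsilon_0$, and all cross terms vanish in expectation. This reduces the risk to $\sigma^2 + \E\|\beta_{\bar p}\|_2^2 + \E\|\beta_p - \hat\beta\|_2^2$, i.e.\ $\sigma^2 + (1-p/D) + \E\|\beta_p - \hat\beta\|_2^2$. For the last term I would further exploit that $\beta_p$, $\beta_{\bar p}$, $\epsilon$ are mutually independent and independent of $\bX_p$, so the three components $(\bI_p-\Pi)\beta_p$, $\bX_p^\dagger \bX_{\bar p} \beta_{\bar p}$, and $\bX_p^\dagger \epsilon$ are uncorrelated, and handle them one at a time: $\E\|(\bI_p-\Pi)\beta_p\|_2^2 = \tfrac{1}{D}\E\,\mathrm{tr}(\bI_p-\Pi) = (p-n)/D$; conditioning on $\bX_p$ and using $\E[\bX_{\bar p}\bX_{\bar p}^\top\mid \bX_p] = (D-p)\bI_n$ gives $\E\|\bX_p^\dagger \bX_{\bar p}\beta_{\bar p}\|_2^2 = \tfrac{D-p}{D}\E\,\mathrm{tr}((\bX_p\bX_p^\top)^{-1})$; and $\E\|\bX_p^\dagger \epsilon\|_2^2 = \sigma^2 \E\,\mathrm{tr}((\bX_p\bX_p^\top)^{-1})$.

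The one ingredient I cannot avoid is the inverse Wishart expectation: since $\bX_p\bX_p^\top$ is $W_n(p,\bI_n)$ with $p>n+1$, the standard identity $\E[(\bX_p\bX_p^\top)^{-1}] = \tfrac{1}{p-n-1}\bI_n$ yields $\E\,\mathrm{tr}((\bX_p\bX_p^\top)^{-1}) = \tfrac{n}{p-n-1}$. Substituting this back and collecting terms gives
\begin{equation*}
\E\|\beta_p-\hat\beta\|_2^2 = \frac{p-n}{D} + \frac{n\bigl(1+\sigma^2 - p/D\bigr)}{p-n-1},
\end{equation*}
and adding $1-p/D+\sigma^2$ produces exactly the claimed $1+\sigma^2 + n\bigl(\tfrac{1+\sigma^2-p/D}{p-n-1}-\tfrac{1}{D}\bigr)$.

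The main obstacle is really bookkeeping: one must be careful that the three components of $\beta_p-\hat\beta$ are genuinely uncorrelated (this uses that $\Pi \beta_p$ involves $\bX_p$ only through the projector, and that $\beta_{\bar p}, \epsilon$ are zero-mean and independent of $\bX_p$), and that the trace identity for the projector holds almost surely because $\bX_p$ has full row rank $n$ when its $n\times p$ entries are iid standard Gaussian and $p>n$. Once those independence/rank facts are in place, the computation is a routine application of the inverse Wishart mean.
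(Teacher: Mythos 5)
Your derivation is correct, and it is essentially the argument behind the result the paper cites: the paper gives no proof of this proposition, deferring entirely to Corollary 2.2 of \cite{belkin2020two}, whose proof is exactly your bias--variance split into the projector term, the dropped-feature term, and the noise term, evaluated with $\E[(\bX_p\bX_p^\top)^{-1}] = \frac{1}{p-n-1}\bI_n$. The only adaptation needed here --- averaging the fixed-$\beta$ result of that corollary over the prior $\beta\sim\cN(0,\tfrac{1}{D}\bI_D)$, so that $\E\|\beta_p\|^2 = p/D$ and $\E\|\beta_{\bar p}\|^2 = 1-p/D$ --- is handled correctly in your computation, and the hypothesis $n+1<p$ from Theorem~\ref{theorem:asymptotic_distributions} is precisely what justifies the inverse Wishart mean you invoke.
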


Alternatively, this expression can be written as $1 + \sigma^2 + \E_{\bx_0}[\sigma_0^2] - 2\frac{n}{D}$, with $\sigma_0^2$ as in Theorem~\ref{theorem:asymptotic_distributions}.

\paragraph{Noise addition:}

Next, we consider noise addition performed by perturbing the $m=0$ model output with independent noise before releasing it: $\widehat{y_0} = \bx_{0, p}^\top \hat\beta + \bar\epsilon$ where $\bar\epsilon\sim \cN(0, \bar\sigma^2)$.
Recall that MI is possible for overparameterized linear regression models because the variance of the output of test points $(m=0)$ is lower than the variance of the output for training points $(m=1)$ (Fig.~\ref{fig:variances}). 
Thus, adding random noise to the model output when a test point $\bx_0$ is queried can make the output distributions harder to distinguish. 
In the following lemma, we compute the membership advantage and generalization error of noise addition using the full set of features $p=D$. 

\begin{lemma}
\label{lemma:42}
Consider the setup of Theorem~\ref{theorem:asymptotic_distributions} restricted to $p = D$, $\sigma_1 > \sigma_0$, and $\gamma > 1$, and suppose independent $0$-mean, variance $\bar\sigma^2$ Gaussian noise is added to the outputs when $m=0$ ($\bx_0$ is not a member of the training set). Then, ($\widehat{y_0} \mid m=0, \bx_0) \sim {\cal N}(0, \sigma_0^2 + \bar\sigma^2)$. The optimal adversary is
\begin{equation}
    \adversary_{\bar\sigma}^*(\bx_0,\widehat{y_0}) := \1\left[\hat{y_0}^2 > \alpha_{\bar\sigma}^2 \right]
    \ \ \text{ where } \ \ 
    \alpha_{\bar\sigma} := \sqrt{\frac{(\sigma_0^2+ \bar\sigma^2)\sigma_1^2\log\left(\frac{\sigma_1^2}{(\sigma_0^2+ \bar\sigma^2)}\right)}{\sigma_1^2 - (\sigma_0^2+ \bar\sigma^2)}}.\nonumber
\end{equation}
Its membership inference advantage is 
\begin{equation}
    \text{Adv}(\adversary_{\bar\sigma}^*) \!=\! \E_{\bx_0} \!\left[ \!2\left\{ \Phi\left(\frac{\alpha_{\bar\sigma}}{\sqrt{\sigma_0^2 + \bar\sigma^2}}\right) - \Phi\left(\frac{\alpha_{\bar\sigma}}{\sigma_1}\right)    \right\} \right].
\label{eq:noise_ma}
\end{equation}
Additionally, for $\bx_0 \sim \cN(0, \bI_D)$, the generalization error incurred is 
\begin{equation}
    \E[(y - \hat{\beta}^\top\bx_0)^2] = 1 + \sigma^2 + n\left(\frac{\sigma^2}{D-n-1} - \frac{1}{D} \right) + \bar\sigma^2.
\label{eq:noise_generalization}
\end{equation}
\end{lemma}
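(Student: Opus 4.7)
The plan is to derive each claim of the lemma as a direct consequence of Theorem~\ref{theorem:asymptotic_distributions}, Proposition~\ref{proposition:optimal_adversary}, and Proposition~\ref{prop:mse_for_asymp}, tracking how an independent additive Gaussian noise propagates through the optimal likelihood-ratio-test analysis already carried out in the unperturbed case.

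First, I would establish the posterior distribution under $m=0$. Theorem~\ref{theorem:asymptotic_distributions} specialized to $p=D$ gives that the unperturbed output satisfies $(\widehat{y_0} \mid m=0, \bx_0) \sim \cN(0, \sigma_0^2)$. Because the perturbation $\bar\epsilon \sim \cN(0, \bar\sigma^2)$ is drawn independently of $\bX$, $\beta$, and the training-label noise, the perturbed output is the sum of two independent zero-mean Gaussians and is therefore $\cN(0, \sigma_0^2 + \bar\sigma^2)$. The $m=1$ distribution is inherited unchanged from Theorem~\ref{theorem:asymptotic_distributions} as $\cN(0, \sigma_1^2)$ since noise is only added in the $m=0$ branch.

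Next, I would apply Proposition~\ref{proposition:optimal_adversary} to these two posteriors. Since both are zero-mean Gaussians, the log-likelihood ratio is a strictly monotone function of $\widehat{y_0}^2$, so the optimal rule is a threshold on $\widehat{y_0}^2$. Equating the two densities and solving the resulting linear equation in $\widehat{y_0}^2$ yields a unique crossing point, which a direct algebraic manipulation shows equals $\alpha_{\bar\sigma}^2$; under the stated hypothesis $\sigma_1^2 > \sigma_0^2 + \bar\sigma^2$, the LRT accepts exactly when $\widehat{y_0}^2 > \alpha_{\bar\sigma}^2$. Computing the tails of the two centered Gaussians yields $\Pr(\widehat{y_0}^2 > \alpha_{\bar\sigma}^2 \mid m=1, \bx_0) = 2\bigl(1 - \Phi(\alpha_{\bar\sigma}/\sigma_1)\bigr)$ and $\Pr(\widehat{y_0}^2 > \alpha_{\bar\sigma}^2 \mid m=0, \bx_0) = 2\bigl(1 - \Phi(\alpha_{\bar\sigma}/\sqrt{\sigma_0^2+\bar\sigma^2})\bigr)$; differencing and taking $\E_{\bx_0}$ recovers Eq.~\eqref{eq:noise_ma}. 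For the generalization error, I would expand $(y_0 - \hat\beta^\top \bx_0 - \bar\epsilon)^2$ and use independence of $\bar\epsilon$ from every other random quantity to kill the cross term, leaving $\E[(y_0 - \hat\beta^\top \bx_0)^2] + \bar\sigma^2$; substituting $p=D$ into Proposition~\ref{prop:mse_for_asymp} yields Eq.~\eqref{eq:noise_generalization}.

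There is no genuinely deep obstacle here; the whole argument is bookkeeping about how an independent Gaussian interacts with the two posteriors supplied by Theorem~\ref{theorem:asymptotic_distributions}. The one point requiring care is to verify that the structural form of the optimal test from Theorem~\ref{theorem:asymptotic_distributions} survives the perturbation, i.e., that the log-likelihood ratio remains monotone in $\widehat{y_0}^2$ and that the variance ordering $\sigma_1^2 > \sigma_0^2 + \bar\sigma^2$ holds so the threshold $\alpha_{\bar\sigma}$ is real; if the ordering were reversed, the remark following Theorem~\ref{theorem:asymptotic_distributions} would apply and the roles of the two variances in $\alpha_{\bar\sigma}$ and in the advantage would simply swap.
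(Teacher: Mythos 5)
Your proposal is correct and follows essentially the same route as the paper, which simply notes that the added independent Gaussian noise shifts $\sigma_0^2$ to $\sigma_0^2+\bar\sigma^2$ in the $m=0$ posterior, reruns the likelihood-ratio-test computation of Theorem~\ref{theorem:asymptotic_distributions}, and obtains the generalization error by setting $p=D$ in Proposition~\ref{prop:mse_for_asymp} and adding $\bar\sigma^2$. You merely spell out the bookkeeping (independence killing the cross term, the tail probabilities of the two centered Gaussians) that the paper leaves implicit.
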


The terms in Lemma \ref{lemma:42} are similar to those in Theorem~\ref{theorem:asymptotic_distributions} except for the added dependence on the added noise's variance $\bar\sigma^2$ in the case that $m=0$ ($\bx_0$ is not a member). 
Increasing noise variance $\bar\sigma^2$ decreases membership advantage (Eq.~\eqref{eq:noise_ma}) at the cost of increased generalization error (Eq.~\eqref{eq:noise_generalization}). 
Indeed, it is possible to add sufficient noise such that $\sigma_1^2 = \sigma_0^2 + \bar\sigma^2$, rendering the membership advantage 0, though possibly at the cost of impermissible generalization performance. 

In Figure~\ref{fig:mse_vs_advantage}, we plot the membership advantage vs.\ generalization error trade-offs for both feature reduction (blue) and noise addition (orange). The plots follow the setting of Fig.~\ref{fig:experiment1}. For the blue curve, we employ the expressions in Theorem~\ref{theorem:asymptotic_distributions} and Proposition~\ref{prop:mse_for_asymp} while varying the overparameterization ratio $\gamma$. For the orange curve, we use Lemma~\ref{lemma:42}, using all available features ($p=D$) while varying the noise variance $\bar\sigma^2$. 
Fig.~\ref{fig:mse_vs_advantage} shows that the trade-offs induced by both feature reduction and noise addition are essentially equivalent.

\section{More Complex Models}\label{sec:experiments}

In this section, we present three more complex data models wherein we empirically observe increased overparameterization leading to increased MI vulnerability.

For each data model, we perform the following experiment. We first sample a $\bx_0$ vector, which is the data point we wish to perform MI on. Then we sample a training dataset $\bX$, measurements $\by$, as well as other random elements according to the data model. To obtain an $m=0$ prediction, we learn a model on $\bX$ that we then apply on $\bx_0$. To obtain an $m=1$ prediction, we first replace a row of $\bX$ with $\bx_0$ and the corresponding element of $\by$ with $y_0$ before learning the model and applying it on $\bx_0$. Keeping $\bx_0$ fixed throughout the experiment and resampling all other random data (such as $\bX$) many times, we collect a large set of $m=0$ and $m=1$ prediction samples. We build a histogram of these samples by assigning them into fine discrete bins to obtain approximations of the conditional densities $\hat{P}(\widehat{y_0} \mid m=1, \bx_0)$ and $\hat{P}(\widehat{y_0} \mid m=0, \bx_0)$ needed for the optimal adversary (cf., Prop~\ref{proposition:optimal_adversary}). To approximate membership advantage, we sum up the differences $\hat{P}(\widehat{y_0} \mid m_1, \bx_0) - \hat{P}(\widehat{y_0} \mid m_0, \bx_0)$ over all the histogram bins where $\hat{P}(\widehat{y_0} \mid m_1, \bx_0) > \hat{P}(\widehat{y_0} \mid m_0, \bx_0)$. We repeat this experiment 20 times, each with a newly sampled $\bx_0$, and plot the means (as points) and the estimated standard errors (as shaded error regions) of the membership advantage values across the 20 experiments in Figure~\ref{fig:additional_models}. We next discuss the data models in detail.

\begin{figure}
    \centering
    \begin{subfigure}{0.32\textwidth}
    \centering
    \includegraphics[width=\textwidth]{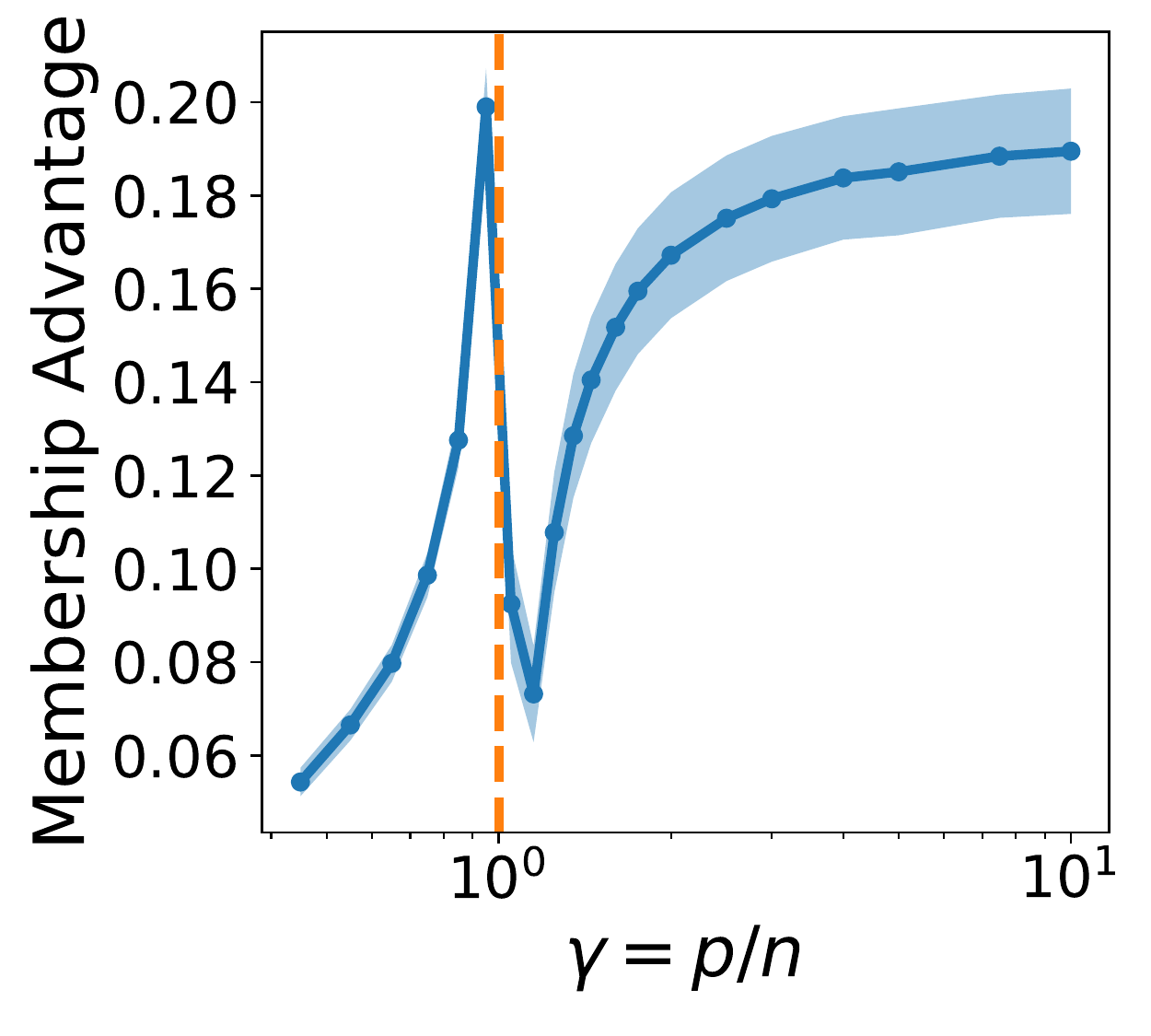}
    \caption{Latent space model}
    \label{fig:experiment2}
    \end{subfigure}
    \hfill
    \begin{subfigure}{0.32\textwidth}
    \centering
    \includegraphics[width=\textwidth]{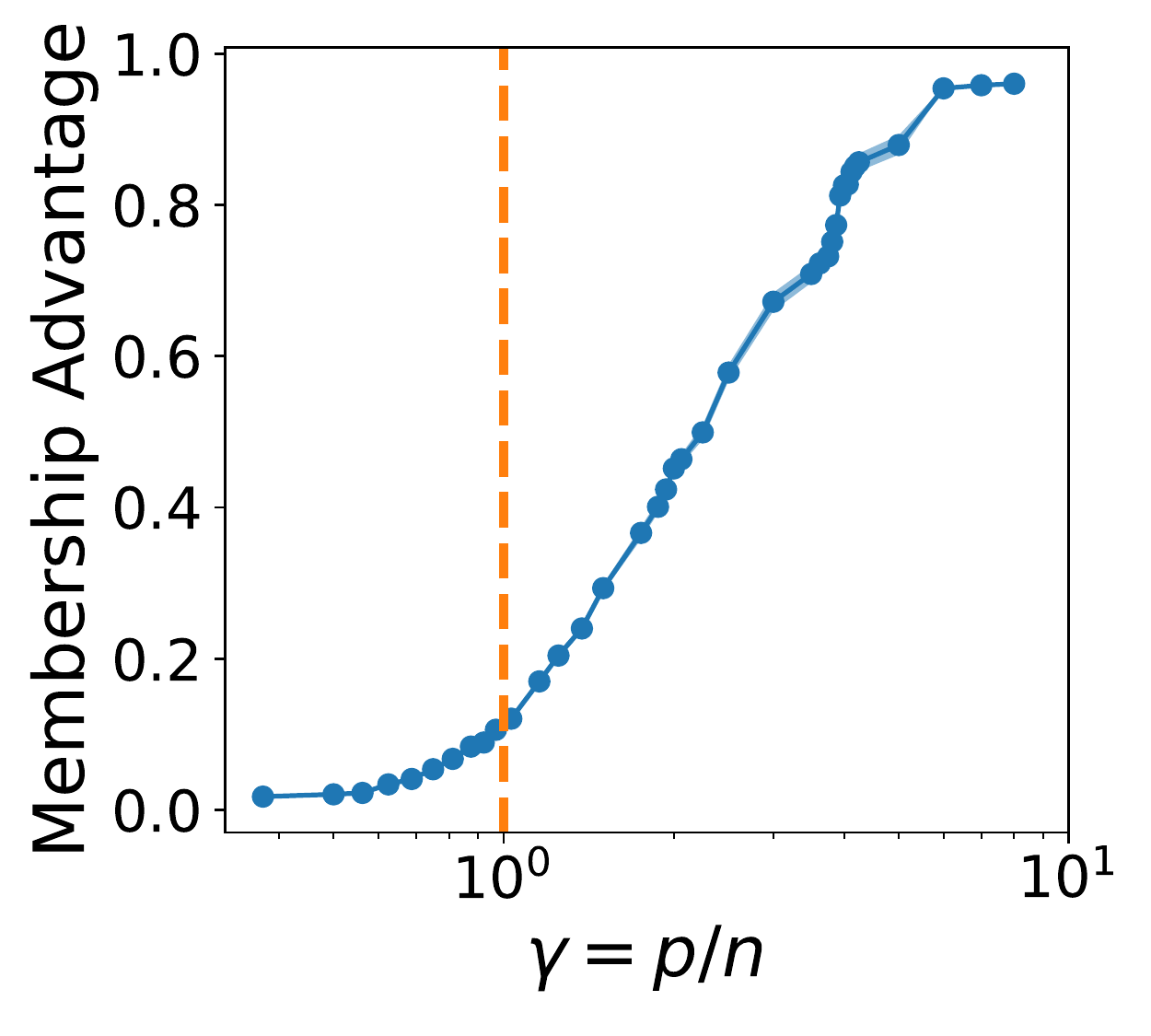}
    \caption{Time-series data}
    \label{fig:experiment3}
    \end{subfigure}
    \hfill
    \begin{subfigure}{0.32\textwidth}
    \centering
    \includegraphics[width=\textwidth]{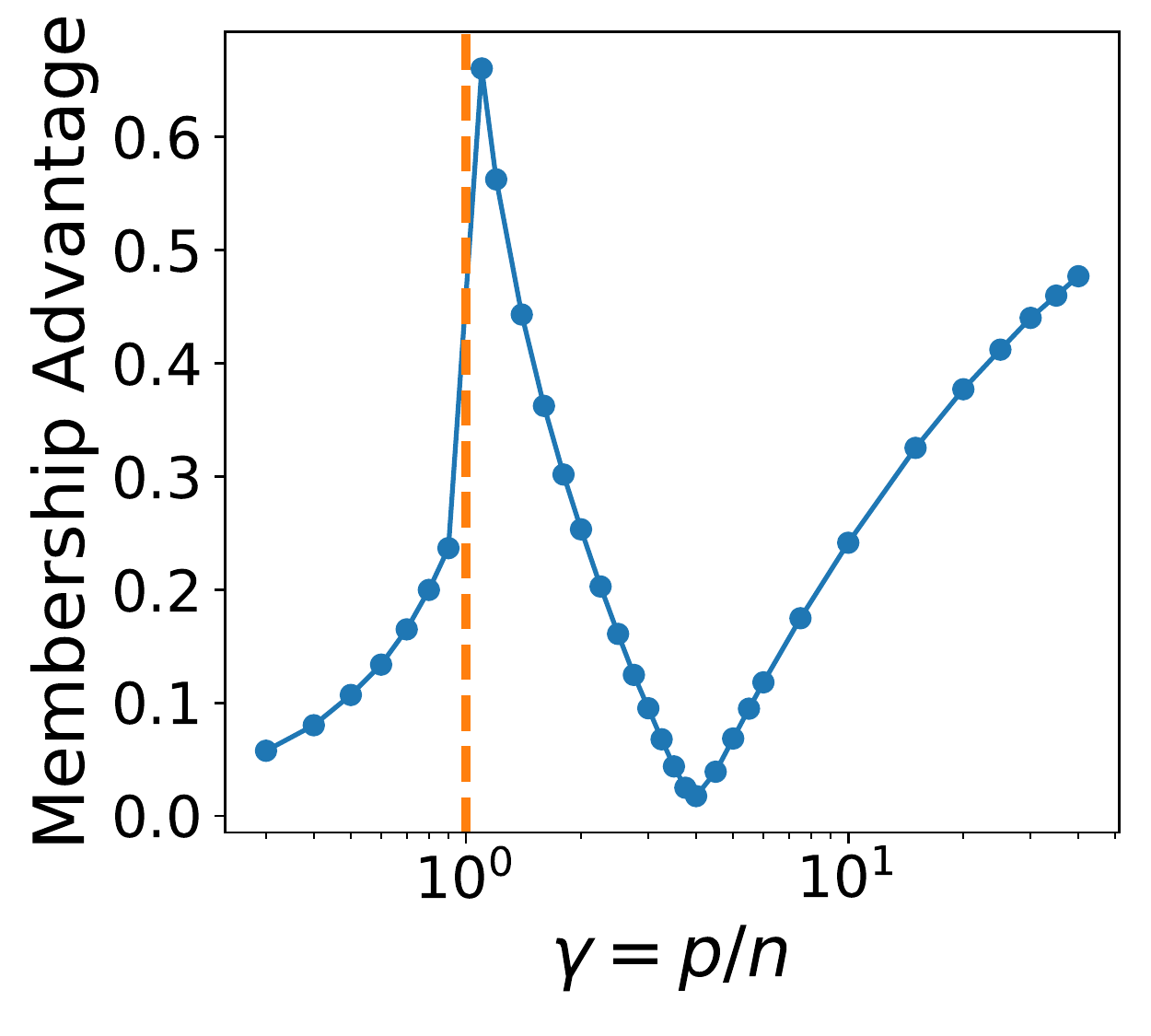}
    \caption{Random ReLU features}
    \label{fig:experiment4}
    \end{subfigure}
    \caption{Empirically measured membership advantage vs.\ parameterization for various data models detailed in Section \ref{sec:experiments}. The dashed line at $\gamma=1$ divides the underparameterized and overparameterized regions. For all settings, when sufficiently overparameterized, increasing the number of parameters increases vulnerability to membership inference attacks. (a) Latent space model with $n=200$ where $p$ covariates of $d=20$ latent features are observed. (b) Regression over $n=128$ time samples of a linear combination of $D=1024$ Fourier features. (c) Nonlinear random ReLU features with $n=100$, $D=5000$, and $\sigma=1$. \label{fig:additional_models}}
\end{figure}

\textbf{Linear Regression on Latent Space Model:} We first consider the latent space model from \cite{hastie2019surprises}, where the output variable $y_i$ is a noisy linear function of a data point's $d$ latent features $\bz_i$, but one only observes a vector $\bx_i$ containing $p \geq d$ covariates rather than the direct features $\bz_i$. Let $\bZ$ be an $n \times d$ matrix where each row is the vector of latent features for an observation. We then have:
\begin{align*}
    \by = \bZ \beta + \epsilon, \qquad
    \bX_{i, j} = \bw_j^\top \bz_i + u_{i, j}, \qquad
    \widehat{y}_0 = \bx_0^\top \bX_p^\dagger y,
\end{align*}
where $\bw_j$ is a $d$-dimensional vector, and $u_{i, j}$ is a noise term. In this experiment, we set $n = 200$, $d = 20$, and vary $p$. For each experiment, we sample a single $\bx_0\sim \cN(0, \Id_d)$ and a single set of $\bw_j$ vectors, each from $\cN(0, \Id_d)$, and keep them fixed.
We leave the other variables random with the following distributions: $\bz_i \sim \cN(0, \bI_d)$, $\epsilon_i \sim \cN(0, \sigma^2)$, 
$\beta \sim \cN\left(0, \frac{1}{d}\Id_d\right)$ 
and $u_{i, j} \sim \cN(0, 1)$.
In Fig.\ \ref{fig:experiment2}, we plot the empirical membership advantage values, which increase with the number $p$ of features in the overparameterized regime. 

\textbf{Noise-Free Time-Series Regression:} In this experiment, we consider a model that aims to interpolate a time-series signal using frequency components as the features. For example, consider a patient who visits a hospital at irregular times $t_i$ to get their blood glucose level measured. After obtaining a number of measurements taken over time, the hospital fits a time-series signal representing the patient's blood glucose level at any time. Using this learned model, an adversary wishes to identify whether the patient visited the hospital at a particular time $t_0$, that is, if $t_0$ is one such time point included in the hospital's training dataset for learning the patient's blood glucose level.

To formalize this, we fix $D=1024$ as the number of frequency components each signal contains and let $M=2D-1$. Let $\bW$ be the $D\times D$ matrix whose elements $\bW_{kl} = \frac{1}{\sqrt{M+2}}\cos\left(\frac{2\pi kl}{M}\right)$. That is, each column of the matrix is half a period of an $M$-dimensional discrete cosine with frequency $\frac{2\pi k}{M}$. Sample a $\beta \sim \cN\left(0, \frac{1}{D}\bI_D\right)$, and let $\bz = \bW \beta$ denote the true length-$D$ signal. Thus, the signal is a random linear combination of cosines. We randomly select $n=128$ indices from ${1, 2,\dots, D}$ uniformly without replacement, and let $\bX$ be the $n \times D$ matrix whose rows are the rows of $\bW$ at these selected indices. Then, $\by = \bX \beta$ is the signal observed at the randomly selected $n$ indices. The regressor learns $\hat{\beta} = \bX_p^\dagger \by$ and then predicts the signal at any other time point $t_0$ as $\widehat{y_0} = \bx_{0, p}^\top \hat{\beta}$, where $\bx_0$ is row $t_0$ of $\bW$. Thus, identifying whether $t_0$ was a time point in the training dataset is equivalent to identifying if $\bx_0$ was in $\bX$. The membership advantage values for this task, plotted in Figure~\ref{fig:experiment3}, increases with the number $p$ of frequency components included in the model.

\textbf{Random ReLU Features:} We next consider a nonlinear data model based on Random ReLU feature networks \cite{rahimi2007random, cho2012kernel}. Let $\bZ$ be a random $n \times D$ matrix whose elements are iid standard normal. Let $\bV$ be a random $D \times p$ matrix whose rows are sampled iid from the surface of the unit sphere in $\mathbb{R}^p$. Let $\bX = \max(\bZ \bV, 0)$, where the $\max$ is taken elementwise. The target variables are given by $\by = \bZ \beta + \sigma \epsilon$, where $\beta \sim \cN\left(0, \frac{1}{D}\bI_D\right)$ and $\epsilon \sim \cN(0, \bI_n)$. Finally, for the data point $\bx_0$, let its prediction be $\widehat{y_0} = \bx_0^\top \bX^\dagger y$. We plot the membership advantage in Figure\ \ref{fig:experiment4} with $n = 100$, $D = 5000$, and $\sigma = 1$. We again observe that in the highly overparametrized regime, membership advantage increases with parameters. 

\section{Discussion and Conclusions}\label{sec:conclusion}

We have shown theoretically for (regularized) linear regression with Gaussian data and empirically for more complex models (latent space regression, time-series regression using Fourier components, and random ReLU features) that increasing the number of model parameters renders them more vulnerable to membership inference attacks.
Thus, while overparameterization may be attractive for its robust generalization performance, one must proceed with caution to ensure the learned model does not lead to unintended privacy leakages.

More speculatively, we hypothesize 
that the same overparameterization/vulnerability tradeoff should exist in many machine learning models (e.g., deep networks) beyond those we have studied. 
Intuitively, the output of a model that achieves zero training error but generalizes well must
i) fit to any noise (e.g.\ additive Gaussian noise) present in the training data to get a perfect fit to the noisy training data
but also
ii) eliminate the effect of noise in the training data in predicting for unseen data to achieve good generalization. This disparate behavior towards 
training and non-training data points leads to different output distributions when the input is or is not among the training data and is universal for overparameterized models. 
Ultimately, this causes a difference in the distributions of training and test predictions that can be leveraged to perform a membership attack. 

There are still many open questions in this line of research. While we have shown multiple settings where reducing the number of parameters can increase privacy, it remains to be verified that the phenomenon holds widely for other types of machine learning settings such as language tasks or large-scale image recognition. Another interesting next step would be to investigate how increased overparameterization affects privacy for models trained with privacy-preserving techniques or membership inference defense schemes other than ridge regularization. We believe the findings of our work can provide insights towards developing the next generation of privacy-preserving techniques. It is our hope that the observations and analyses in this paper take a step towards keeping sensitive training data safer in a world increasingly intertwined with machine learning.
\smallskip
\subsection*{Acknowledgements}

This work was supported by NSF grants CCF-1911094, IIS-1838177, and IIS-1730574; ONR grants N00014-18-12571, N00014-20-1-2534, and MURI N00014-20-1-2787; AFOSR grant FA9550-22-1-0060; and a Vannevar Bush Faculty Fellowship, ONR grant N00014-18-1-2047.

\newpage
\appendix
\section{Additional Discussions}

\subsection{Limitations}
\label{subsec:limits}
In this work, we focus on the optimal membership inference adversary.
We study this because of how it serves as an upper bound for all other attacks and because of how it yields interpretable and fundamental theoretical results. 
The optimal membership inference adversary has full knowledge of the learning model’s output distributions when the data point of interest is a member or non-member of the training dataset. 
In practice, the adversary rarely has such full knowledge, and the learning model’s output distributions have to be approximated using shadow models \cite{shokri2017membership}, or the entire attack has to be simplified, such as with a loss threshold \cite{yeom2018privacy, sablayrolles2019white}.
Our study does not analyze how our results are affected by the non-optimality of these more practical attacks.

\subsection{Ethical Considerations}
\label{subsec:broad}

It is the hope of the authors that by more clearly exposing the link between membership inference vulnerability and generalization performance, researchers can make informed decisions about how to achieve the best trade-off they can for their application. 
That said, by studying the performance of optimal membership inference attacks, it is possible that this work will call attention to vulnerabilities in existing model architectures which may be exploited. 
Furthermore, in settings where privacy is absolutely crucial, such as in medical applications, additional care should be taken to guard privacy beyond the guarantees of this work.

\section{Proofs}
\label{sec:appendix}

\subsection{Tools for Asymptotic Analysis}

The following lemmas are used in the proofs of Theorems \ref{theorem:asymptotic_distributions} and \ref{theorem:asymptotic_distributions_regularized}. We begin with the following lemma, which is a generalized version
of the Marchenko-Pastur theorem \cite{rubio2011spectral, dobriban2020wonder, dobriban2021distributed}. The lemma is a consequence of Theorem 1 from \cite{rubio2011spectral}.
\begin{lemma}
\label{lemma:rubiomestre}
Let $\bX_n \in \mathbb{R}^{n\times p}$ be a sequence of random matrices with i.i.d. $\mathcal N(0,1)$ entries. 
Consider the 
the sample covariance matrix $\widehat\bSigma = (1/n)\bX_n^\top \bX_n$.
Let $\bC_n \in \mathbb{R}^{p\times p}$ be a sequence of matrices such that ${\rm{Tr}}(\bC_n)$ is uniformly bounded with probability one. As $n,p \to \infty$ with  
$p/n = \gamma \in (0,\infty)$, it holds that almost surely,
\begin{align*}
    {\rm{Tr}} \left(\bC_n \left(\left(\bSigma + \lambda \Id_p\right)^{-1} - g(-\lambda) \Id_p\right)\right)\rightarrow 0, \quad\quad\quad
    {\rm{Tr}} \left(\bC_n \left(\left(\bSigma + \lambda \Id_p\right)^{-2} - g^\prime(-\lambda) \Id_p\right)\right)\rightarrow 0
\end{align*}
where $g(\lambda)$ is the Stieltjes transform of the Marchenko-Pastur law with parameter $\gamma$.
\end{lemma}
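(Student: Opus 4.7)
My plan is to prove this deterministic-equivalent-style result by combining three ingredients: the right-orthogonal invariance of Gaussian matrices, which reduces the weighted trace $\mathrm{Tr}(C_n\,\cdot\,)$ to the scalar trace; the classical (scalar) Marchenko--Pastur theorem, which identifies the scalar limit; and Gaussian concentration for Lipschitz functions, which upgrades convergence in expectation to almost sure convergence.

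\textbf{Reduction via rotational invariance.} Because $X_n$ has iid $\mathcal{N}(0,1)$ entries, $X_n U \stackrel{d}{=} X_n$ for every $p\times p$ orthogonal matrix $U$, so $\widehat{\Sigma}$ and its resolvent $R_n(\lambda) := (\widehat{\Sigma} + \lambda I_p)^{-1}$ satisfy $U^\top R_n(\lambda) U \stackrel{d}{=} R_n(\lambda)$. Any matrix commuting in distribution with every orthogonal matrix must, in expectation, be a scalar multiple of the identity, so $\mathbb{E}[R_n(\lambda)] = c_n(\lambda)\,I_p$ where $c_n(\lambda) := (1/p)\,\mathbb{E}[\mathrm{Tr}\,R_n(\lambda)]$. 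The classical Marchenko--Pastur theorem gives $(1/p)\mathrm{Tr}\,R_n(\lambda) \to g(-\lambda)$ a.s., and the spectral bound $\|R_n(\lambda)\| \le 1/\lambda$ lets dominated convergence yield $c_n(\lambda) \to g(-\lambda)$. Therefore
\begin{align*}
\mathbb{E}\!\left[\mathrm{Tr}\!\left(C_n\bigl(R_n(\lambda) - g(-\lambda)\,I_p\bigr)\right)\right] \;=\; (c_n(\lambda) - g(-\lambda))\,\mathrm{Tr}(C_n) \;\to\; 0
\end{align*}
by the uniform boundedness of $\mathrm{Tr}(C_n)$. The same argument applied to $R_n(\lambda)^2$, whose expectation is likewise a scalar multiple of $I_p$, gives the companion in-expectation statement for the second identity, once one knows $(1/p)\,\mathrm{Tr}\,R_n(\lambda)^2 \to g^\prime(-\lambda)$.

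\textbf{Concentration to almost sure.} To upgrade to almost sure convergence I would view $X \mapsto \mathrm{Tr}(C_n(X^\top X/n + \lambda I)^{-1})$ as a function on $\mathbb{R}^{n \times p}$ with the Frobenius norm. A standard resolvent perturbation identity, $dR = -R\,(dM)\,R$ with $M = X^\top X/n + \lambda I$, together with the high-probability bound $\|X\|_{\mathrm{op}} = O(\sqrt{n})$ from random matrix theory, shows this map is Lipschitz with constant $L_n = O(\|C_n\|_F / (\sqrt{n}\,\lambda^2))$. The Gaussian concentration inequality then yields $\Pr(|\mathrm{Tr}(C_n R_n) - \mathbb{E}[\mathrm{Tr}(C_n R_n)]| > t) \le 2\exp(-t^2/(2L_n^2))$, and Borel--Cantelli delivers the a.s.\ convergence. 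An analogous Lipschitz estimate (with one extra factor of $R_n$, still controlled by $1/\lambda$) handles the $R_n(\lambda)^2$ case.

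\textbf{Main obstacle.} The trickiest step is the derivative passage: going from $(1/p)\,\mathrm{Tr}\,R_n(\lambda) \to g(-\lambda)$ to $(1/p)\,\mathrm{Tr}\,R_n(\lambda)^2 \to g^\prime(-\lambda)$ is not automatic, but it follows either from a direct application of Marchenko--Pastur to the functional $\int (x+\lambda)^{-2}\,d\widehat{\mu}_n(x)$ of the empirical spectral measure, or by exploiting that Stieltjes transforms are analytic on $(0,\infty)$ and uniform convergence on a compact interval of $\lambda$ propagates to their derivatives. A secondary nuisance is that the Lipschitz bound uses $\|C_n\|_F$ rather than $\mathrm{Tr}(C_n)$; in our intended applications $C_n$ is rank-one (e.g.\ $C_n = \mathbf{x}_{0,p}\mathbf{x}_{0,p}^\top / p$), so the two norms are comparable, matching the standard hypotheses in \cite{rubio2011spectral}.
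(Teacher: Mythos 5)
The paper does not actually prove this lemma: it is stated as an immediate consequence of Theorem 1 of the cited work of Rubio and Mestre, a general deterministic-equivalent theorem for resolvents of sample covariance matrices, and no argument is given beyond that citation. Your proposal therefore takes a genuinely different (and more self-contained) route: you exploit the right-orthogonal invariance of the Gaussian ensemble to show $\mathbb{E}[(\widehat{\Sigma}+\lambda I_p)^{-1}]$ is a multiple of the identity, identify the scalar via the classical Marchenko--Pastur limit of $(1/p)\mathrm{Tr}(\widehat{\Sigma}+\lambda I_p)^{-k}$ for $k=1,2$, and then concentrate. This is a valid strategy for the i.i.d.\ standard Gaussian case treated here, and it buys transparency at the cost of the generality of the Rubio--Mestre result (which covers non-Gaussian entries and anisotropic covariance, where no orthogonal-invariance shortcut exists). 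Three caveats you should make explicit to close the argument: (i) the identity $\mathbb{E}[\mathrm{Tr}(C_n R_n)] = c_n(\lambda)\,\mathrm{Tr}(C_n)$ requires $C_n$ to be deterministic or independent of $X_n$ (true in every application in this paper, e.g.\ $C_n = \mathbf{x}_{0,p}\mathbf{x}_{0,p}^\top/n$ with $\mathbf{x}_{0,p}$ fixed, or $\mathbf{x}_{1,p}\mathbf{x}_{1,p}^\top/n$ independent of the leave-one-out matrix $A_\lambda$, but not guaranteed by the bare statement); (ii) the map $X \mapsto \mathrm{Tr}(C_n(X^\top X/n + \lambda I)^{-1})$ is not globally Lipschitz --- your bound holds only on the convex set $\{\|X\|_{\mathrm{op}} \le K\sqrt{n}\}$, so you need a truncation-and-extension step (or a bounded-difference/martingale argument on rank-one row perturbations, which avoids the issue entirely) before invoking Gaussian concentration and Borel--Cantelli; and (iii) the natural norm in the trace--duality bound is the nuclear norm $\|C_n\|_*$ rather than $\|C_n\|_F$, which is also the hypothesis actually needed (and coincides with $\mathrm{Tr}(C_n)$ for the positive semidefinite rank-one $C_n$ used here). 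With those repairs your argument is correct; your handling of the $g'(-\lambda)$ limit via weak convergence of the empirical spectral distribution against the bounded continuous function $(x+\lambda)^{-2}$ is fine, including for $\gamma>1$ where the limiting law has an atom at zero.
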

I use the following Lemma in computing the asymptotic distribution of the output.
\begin{lemma}
\label{lemma:dotproddist}
Let $\by_n \in \mathbb{R}^n$ be a sequence of i.i.d. 
$\mathcal N(0, \Id_n)$ random vectors. Also, let $\bx_n \in \mathbb{R}^n$ be a sequence of random vectors with
spherically symmetric distribution such that
$\|\bx_n\|_2 \overset{a.s.}{\longrightarrow} \sigma$. Further, assume that $\bx_n, \by_n$ are independent. Then
$\bx_n^\top \by$ converges weakly to a zero mean gaussian
with variance $\sigma^2$.
\end{lemma}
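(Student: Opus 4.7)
The plan is to compute the characteristic function of $\bx_n^\top \by_n$, take a pointwise limit in $n$, and invoke L\'evy's continuity theorem. First I would condition on $\bx_n$: since $\by_n \sim \mathcal N(0,\Id_n)$ is independent of $\bx_n$, the conditional law of $\bx_n^\top \by_n$ given $\bx_n$ is $\mathcal N(0,\|\bx_n\|_2^2)$, so by the tower property
\[
\phi_n(t) \;:=\; \E\!\left[e^{it\,\bx_n^\top \by_n}\right] \;=\; \E\!\left[e^{-\tfrac{t^2}{2}\|\bx_n\|_2^2}\right].
\]
Because $\|\bx_n\|_2 \overset{a.s.}{\longrightarrow} \sigma$ and the integrand is uniformly bounded by $1$, dominated convergence gives $\phi_n(t) \to e^{-t^2\sigma^2/2}$, which is exactly the characteristic function of $\mathcal N(0,\sigma^2)$. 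L\'evy's continuity theorem then delivers the claimed weak convergence.

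Notice that this route does not actually invoke the spherical symmetry of $\bx_n$; Gaussianity of $\by_n$ together with independence is enough. An alternative argument that does leverage the hypothesis is to decompose $\bx_n \overset{d}{=} R_n \bu_n$, where $R_n := \|\bx_n\|_2$ and $\bu_n$ is uniform on the unit sphere of $\R^n$ and independent of $R_n$. Rotation invariance of $\by_n$ then forces $\bu_n^\top \by_n \sim \mathcal N(0,1)$ independently of $R_n$, so $\bx_n^\top \by_n \overset{d}{=} R_n\,G$ with $G \sim \mathcal N(0,1)$ independent of $R_n$, after which Slutsky's theorem upgrades $R_n \to \sigma$ a.s.\ to $R_n G \Rightarrow \sigma\,G \sim \mathcal N(0,\sigma^2)$.

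There is no genuine obstacle here: the only care needed is bookkeeping in the conditional expectation (or, for the alternative route, verifying the independence of $R_n$ and $\bu_n^\top \by_n$ using that the conditional variance $\|\bu_n\|_2^2 = 1$ is deterministic). I would favor the characteristic-function proof because it collapses to a two-line computation and makes transparent that Gaussianity of $\by_n$ is the essential structural ingredient, with spherical symmetry only needed if one wished to extend the result to non-Gaussian $\by_n$ via a separate concentration argument for $\bu_n^\top \by_n$.
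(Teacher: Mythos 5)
Your proposal is correct, and your primary argument takes a genuinely different route from the paper's. The paper proves the lemma exactly as in your ``alternative'' paragraph: it writes $\bx_n^\top\by_n = \|\bx_n\|_2\,\bu_n^\top\by_n$ with $\bu_n = \bx_n/\|\bx_n\|_2$ uniform on the sphere, uses rotational invariance to replace $\bu_n$ by the first standard basis vector so that $\bx_n^\top\by_n \overset{d}{=} \|\bx_n\|_2\, y_{n,1}$ with $y_{n,1}\sim\mathcal N(0,1)$, and then concludes from $\|\bx_n\|_2\overset{a.s.}{\to}\sigma$ (a Slutsky-type step the paper leaves implicit). Your characteristic-function argument --- conditioning on $\bx_n$ to get $\phi_n(t)=\E[e^{-t^2\|\bx_n\|_2^2/2}]$, passing to the limit by dominated convergence, and invoking L\'evy continuity --- is equally valid and arguably cleaner: it avoids the decomposition into radial and angular parts and, as you correctly observe, shows the spherical-symmetry hypothesis is superfluous when $\by_n$ is Gaussian (it would only matter if one wanted to relax Gaussianity of $\by_n$). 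The paper's route, by contrast, makes the exact finite-$n$ distribution $\|\bx_n\|_2\cdot\mathcal N(0,1)$ explicit, which is mildly more informative but not needed elsewhere. Your parenthetical care about the independence of $R_n$ and $\bu_n^\top\by_n$ in the second route is exactly the point the paper glosses over, so both of your write-ups are sound.
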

\begin{proof}
We can write
\begin{align*}
    \bx_n^\top \by_n = \left\|\bx_n\right\|_2 \left(\frac{\bx_n}{\left\|\bx_n\right\|}\right)^\top \by_n =  \left\|\bx_n\right\|_2 \bu_n^\top \by_n
\end{align*}
where $\bu_n \in S^{n-1}$ is uniformly distributed over the unit sphere and is independent from $\by_n$. Therefore, we can fix $\bu_n$ to be the first standard unit vector and the distribution of $\bx_n^\top \by_n$ is
the same as $\left\|\bx_n\right\|_2 y_{n,1}$ where $y_{n,1} \sim \mathcal N(0,1)$. Hence, using $\|\bx_n\|_2 \overset{a.s.}{\longrightarrow} \sigma$, we deduce the result.
\end{proof}

\subsection{Proof of Theorem \ref{theorem:asymptotic_distributions}}

\begin{proof}[Proof of Theorem~\ref{theorem:asymptotic_distributions}]
Let $\bX_{\comp{p}}$ denote the matrix formed by removing the first $p$ columns from $\bX$, and let $\beta_{\comp{p}}$ denote the vector formed by removing the first $p$ elements from $\beta$. Recall that
\begin{align*}
    \widehat{y_0} \mid m=0 &= \bx_0^\top \bX_p^\dagger (\bX \beta + \epsilon) \\
    &= \bx_0^\top \bX_p^\dagger (\bX_p \beta_p + \eta)
\end{align*}
where $\eta = \bX_{\comp{p}}\bbeta_{\comp{p}} + \epsilon \sim \cN\left(0, \left(1+\sigma^2-\frac{p}{D}\right)\bI_n\right)$. First note that the distributions of
$\bX_p^\top\bX_p^\dagger \bx_0$ are $\bX_p^\dagger \bx_0$ are spherically symmetric and letting $\widehat\bSigma \overset{\Delta}{=} (1/n)\bX_p^\top \bX_p $ and $\bP$ to be orthogonal projection onto row space of $\bX_p$ we have
\begin{align*}
    \frac{1}{D}\|\bX_p^\top\bX_p^\dagger \bx_0\|_2^2 = \frac{1}{D}\|\bP \bx_0\|_2^2 &= \frac{1}{D}\lim_{\lambda \to 0} \bx_0^\top\left(\widehat\bSigma+\lambda\Id_p\right)^{-1}\widehat\bSigma \bx_0 \\
    &=\frac{1}{D}\|\bx_0\|_2^2 - \frac{1}{D}\lim_{\lambda \to 0} \lambda \bx_0^\top\left(\widehat\bSigma+\lambda\Id_p\right)^{-1}\bx_0,
\end{align*}
and,
\begin{align*}
    \|\bX_p^\dagger \bx_0\|_2^2 &= \frac{1}{n} \lim_{\lambda \to 0}\bx_0^\top \left(\widehat\bSigma+\lambda\Id_p\right)^{-1}\widehat\bSigma \left(\widehat\bSigma+\lambda\Id_p\right)^{-1} \bx_0 \\
    &= \frac{1}{n} \lim_{\lambda \to 0} \bx_0^\top \left[\left(\widehat\bSigma+\lambda\Id_p\right)^{-1} - \lambda  \left(\widehat\bSigma+\lambda\Id_p\right)^{-2}\right] \bx_0^\top.
\end{align*}
Thus, using Lemma \ref{lemma:dotproddist}, both $\frac{1}{D}\|\bX_p^\top\bX_p^\dagger \bx_0\|_2^2$ and $\|\bX_p^\dagger \bx_0\|_2^2$ converge to a fixed limit as $n\to \infty$, almost surely. Therefore, using Lemma \ref{lemma:dotproddist}, $\widehat y_0$ converges weakly to a gaussian. Now, we compute its variance.
Since $\eta$ and $\beta$ are both zero-mean independent Gaussians and are thus orthogonal in expectation, we have by the Pythagorean theorem:
\begin{align*}
    \E\left[\widehat{y_0}^2 \mid m=0\right] = \E\left[(\bx_0^\top \bX_p^\dagger \bX_p \beta_p)^2\right] + \E\left[(\bx_0^\top \bX_p^\dagger \eta)^2\right].
\end{align*}
We start with the first term. 

Note that since, $p > n$, $\bX_p$ does not have linearly independent columns. Let $\bP = \bX_p^\dagger \bX_p$. We have:
\begin{align*}
    \E\left[(\bx_0^\top \bX_p^\dagger \bX_p \beta_p)^2 \right] &= 
    \E\left[\tr\left(\bx_0^\top \bP \beta_p \beta_p^\top \bP^\top \bx_0\right)\right] \\
    &= \E\left[\tr\left(\beta_p \beta_p^\top \bP^\top \bx_0 \bx_0^\top \bP\right)\right] \\
    &= \tr\left(\E\left[\beta_p \beta_p^\top \bP^\top \bx_0 \bx_0^\top \bP\right]\right) \\
    &= \tr\left(\E\left[\beta_p \beta_p^\top\right] \E\left[ \bP^\top \bx_0 \bx_0^\top \bP\right]\right) \\
    &= \frac{1}{D} \tr\left(\E\left[ \bP^\top \bx_0 \bx_0^\top \bP\right]\right) \\
    &= \frac{1}{D} \E\left[\| \bP^\top \bx_0\|^2 \right] \\
    &= \frac{1}{D}\frac{n}{p}\|\bx_0\|^2.
\end{align*}
In the last line, we use the same argument as in Section 2.2 of \cite{belkin2020two}, using the facts that $\bP$ is the orthogonal projection to the row space of $\bX_p$ and that the Gaussian distribution is invariant to rotations.

We now consider the second term:
\begin{align*}
    \E\left[ (\bx_0^\top \bX_p^\dagger \eta)^2 \right] 
    &= \left(1 + \sigma^2 - \frac{p}{D}\right) \bx_0^\top \E \left[\bX_p^\dagger \bX_p^{\dagger\top} \right] \bx_0\\
    &= \left(1 + \sigma^2 - \frac{p}{D}\right) \bx_0^\top \E \left[ \left(\bX_p^\top \bX_p)^\dagger \bX_p^\top \bX_p \left(\bX_p^\top \bX_p\right)^{\dagger\top}\right)\right] \bx_0 \\
    &= \left(1 + \sigma^2 - \frac{p}{D}\right) \bx_0^\top \E \left[ \left(\bX_p^\top \bX_p\right)^\dagger \right] \bx_0.
\end{align*}

where $\left(\bX_p^\top \bX_p\right)^\dagger$ has the generalized inverse Wishart distribution with expectation equal to $\E\left[\left(\bX_p^\top \bX_p\right)^\dagger\right] = \frac{n}{p}\frac{1}{p-n-1}\bI_p$ (Theorem 2.1 in \cite{cook2011mean}). Thus, we have:
\begin{align*}
    \E\left[ (\bx_0^\top \bX_p^\dagger \eta)^2 \right] = \left(\frac{n}{p}\right)\left(\frac{1 + \sigma^2 - \frac{p}{D}}{p-n-1} \right) ||\bx_0||^2.
\end{align*}
Adding this with the result for the first term gives the desired result.
When $m=1$, since we are in the overparameterized regime, $\bX_{p}$ is a fat matrix. Thus, the regressor memorizes the training data and the training error is equal to zero. $\bx_0$ is part of training set, and so $\widehat{y_0} = \bx_0^\top \beta + \epsilon$. Since $\beta \sim \cN\left(0, \frac{1}{D}{\Id}_{p}\right)$, we have that $\bx_0^\top \beta \sim \cN(0, \frac{1}{D}||\bx_0||^2)$. Since $\epsilon \sim \cN(0, \sigma^2)$, we have that $\widehat{y_0} = \bx_0^\top \beta + \epsilon \sim \cN(0, \frac{1}{D}||\bx_0||^2 + \sigma^2)$.

The probability distribution functions of the two Gaussians are then equal at $\pm \alpha$:
\begin{align*}
    \frac{1}{\sigma_0 \sqrt{2\pi}}\exp\left({-\frac{1}{2}\left(\frac{\alpha}{\sigma_0}\right)^2}\right) &= \frac{1}{\sigma_1 \sqrt{2\pi}}\exp\left(-\frac{1}{2}\left(\frac{\alpha}{\sigma_1}\right)^2\right) \\
    \frac{\sigma_1}{\sigma_0} &= \exp\left(-\frac{1}{2}\left(\left(\frac{\alpha}{\sigma_1}\right)^2 - \left(\frac{\alpha}{\sigma_0}\right)^2 \right)\right) \\
    \frac{\sigma_1}{\sigma_0} &= \exp\left(-\frac{\alpha^2}{2} \frac{\sigma_0^2 - \sigma_1^2}{\sigma_0^2\sigma_1^2}\right) \\
    \log\left(\frac{\sigma_1}{\sigma_0}\right) &= -\frac{\alpha^2}{2} \frac{\sigma_0^2 - \sigma_1^2}{\sigma_0^2\sigma_1^2} \\
    \alpha &= \sqrt{\frac{2\sigma_0^2\sigma_1^2\log\left(\frac{\sigma_1}{\sigma_0}\right)}{\sigma_1^2 - \sigma_0^2}} \\
    \alpha &= \sqrt{\frac{\sigma_0^2\sigma_1^2\log\left(\frac{\sigma_1^2}{\sigma_0^2}\right)}{\sigma_1^2 - \sigma_0^2}}.
\end{align*}

The membership advantage is then derived by writing out the probabilities in Definition~\ref{definition:advantage} in terms of the Gaussian cumulative distribution functions, noting that the decision region switches at $\pm\alpha$.
\end{proof}

\begin{proof}[Proof of Lemma~\ref{lemma:42}]
The lemma follows identically to Theorem~\ref{theorem:asymptotic_distributions} with an additional additive $\bar\sigma^2$ to $\sigma_0^2$ due to the noise added in the $m=0$ case. The remainder follows by plugging in $p=D$ and applying Prop.~\ref{prop:mse_for_asymp} for the generalization error. 
\end{proof}

\subsection{Proof of Theorem~\ref{theorem:asymptotic_distributions_regularized}}
\begin{proof}[Proof of Theorem~\ref{theorem:asymptotic_distributions_regularized}]
Let the input be $\bx_{0,p} \in \mathbb{R}^{p}$.
Similar to the proof of theorem 
\ref{theorem:asymptotic_distributions}, we can write 
\begin{align*}
    \bX\beta + \epsilon = \bX_p\beta_p + \bX_{\bar{p}}\beta_{\bar{p}} + \epsilon = \bX_p\beta_p + \eta
\end{align*}
where $\eta = \bX_{\comp{p}}\bbeta_{\comp{p}} + \epsilon \sim \cN\left(0, \left(1+\sigma^2-\frac{p}{D}\right)\bI_n\right)$. Hence, we have
\begin{equation}
\label{eq:IABByridge}
    \widehat{y_0} = \bx_{0,p}^\top \left(\bX_{p}^\top \bX_p + n\lambda \Id_p\right)^{-1}\bX_p^\top (\bX_p\beta_p + \eta) = \bx_{0,p}^\top \bX_p^\top \left(\bX_p \bX_p^\top + n\lambda \Id_p\right)^{-1} (\bX_p\beta_p + \eta).
\end{equation}
First note that in the case $m=0$, we have
\begin{align}
\label{eq:IABByridgem0}
    \widehat{y_0} &= \bx_{0,p}^\top \left(\bX_p^\top \bX_p + n\lambda \Id_p\right)^{-1} \bX_p^\top(\bX_p\beta_p + \eta)\nonumber\\ 
    &=\bx_{0,p}^\top \left(\bX_p^\top \bX_p + n\lambda \Id_p\right)^{-1} \bX_p^\top\bX_p\beta_p + 
    \bx_{0,p}^\top \left(\bX_p^\top \bX_p + n\lambda \Id_p\right)^{-1} \bX_p^\top\eta.
\end{align}
Letting the sample covariance matrix $\widehat\bSigma \overset{\Delta}{=} (1/n)\bX_p^\top \bX_p$, 
the first term in \eqref{eq:IABByridgem0} can be written as 
\begin{align*}
    \bx_{0,p}^\top \left(\bX_p^\top \bX_p + n\lambda \Id_p\right)^{-1} \bX_p^\top\bX_p\beta_p &= \bx_{0,p}^\top \left(\widehat\bSigma + \lambda \Id_p\right)^{-1} \widehat\bSigma\beta_p \\ 
    &= \bx_{0,p}^\top \left(\widehat\bSigma + \lambda \Id_p\right)^{-1}\left(\widehat\bSigma + \lambda \Id_p - \lambda \Id_p\right) \beta_p\\
    &= \left(\bx_{0,p} - \lambda(\widehat\bSigma + \lambda \Id_p)^{-1}\bx_{0,p}\right)^\top\beta_p.
\end{align*}
Since $\beta_p \sim \mathcal (0, \frac{1}{D}\Id_D)$ using Lemma \ref{lemma:dotproddist}, this converges to a gaussian with zero mean and variance
\begin{align*}
\lim_{n \to \infty}\frac{1}{D}\left\| \bx_{0,p} - \lambda(\widehat\bSigma + \lambda \Id_p)^{-1}\bx_{0,p}\right\|_2^2 &=
\lim_{n \to \infty}\frac{1}{D} \Big\{\|\bx_{0,p}\|_2^2 - 2\lambda[ \bx_{0,p}^\top (\widehat\bSigma + \lambda \Id_p)^{-1} \bx_{0,p}] \\
&\quad\quad\quad\quad\quad\;\;+
\lambda^2 [\bx_{0,p}^\top (\widehat\bSigma + \lambda \Id_p)^{-2} \bx_{0,p}]\Big\}\\
&=\frac{\|\bx_{0,p}\|_2^2}{D} (1 - 2 \lambda g(-\lambda) +  \lambda^2 g^\prime(-\lambda))
\end{align*}
where for the second equality, we have used the fact that using Lemma \ref{lemma:rubiomestre} by setting $\bC_n = (1/n) \bx_{0,p}\bx_{0,p}^\top$, as $n \to \infty$, almost surely,
\begin{align*}
    \frac{1}{n}\bx_{0,p}^\top(\widehat\bSigma + \lambda \Id_p)^{-1}\bx_{0,p} \rightarrow \frac{1}{n}\|\bx_{0,p}\|_2^2 g(-\lambda), \quad\quad \frac{1}{n}\bx_{0,p}^\top(\widehat\bSigma + \lambda \Id_p)^{-2}\bx_{0,p} \rightarrow \frac{1}{n}\|\bx_{0,p}\|_2^2 g^\prime(-\lambda).
\end{align*}
For the second term in \eqref{eq:IABByridgem0}, using the rotationally invariance of gaussian distribution, without loss of generality, we 
can let $\eta$ to be $\be_1 \|\eta\|_2$, where $\be_1$ is the first standard unit vector. Now, note that
we have
\begin{align*}
    \|\eta\|_2\bx_{0,p}^\top  \left(\bX_p^\top \bX_p + n\lambda\Id_p\right)^{-1} \bX_p^\top \be_1 = \|\eta\|_2\bx_{0,p} ^\top \left(\bx_{1,p}\bx_{1,p}^\top +  \lambda n\Id_p + \sum_{i=2}^{n}\bx_{i,p}\bx_{i,p}^\top\right)^{-1}\bx_{1,p}
\end{align*}
where $\bx_{i,p}^\top \in \mathbb{R}^{p}$ is the $i$'th row of $\bX_p$.
Letting $\bA_\lambda \overset{\Delta}{=} \lambda\Id_p + \frac{1}{n}\sum_{i=2}^{n}\bx_{i,p}\bx_{i,p}^\top$, by using the 
Sherman-Morrison formula, we have
\begin{align*}
    \|\eta\|_2\bx_{0,p}^\top  \left(\bX_p^\top \bX_p + n\lambda\Id_p\right)^{-1} \bX_p^\top \be_1 &= 
    \|\eta\|_2\bx_{0,p}^\top \left(\bx_{1,p}\bx_{1,p}^\top + n\bA_\lambda\right)^{-1}\bx_{1,p}\\
    &= \frac{\|\eta\|_2}{n}\bx_{0,p}^\top \left(\bA_\lambda^{-1} - \frac{\bA_\lambda^{-1}\bx_{1,p}\bx_{1,p}^\top\bA_\lambda^{-1}}{n+\bx_{1,p}^\top\bA_\lambda^{-1}\bx_{1,p}}\right)\bx_{1,p}\\
    &=\frac{\|\eta\|_2}{n}\bx_{0,p}^\top\bA_\lambda^{-1}\bx_{1,p}\left(1 - \frac{\bx_{1,p}^\top\bA_\lambda^{-1}\bx_{1,p}}{ n+\bx_{1,p}^\top\bA_\lambda^{-1}\bx_{1,p}}\right)\\
    &= \|\eta\|_2\frac{\bx_{0,p}^\top\bA_\lambda^{-1}\bx_{1,p}}{n+\bx_{1,p}^\top\bA_\lambda^{-1}\bx_{1,p}}.
\end{align*}
Note that using Lemma \ref{lemma:rubiomestre} by setting $\bC_n = (1/n) \bx_{1,p}\bx_{1,p}^\top$ and $\bC_n = (1/n) \bx_{0,p}\bx_{0,p}^\top$, respectively, for $n,p \to \infty$, almost surely,
\begin{align*}
    \frac{1}{n}\bx_{1,p}^\top\bA_{\lambda}^{-1}\bx_{1,p} \rightarrow \gamma g(-\lambda), \quad\quad\quad\quad \frac{1}{n}\bx_{0,p}^\top\bA_{\lambda}^{-2}\bx_{0,p} \rightarrow \frac{\|\bx_{0,p}\|_2^2}{n} g^\prime(-\lambda).
\end{align*}
Thus, since $\bx_{1,p} \sim \mathcal (0, \Id_p)$, using Lemma \ref{lemma:dotproddist}, $\|\eta\|_2\bx_{0,p}^\top \left(\bX_p^\top \bX_p + n\lambda \Id_p\right)^{-1} \bX_p^\top \be_1$ converges 
to a gaussian with mean zero and variance 
\begin{align*}
    \lim_{n \to \infty}\frac{\|\eta\|_2^2}{n^2(1 + \gamma g(-\lambda))^2}  \|\bA_{\lambda}^{-1}\bx_{0,p}\|_2^2 
    &= \frac{\|\eta\|_2^2\|\bx_{0,p}\|_2^2}{n^2(1 + \gamma g(-\lambda))^2} g^\prime(-\lambda)\\
    &= \frac{\|\bx_{0,p}\|_2^2}{p} \frac{g^\prime(-\lambda)\gamma}{1 + \gamma g(-\lambda))^2}\left(\sigma^2 + 1 - \frac{p}{D}\right).
\end{align*}
Hence, by independence of $\beta_p$ and $\eta$, for $m=0$, as $n \to \infty$, such that $p/n = \gamma$, the output $\widehat{y_0}$ as in \eqref{eq:IABByridge}, converges in 
distribution to a gaussian with mean zero
and variance
\begin{align*}
    \frac{g^\prime(-\lambda)\gamma}{(1+g(-\lambda)\gamma)^2}\left(\sigma^2 + 1 - \frac{p}{D}\right)\frac{\|\bx_{0,p}\|_2^2}{p } +  (1 - 2 \lambda g(-\lambda) +  \lambda^2 g^\prime(-\lambda))\frac{\|\bx_{0,p}\|_2^2}{D}.
\end{align*}
Now consider the $m=1$ case where the input
belongs to training data. Without loss of generality, assume 
that the input is the first row of $\bX_p$, i.e. $\bx_0:=\bx_1$. Note that in this case for 
$\eta = \bX_{\comp{p}}\bbeta_{\comp{p}} + \epsilon$, we have
$\eta_i \sim \cN\left(0, \left(\sigma^2+\frac{\|\bx_{1,\bar{p}}\|_2^2}{D}\right)\bI_n\right)$, for $i=1$, $\eta_i  \sim \cN\left(0, \left(1+\sigma^2-\frac{p}{D}\right)\right)$, for $i = 2,3,\cdots, n$, and $\eta_i$'s are independent.
We have
\begin{align}
\label{eq:IABBy2t}
    \widehat{y_0} &= \bx_{1,p}^\top \left(\bX_p^\top \bX_p + n\lambda \Id_p\right)^{-1} \bX_p^\top(\bX_p\beta_p + \eta) \nonumber \\
    &= \bx_{1,p}^\top \left(\bX_p^\top \bX_p + n\lambda \Id_p\right)^{-1} \bX_p^\top\bX_p\beta_p + 
    \bx_{1,p}^\top \left(\bX_p^\top\bX_p  + n\lambda \Id_p\right)^{-1}\bX_p^\top \eta\nonumber\\
    &= \bx_{1,p}^\top \left(\widehat\bSigma + \lambda \Id_p\right)^{-1}\widehat\bSigma\beta_p + \bx_{1,p}^\top \left(\bX_p^\top\bX_p  + n\lambda \Id_p\right)^{-1}\bX_p^\top \eta.
\end{align}
The first term in \eqref{eq:IABBy2t} can be written as 
\begin{align*}
    \bx_{1,p}^\top \left(\widehat\bSigma + \lambda \Id_p\right)^{-1}\widehat\bSigma\beta_p &= \bx_{1,p}^\top \beta_p - \lambda\bx_{1,p}^\top\left(\widehat\bSigma+\lambda\Id_p\right)^{-1}\beta_p\\
    &= \bx_{1,p}^\top \beta_p - \lambda \bx_{1,p}^\top \left[\frac{1}{n} \bx_{1,p}\bx_{1,p}^\top + {\bA_\lambda} \right]^{-1} \beta_p\\
    &= \bx_{1,p}^\top \beta_p - \lambda\bx_{1,p}^\top\left[{\bA_\lambda}^{-1} - \frac{{\bA_\lambda}^{-1}\bx_{1,p}\bx_{1,p}^\top{\bA_\lambda}^{-1}}{n + \bx_{1,p}^\top{\bA_\lambda}^{-1}\bx_{1,p}}\right]\beta_p\\
    &= \bx_{1,p}^\top\left(\Id_p - \lambda{\bA_\lambda}^{-1}\right) \beta_p + \frac{\lambda \bx_{1,p}^\top{\bA_\lambda}^{-1}\bx_{1,p}\bx_{1,p}^\top{\bA_\lambda}^{-1}\beta_p}{n + \bx_{1,p}^\top{\bA_\lambda}^{-1}\bx_{1,p}}\\
    & = \bx_{1,p}^\top\left(\Id_p - \frac{\lambda}{1 + (1/n) \bx_{1,p}^\top{\bA_\lambda}^{-1}\bx_{1,p}}{\bA_\lambda}^{-1}\right) \beta_p
\end{align*}
Hence, since $\beta_i \sim \mathcal (0, \frac{1}{D}\Id_D)$, using Lemma \ref{lemma:dotproddist},
The first term in \eqref{eq:IABBy2t} converges to a gaussian with zero 
mean and variance
\small
\begin{align*}
\lim_{n \to \infty}\frac{1}{D}\left\|\widehat\bSigma \left(\widehat\bSigma + \lambda \Id_p\right)^{-1}\bx_{1,p}\right\|_2^2 &= \lim_{n \to \infty}\frac{1}{D}\left\|  \left(\Id_p - \frac{\lambda}{1 + (1/n) \bx_{1,p}^\top{\bA_\lambda}^{-1}\bx_{1,p}}{\bA_\lambda}^{-1}\right)\bx_{1,p}\right\|_2^2 \\
&=\frac{\|\bx_{1,p}\|_2^2}{D} \left(1 -  \frac{2\lambda g(-\lambda)}{1+\frac{\gamma\|\bx_{1,p}\|_2^2}{p} g(-\lambda)} +  \frac{\lambda^2 g^\prime(-\lambda)}{\left(1+\frac{\gamma\|\bx_{1,p}\|_2^2}{p} g(-\lambda)\right)^2}\right).
\end{align*}
\normalsize
where for the second equality we have used the fact that using Lemma \ref{lemma:rubiomestre} by setting $\bC_n = (1/n) \bx_{1,p}\bx_{1,p}^\top$, as $n \to \infty$, such that $ p/n = \gamma$, almost surely,
\begin{align*}
    \frac{1}{n}\bx_{1,p}^\top\bA_\lambda^{-1}\bx_{1,p} \rightarrow \frac{1}{n}\|\bx_{1,p}\|_2^2 g(-\lambda), \quad\quad\quad\quad \frac{1}{n}\bx_{1,p}^\top\bA_\lambda^{-2}\bx_{1,p} \rightarrow \frac{1}{n}\|\bx_{1,p}\|_2^2 g^\prime(-\lambda).
\end{align*}
Now consider the second term in \eqref{eq:IABBy2t}. It can be written as
\begin{align*}
    \bx_{1,p}^\top \left(\bX_p^\top\bX_p + n\lambda \Id_p\right)^{-1}\bX_p^\top\eta &= \bx_{1,p}^\top \left(\bX_p^\top\bX_p + n\lambda \Id_p\right)^{-1}\bx_{1,p}^\top\eta_1 \\
    &\quad+ \sum_{i=2}^n \bx_{1,p}^\top \left(\bX_p^\top\bX_p + n\lambda \Id_p\right)^{-1}\bx_{i,p}^\top\eta_i\\
    &= A\eta_1 + \sum_{i=2}^n B_i\eta_i.
\end{align*}
First consider
\begin{align*}
    A &= \frac{1}{n}\bx_{1,p}^\top \left(\frac{1}{n}\bx_{1,p}\bx_{1,p}^\top + \underbrace{\lambda \Id_p + \frac{1}{n}\sum_{i=2}^{n}\bx_{i,p}\bx_{i,p}^\top}_{\bA_\lambda} \right)^{-1}\bx_{i,p}\\
    &= \frac{1}{n}\bx_{1,p}^\top \left(\bA_{\lambda}^{-1} - \frac{\bA_{\lambda}^{-1}\bx_{1,p}\bx_{1,p}^\top\bA_{\lambda}^{-1}}{n + \bx_{1,p}^\top\bA_{\lambda}^{-1}\bx_{1,p}}\right)\bx_{1,p}\\
    &= \frac{1}{n} \bx_{1,p}^\top\bA_{\lambda}^{-1}\bx_{1,p} \left(1 - \frac{\bx_{1,p}^\top\bA_{\lambda}^{-1}\bx_{1,p}}{n + \bx_{1,p}^\top \bA_{\lambda}^{-1} \bx_{1,p}}\right)\\
    &= \frac{\bx_{1,p}^\top\bA_{\lambda}^{-1}\bx_{1,p}}{n + \bx_{1,p}^\top \bA_{\lambda}^{-1} \bx_{1,p}} \overset{a.s.}{\longrightarrow} \frac{\gamma g(-\lambda)(\left\|\bx_{1,p}\right\|_2^2/ p)}{1 + \gamma g(-\lambda)(\left\|\bx_{1,p}\right\|_2^2/ p)}.
\end{align*}
Now, consider 
\begin{align*}
    B_2 &= \frac{1}{n}\bx_{1,p}^\top \left(\frac{1}{n}\bU\bU^\top + \underbrace{\lambda \Id_p + \frac{1}{n}\sum_{i=3}^{n}\bx_{i,p}\bx_{i,p}^\top}_{\bA_{2,\lambda}} \right)^{-1}\bx_{i,p}; \quad \bU \overset{\Delta}{=} [\bx_{1,p}\;\;\bx_{2,p}]\\
    &= \frac{1}{n} \bx_{1,p}^\top \left[\bA_{2,\lambda}^{-1} - \underbrace{\bA_{2,\lambda}^{-1} \bU \left(n\lambda\Id_p + \bU^\top \bA_{2,\lambda}^{-1}\bU\right)^{-1}\bU^\top\bA_{2,\lambda}^{-1}}_{\bC_2} \right]\bx_{2,p}.
\end{align*}
We have
\begin{align*}
    \bC_2 &= [\bA_{2,\lambda}^{-1}\bx_{1,p}\;\;\bA_{2,\lambda}^{-1}\bx_{2,p}]\begin{bmatrix}
    n\lambda + \bx_{1,p}^\top \bA_{2,\lambda}^{-1}\bx_{1,p} & \bx_{1,p}^\top \bA_{2,\lambda}^{-1} \bx_{2,p} \\
    \bx{2,p}^\top \bA_{2,\lambda}^{-1} \bx_{1,p} & n\lambda + \bx_{2,p}^\top \bA_{2,\lambda}^{-1} \bx_{2,p}
    \end{bmatrix}^{-1}
    \begin{bmatrix}
    \bx_{1,p}^\top \bA_{2,\lambda}^{-1}\\
    \bx_{2,p}^\top \bA_{2,\lambda}^{-1}
    \end{bmatrix}\\
    &= \left[\underbrace{\left(n\lambda + \bx_{1,p}^\top \bA_{2,\lambda}^{-1}\bx_{1,p}\right)\left(n\lambda + \bx_{2,p}^\top \bA_{2,\lambda}^{-1}\bx_{2,p}\right) - \bx_{1,p}^\top \bA_{2,\lambda}^{-1} \bx_{2,p}\bx{2,p}^\top \bA_{2,\lambda}^{-1} \bx_{1,p}}_{D_2}\right]^{-1}\widetilde\bC_2.
\end{align*}
We have
\begin{align*}
\widetilde\bC_2 &= [\bA_{2,\lambda}^{-1}\bx_{1,p}\;\;\bA_{2,\lambda}^{-1}\bx_{2,p}]\begin{bmatrix}
    n\lambda + \bx_{2,p}^\top \bA_{2,\lambda}^{-1}\bx_{2,p} & -\bx_{1,p}^\top \bA_{2,\lambda}^{-1} \bx_{2,p} \\
    -\bx_{2,p}^\top \bA_{2,\lambda}^{-1} \bx_{1,p} & n\lambda + \bx_{1,p}^\top \bA_{2,\lambda}^{-1} \bx_{1,p}
    \end{bmatrix}
    \begin{bmatrix}
    \bx_{1,p}^\top \bA_{2,\lambda}^{-1}\\
    \bx_{2,p}^\top \bA_{2,\lambda}^{-1}
    \end{bmatrix}\\
    &= \left(n\lambda + \bx_{2,p}^\top \bA_{2,\lambda}^{-1}\bx_{2,p}\right) \bA_{2,\lambda}^{-1}\bx_{1,p}\bx_{1,p}^\top \bA_{2,\lambda}^{-1} - \bx_{1,p}^\top \bA_{2,\lambda}^{-1} \bx_{2,p} \bA_{2,\lambda}^{-1}\bx_{1,p}\bx_{2,p}^\top \bA_{2,\lambda}^{-1}\\
    &\quad- \bx_{2,p}^\top \bA_{2,\lambda}^{-1} \bx_{1,p}\bA_{2,\lambda}^{-1}\bx_{2,p}\bx_{1,p}^\top \bA_{2,\lambda}^{-1} + \left(n\lambda + \bx_{1,p}^\top \bA_{2,\lambda}^{-1} \bx_{1,p}\right)\bA_{2,\lambda}^{-1}\bx_{2,p}\bx_{2,p}^\top \bA_{2,\lambda}^{-1}.
\end{align*}
Thus, 
\small
\begin{align*}
    B_2 &= \frac{1}{n} \bx_{1,p}^\top \left[\bA_{2,\lambda}^{-1} - \frac{\widetilde\bC_2}{D_2}\right]\bx_{2,p}\\
    &= \frac{\bx_{1,p}^\top}{n}\Bigg\{\bA_{2,\lambda}^{-1} - \left[\left(n\lambda + \bx_{1,p}^\top \bA_{2,\lambda}^{-1}\bx_{1,p}\right)\left(n\lambda + \bx_{2,p}^\top \bA_{2,\lambda}^{-1}\bx_{2,p}\right) - \bx_{1,p}^\top \bA_{2,\lambda}^{-1} \bx_{2,p}\bx_{2,p}^\top \bA_{2,\lambda}^{-1} \bx_{1,p}\right]^{-1}\\
    &\quad\quad\quad\quad\Big[\left(n\lambda + \bx_{2,p}^\top \bA_{2,\lambda}^{-1}\bx_{2,p}\right) \bA_{2,\lambda}^{-1}\bx_{1,p}\bx_{1,p}^\top \bA_{2,\lambda}^{-1} - \bx_{1,p}^\top \bA_{2,\lambda}^{-1} \bx_{2,p} \bA_{2,\lambda}^{-1}\bx_{1,p}\bx_{2,p}^\top \bA_{2,\lambda}^{-1}\\
    &\quad\quad\quad\quad\quad- \bx_{2,p}^\top \bA_{2,\lambda}^{-1} \bx_{1,p}\bA_{2,\lambda}^{-1}\bx_{2,p}\bx_{1,p}^\top \bA_{2,\lambda}^{-1} + \left(n\lambda + \bx_{1,p}^\top \bA_{2,\lambda}^{-1} \bx_{1,p}\right)\bA_{2,\lambda}^{-1}\bx_{2,p}\bx_{2,p}^\top \bA_{2,\lambda}^{-1}\Big]\Bigg\}\bx_{2,p}.
\end{align*}
\normalsize
using Lemma \ref{lemma:rubiomestre} by setting $\bC_n = (1/n) \bx\bx^\top$, as $n,p \to \infty$, such that $ p/n = \gamma$, almost surely, 
\begin{align*}
 \frac{1}{n}\bx^\top\bA_{\lambda}^{-2}\bx \rightarrow \frac{\|\bx\|_2^2}{n} g(-\lambda).
\end{align*}
Hence, letting $n \to \infty$, $B_2$ converges weakly to 
\small
\begin{align*}
    B_2^\prime &= \frac{1}{n}\bx_{1,p}^\top\bA_{2,\lambda}^{-1}\Bigg\{\bA_{2,\lambda} - \left[\left(\lambda + g(-\lambda)\frac{\left\|\bx_{1,p}\right\|_2^2}{n}\right)\left(\lambda + g(-\lambda)\frac{\left\|\bx_{2,p}\right\|_2^2}{n}\right) -  \left(\frac{\bx_{1,p}^\top\bA_{2,\lambda}^{-1}\bx_{2,p}}{n}\right)^2\right]^{-1}\\
    &\quad\quad\quad\quad\quad\quad\;\;\Bigg[\left(\lambda + g(-\lambda)\frac{\left\|\bx_{2,p}\right\|_2^2}{n}\right)\frac{\bx_{1,p}\bx_{1,p}^\top}{n} - \left(\frac{\bx_{1,p}^\top\bA_{2,\lambda}^{-1}\bx_{2,p}}{n}\right) \left(\frac{\bx_{1,p}\bx_{2,p}^\top}{n}+ \frac{\bx_{2,p}\bx_{1,p}^\top}{n}\right)  \\ &\quad\quad\quad\quad\quad\quad\quad + \left(\lambda +  g(-\lambda)\frac{\left\|\bx_{1,p}\right\|_2^2}{n}\right)\frac{\bx_{2,p}\bx_{2,p}^\top}{n}\Bigg]\Bigg\}\bA_{2,\lambda}^{-1}\bx_{2,p}\\
    &= \frac{1}{n}\bx_{1,p}^\top\bA_{2,\lambda}^{-1}\bx_{2,p}\frac{\lambda^2}{\left(\lambda + g(-\lambda)\frac{\left\|\bx_{1,p}\right\|_2^2}{n}\right)\left(\lambda + g(-\lambda)\frac{\left\|\bx_{2,p}\right\|_2^2}{n}\right) -  \left(\frac{\bx_{1,p}^\top\bA_{2,\lambda}^{-1}\bx_{2,p}}{n}\right)^2}.
\end{align*}
\normalsize
Note that by LLN, $(1/n)\left\|\bx_{2,p}\right\|_2^2 \to \gamma$ and $(1/n^2)\left(\bx_{1,p}^\top\bA_{2,\lambda}^{-1}\bx_{2,p}\right)^2 \to 0$. Further, since $\bx_{2,p} \sim \mathcal (0, \Id_p)$, using Lemma \ref{lemma:dotproddist}, as $n\to \infty$, $(1/\sqrt{n})\bx_{1,p}^\top\bA_{2,\lambda}^{-1}\bx_{2,p}$ converges to a gaussian 
with mean zero and variance 
\begin{align*}
    \lim_{n \to \infty}  \frac{1}{n}\|\bA_{2,\lambda}^{-1}\bx_{1,p}\|_2^2 = \lim_{n\to \infty} \frac{1}{n} \bx_{1,p}^\top \bA_{2,\lambda}^{-2} \bx_{1,p} = \frac{1}{n}\|\bx_{1,p}\|_2^2 g^\prime(-\lambda)
\end{align*}
where for the second equality we have used Lemma \ref{lemma:rubiomestre} with $\bC_{n} = (1/n) \bx_{1,p}\bx_{1,p}^\top$. Hence, $B_2$ converges weakly to
\begin{align*}
    \frac{\widetilde\sigma}{n}\bx_{1,p}^\top\bA_{2,\lambda}^{-1}\bx_{2,p} \xrightarrow{\mathcal D} \mathcal N\left(0, \frac{\gamma\widetilde\sigma^2 \left\|\bx_{1,p}\right\|_2^2}{pn}g^\prime(-\lambda) \right)
\end{align*}
where
\begin{align*}
    \widetilde\sigma = \frac{\lambda^2}{\lambda^2 + \gamma\lambda g(-\lambda)\left[1 + \frac{\left\|\bx_{1,p}\right\|_2^2}{p}\right]+\gamma^2 m^2(-\lambda)\frac{\left\|\bx_{1,p}\right\|_2^2}{p}}.
\end{align*}
By symmetry over $i$, $\sum_{i=2}^n B_i^2 = (n-1)B_2^2$ that converges almost surely to $\frac{\gamma\widetilde\sigma^2 \left\|\bx_{1,p}\right\|_2^2}{p}g^\prime(-\lambda)$. Therefore using Lemma \ref{lemma:dotproddist}
\begin{align*}
    \sum_{i=2}^n B_i\eta_i \xrightarrow{\mathcal D} \mathcal N\left(0,\left(\sigma^2+1-\frac{p}{D}\right)\gamma\tilde\sigma^2\frac{\left\|\bx_{1,p}\right\|_2^2g^\prime(-\lambda)}{p}\right).
\end{align*}
Thus, by independence of $\beta_p$ and $\eta_i$'s,
as $n,p \to \infty$ the output $y$ converges in distribution 
to a gaussian with zero mean and variance
\begin{align*}
     &\left[\left(\frac{\lambda^2}{(\lambda + \gamma g(-\lambda))(\lambda + \gamma\frac{\|\bx_{1,p}\|_2^2}{p} g(-\lambda))}\right)^2\gamma g^\prime(-\lambda)\frac{\|\bx_{1,p}\|_2^2}{p}\right]\left(\sigma^2 + 1- \frac{p}{D}\right) \\&\quad+ \left(\frac{\gamma g(-\lambda)\frac{\left\|\bx_{1,p}\right\|_2^2}{p}}{1+\gamma g(-\lambda)\frac{\left\|\bx_{1,p}\right\|_2^2}{p}}\right)^2 \left(\sigma^2+\frac{\|\bx_{1,\bar{p}}\|_2^2}{D}\right)\\
     &\quad+  \left(1 -  \frac{2\lambda g(-\lambda)}{1+\frac{\gamma\|\bx_{1,p}\|_2^2}{p} g(-\lambda)} +  \frac{\lambda^2 g^\prime(-\lambda)}{\left(1+\frac{\gamma\|\bx_{1,p}\|_2^2}{p} g(-\lambda)\right)^2}\right)\frac{\|\bx_{1,p}\|_2^2}{D},
\end{align*}
which completes the proof.
\end{proof}

\begin{figure}[t]
    \centering
    \includegraphics[width=0.7\textwidth]{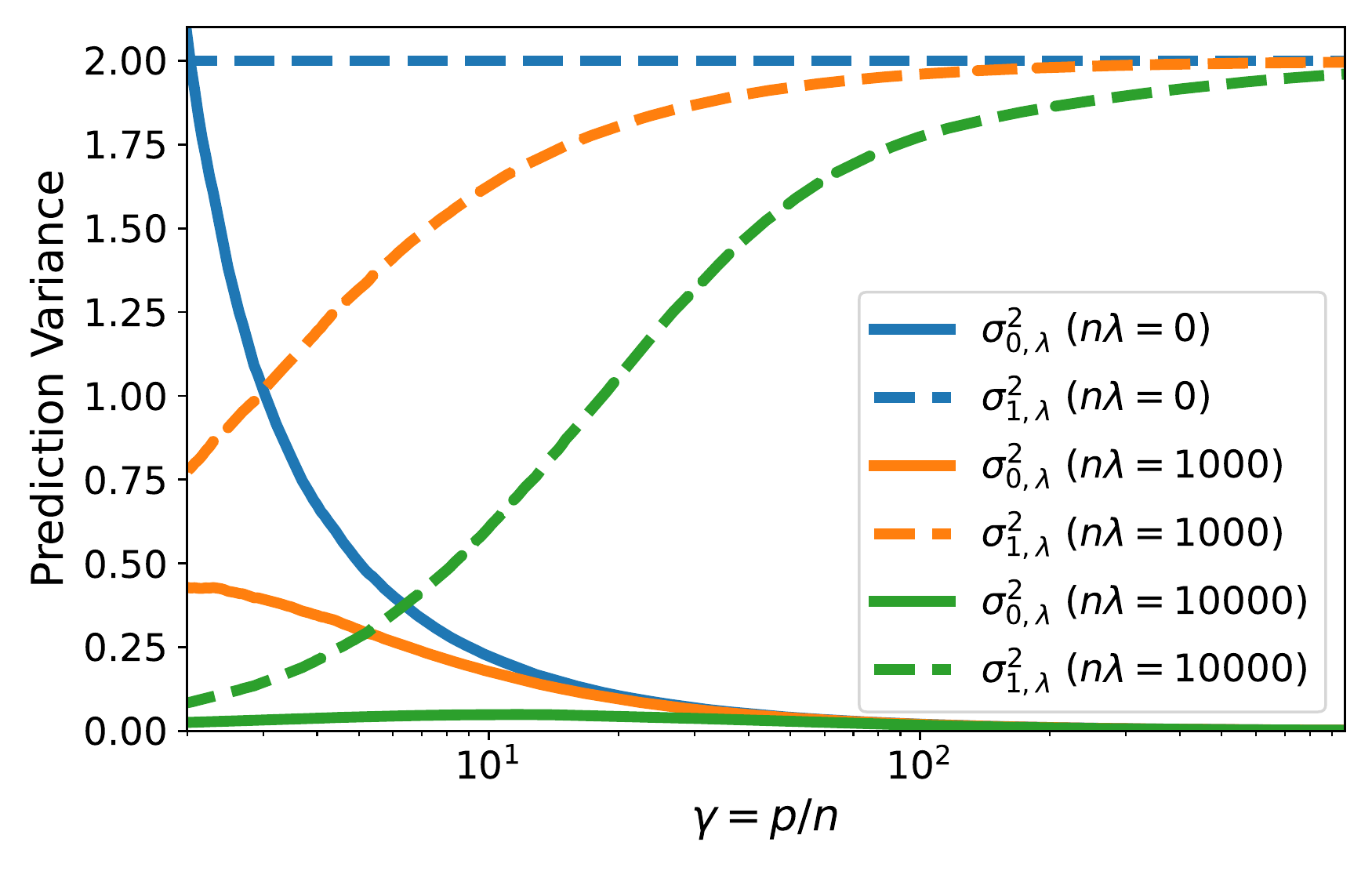}
    \caption{Theoretical variances of the predictions $\widehat{y_0}$ by ridge regularized linear regression models for the Gaussian data setting with $n=10^3$, $D=10^7$, and $\sigma=1$ on a single sampled $\bx_0$ for when $\bx_0$ is a test point ($\sigma^2_{0, \lambda}$) and when $\bx_0$ is a training point ($\sigma^2_{1, \lambda}$) for different amounts of regularization $\lambda$. While increased ridge regularization decreases the variance $\sigma^2_{0, \lambda}$ on training point predictions, it also decreases the variance $\sigma^2_{1, \lambda}$ for test points in such a way that the two distributions become easier to distinguish. As such, membership inference is easier for ridge regularized models in this setting.}
    \label{fig:ridge_variances}
\end{figure}

\section{Posterior Distributions in Non-Asymptotic Regime}

Let $f_{a \mid b}$ denote the probability density function of a random variable $a$ conditioned on $b$. The following lemma derives the non-asymptotic probability densities of the prediction output of minimum norm least squares, conditioned on the $m=0$ and $m=1$ events and the choice of test point $\bx_0$. For a matrix $\bX\in \R^{n\times D}$ and $p \leq D$, let $\bX_p$ denote the submatrix of the first $p$ columns of $\bX$. For a vector $\bx \in \R^D$, let $\bx_p\in\R^{p}$ be defined accordingly. 

\begin{lemma}\label{lem:nonasymp_pdf}
Let $\hat{\beta}$ denote the minimum norm least squares interpolator computed from a random design matrix $\bX\in \R^{n\times D}$ and data $\by$. 
Conditioned on $n < p \leq D$ and on $\bx_0$, we have that $\bx_{0, p}^\top \hat{\beta} \mid \bx_0, \{m=1\} \sim {\cal N}\left(0, \sigma_1^2\right)$, where $\sigma_1$ is defined as in Theorem~$\ref{theorem:asymptotic_distributions}$. Furthermore, 
\small
\begin{align*}
    &f_{\bx_{0, p}^\top \hat{\beta} \mid \bx_0, \{m=0\}}(x)  \\
    &\quad= D^{\frac{D}{2}}\displaystyle\bigintss\limits_{\R^{n\times D}}  \displaystyle\bigintss\limits_{\R^D} \frac{\exp\left[-\tfrac{1}{2}\left[\left(\tfrac{x - \bx_{0,p}^\top\bX_p^\top(\bX_p\bX_p^\top)^{-1}\bX\beta}{\sigma\|\bx_{0,p}\|_{\bX_p(\bX_p\bX_p^\top)^{-2}\bX_p}}  \right)^2 + D\beta^\top \beta + {\rm{Tr}}\left(\bX^\top\bX
    \right)\right]\right]}{\sigma(2\pi)^{\frac{nD+D+1}{2}}\|\bx_{0,p}\|_{\bX_{p}(\bX_{p}\bX_{p}^\top)^{-2}\bX_{p}}}\,d\beta d\bX.
\end{align*}
\normalsize
\end{lemma}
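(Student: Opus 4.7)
\textbf{Proof plan for Lemma~\ref{lem:nonasymp_pdf}.}

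The plan is to handle the two cases separately, exploiting the fact that in the overparameterized regime $n < p \leq D$ the matrix $\bX_p$ is almost surely of full row rank, so $\bX_p^\dagger = \bX_p^\top(\bX_p\bX_p^\top)^{-1}$ and the minimum norm solution interpolates the training data exactly, i.e.\ $\bX_p\hat\beta = \by$.

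For the $m=1$ case, I would note that since $\bx_0$ has been substituted as some row $\bx_k$ of $\bX$ with corresponding response $y_0 = \bx_0^\top\beta + \epsilon_0$, the interpolation property gives $\bx_{0,p}^\top\hat\beta = y_0$ directly. Conditional on $\bx_0$, the randomness in $y_0$ comes only from $\beta \sim \cN(0, \tfrac{1}{D}\bI_D)$ (independent of $\bx_0$) and $\epsilon_0 \sim \cN(0, \sigma^2)$, giving a centered Gaussian with variance $\|\bx_0\|^2/D + \sigma^2 = \sigma_1^2$. This is the easy half.

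For the $m=0$ case, I would first condition on the triple $(\bX, \beta, \bx_0)$ and treat the prediction
\[
\bx_{0,p}^\top\hat\beta = \bx_{0,p}^\top\bX_p^\top(\bX_p\bX_p^\top)^{-1}(\bX\beta + \epsilon)
\]
as an affine function of $\epsilon \sim \cN(0, \sigma^2 \bI_n)$. This yields a Gaussian in $\epsilon$ with mean $\bx_{0,p}^\top\bX_p^\top(\bX_p\bX_p^\top)^{-1}\bX\beta$ and variance $\sigma^2\,\bx_{0,p}^\top\bX_p^\top(\bX_p\bX_p^\top)^{-2}\bX_p\bx_{0,p} = \sigma^2\|\bx_{0,p}\|_{\bX_p(\bX_p\bX_p^\top)^{-2}\bX_p}^2$, which produces the inner Gaussian factor in the integrand. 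Unconditioning by integrating against the joint density of $(\bX,\beta)$ — where $\beta$ contributes the factor $(D/2\pi)^{D/2}\exp(-D\beta^\top\beta/2)$ and the i.i.d.\ standard normal entries of $\bX$ contribute $(2\pi)^{-nD/2}\exp(-\tfrac{1}{2}\Tr(\bX^\top\bX))$ — then assembling normalization constants into $D^{D/2}/(2\pi)^{(nD+D+1)/2}$ gives exactly the stated expression.

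The main obstacle here is essentially bookkeeping rather than a deep idea: one must keep the conditioning order consistent (noting that the distribution of $\bX$ used for density purposes is over all $n$ rows since in the $m=0$ case the test point $\bx_0$ is not a row of $\bX$), verify that the event $\{\bX_p\bX_p^\top \text{ invertible}\}$ has probability one so that the pseudoinverse formula is valid inside the integral, and collect the Gaussian normalization constants correctly. There is no closed-form simplification because the matrix quadratic form $\|\bx_{0,p}\|_{\bX_p(\bX_p\bX_p^\top)^{-2}\bX_p}$ appears both in the conditional variance and in the conditional mean in a nontrivially coupled way; this is why the lemma is stated as an integral representation rather than a named distribution.
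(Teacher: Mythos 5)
Your proposal is correct and follows essentially the same route as the paper's proof: handle $m=1$ via the exact-interpolation property to get the closed-form Gaussian with variance $\sigma_1^2$, and for $m=0$ compute the conditional Gaussian law of $\bx_{0,p}^\top\hat\beta$ given $(\bX,\beta,\bx_0)$ (mean $\bx_{0,p}^\top\bX_p^\top(\bX_p\bX_p^\top)^{-1}\bX\beta$, variance $\sigma^2\|\bx_{0,p}\|^2_{\bX_p(\bX_p\bX_p^\top)^{-2}\bX_p}$) and then marginalize against the Gaussian densities of $\bX$ and $\beta$, collecting the normalization constants into $D^{D/2}/(2\pi)^{(nD+D+1)/2}$. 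Your added remarks about the almost-sure invertibility of $\bX_p\bX_p^\top$ and the consistency of the conditioning are correct bookkeeping points that the paper leaves implicit.
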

\begin{remark}
While the density of $\bx_{0, p}^\top \hat{\beta} \mid \bx_0, \{m=0\}$ cannot be written in a closed form, one may easily sample according to it, by first, sampling random $\bX$, $\beta$ and then computing the minimum norm least squares interpolator. 
\end{remark}

\begin{proof}[Proof of Lemma~\ref{lem:nonasymp_pdf}]
\sloppy
Recall that conditioned on the design matrix $\bX$ and true coefficients $\beta$, the labels $\by$ follow $\by \mid \bX, \beta \sim {\cal N}(\bX\beta, \sigma^2 \bI_n)$. Then, the minimum norm least squares solution $\hat{\beta}$ using the first $p$ features follows 
\[
\hat{\beta} \mid \bX, \beta \sim {\cal N}\left(\bX_{p}^\top(\bX_{p}\bX_{p}^\top)^{-1}\bX\beta, \sigma^2 \bX_{p}^\top(\bX_{p}\bX_{p}^\top)^{-2}\bX_{p}\right).
\]
Hence, for the $m=0$ case where a fresh point $\bx_0$ is sampled, we have that the distribution of the model output conditioned on the design matrix $\bX$ and true coefficients $\beta$ is 
\[
\bx_{0, p}^\top \hat{\beta} \mid \bX, \beta, \bx_0, \{m=0\} \sim {\cal N}\left(\bx_{0,p}^\top\bX_{p}^\top(\bX_{p}\bX_{p}^\top)^{-1}\bX\beta, \sigma^2 \|\bx_{0, p}\|_{\bX_p^\top(\bX_{p}\bX_{p}^\top)^{-2}\bX_{p}}^2\right)
\]
for $\|\bx\|_{\bA} := \sqrt{\bx^\top\bA\bx}$ for any semidefinite matrix $\bA$ where we have additionally conditioned over any randomness in the choice of $\bx_0$.

In the $m=1$ case, where $\bx_0$ is sampled uniformly from the rows of $\bX$, we have that $\bx_{0, p}^\top \hat{\beta} = y_0 = \bx_0^\top \beta + \epsilon$, the associated label for $\bx_0$ since the linear regressor interpolates the training data. Hence
\[
\bx_{0, p}^\top \hat{\beta} \mid \bX, \beta, \bx_0, \{m=1\}\equiv\bx_{0, p}^\top \hat{\beta} \mid \bx_0, \{m=1\} \sim {\cal N}\left(0, \frac{\|\bx_0\|^2}{D} + \sigma^2\right) 
\]
Let $f_{\bx_{0, p}^\top \hat{\beta} \mid \bX, \beta, \bx_0, \{m=0\}}$ denote the pdf of the random variable $\bx_p^\top \hat{\beta} \mid \bX, \beta, \bx_0, \{m=0\}$ and $f_{\bx_{0, p}^\top \hat{\beta} \mid \bX, \beta, \bx_0, \{m=1\}}$ be defined similarly. 
Let $f_{\bX}$ denote the density of $\bX$, a standard matrix-normal random variable, and let $f_\beta$ denote the density of $\beta \sim \cN\left(0, \frac{1}{D}\bI_D\right)$. 
Then, we have that 
\small
\begin{align*}
    &f_{\bx_{0, p}^\top \hat{\beta} \mid \bx_0, \{m=0\}}(x) \\
    & = \int_{\R^{n\times D}} \int_{\R^D} f_{\bx_{0, p}^\top \hat{\beta} \mid \bX, \beta, \bx_0, \{m=0\}} f_{\beta} f_{\bX} \, d\beta d\bX \\
    & = D^{\frac{D}{2}}\displaystyle\bigintss\limits_{\R^{n\times D}}  \displaystyle\bigintss\limits_{\R^D} \frac{\exp\left[-\tfrac{1}{2}\left[\left(\tfrac{x - \bx_{0,p}^\top\bX_p^\top(\bX_p\bX_p^\top)^{-1}\bX\beta}{\sigma\|\bx_{0,p}\|_{\bX_p(\bX_p\bX_p^\top)^{-2}\bX_p}}  \right)^2 + D\beta^\top \beta + {\rm{Tr}}\left(\bX^\top\bX
    \right)\right]\right]}{\sigma(2\pi)^{\frac{nD+D+1}{2}}\|\bx_{0, p}\|_{\bX_{p}(\bX_{p}\bX_{p}^\top)^{-2}\bX_{p}}}\,d\beta d\bX.
\end{align*}
\normalsize

\end{proof}

\begin{lemma}\label{lem:lem:nonasymp_pdf_regularized}
Let $\hat{\beta}_\lambda = (\bX_p^\top\bX_p + n\lambda I)^{-1}\bX_p^\top \by$ denote ridge regularized least squares estimator computed from random design matrix $\bX\in \R^{n\times D}$, data $\by$, and subset of first $p$ features.  
Conditioned on the choice of test point $\bx_0$, we have that in the $m=0$ case, where a fresh test point is drawn from the data distribution,
\scriptsize
\begin{align*}
    &f_{\bx_{0,p}^\top \hat{\beta} \mid \bx_0, \{m=0\}}(x) = \frac{D^{\frac{D}{2}}}{\sigma(2\pi)^{\frac{nD+D+1}{2}}} \\
    &\times\displaystyle\bigintss\limits_{\R^{n\times D}}\displaystyle\bigintss\limits_{\R^D}  \frac{\exp\left[-\frac{1}{2}\left[\left(\tfrac{x - \bx_{0,p}^\top(\bX_{p}^\top\bX_{p} + n\lambda\bI)^{-1}\bX_p^\top\bX\beta}{\sigma\|\bx_{0, p}\|_{(\bX_{p}^\top\bX_{p}+n\lambda\bI)^{-1}\bX_{p}^\top \bX_p (\bX_p^\top \bX_p +n\lambda\bI)^{-1}}}  \right)^2 + D\beta^\top \beta + {\rm{Tr}}\left(\bX^\top\bX
    \right)\right]\right]}{\|\bx_{0, p}\|_{(\bX_{p}^\top\bX_{p}+n\lambda\bI)^{-1}\bX_{p}^\top \bX_p (\bX_p^\top \bX_p +n\lambda\bI)^{-1}}}\,d\beta d\bX.
\end{align*}
\normalsize
Furthermore, conditioned on $m=1$ when $\bx_0$ is a row of $\bX$
we have that 
\scriptsize
\begin{align*}
    &f_{\bx_{0, p}^\top \hat{\beta} \mid \bx_0, \{m=1\}}(x)= \frac{D^{\frac{D}{2}}}{\sigma(2\pi)^{\frac{nD+1}{2}}}\hspace{-0.45cm} \\
    &\times  \displaystyle\bigintss\limits_{\R^{(n-1)\times D}}\hspace{-0.10cm}\displaystyle\bigintss\limits_{\R^D} 
     \hspace{-0.15cm} 
    \frac{\exp\left[-\frac{1}{2}\left[\left(\tfrac{x - \bx_{0,p}^\top(\bX_{p}^\top\bX_{p} + n\lambda\bI)^{-1}\bX_p^\top\bX\beta}{\sigma\|\bx_{0,p}\|_{(\bX_{p}^\top\bX_{p}+n\lambda\bI)^{-1}\bX_{p}^\top \bX_p (\bX_p^\top \bX_p +n\lambda\bI)^{-1}}}  \right)^2 + D\beta^\top \beta+ {\rm{Tr}}\left(\tilde{\bX}^\top\tilde{\bX}
    \right)\right]\right]}{\|\bx_{0,p}\|_{(\bX_{p}^\top\bX_{p}+n\lambda\bI)^{-1}\bX_{p}^\top \bX_p (\bX_p^\top \bX_p +n\lambda\bI)^{-1}}}d\beta d\tilde{\bX}.
\end{align*}
\normalsize
where without loss of generality, we take $\bx_0$ to be the first row of $\bX$ and $\tilde{\bX}$ to denote the matrix of the final $n-1$ rows of $\bX$.
\end{lemma}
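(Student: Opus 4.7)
The plan is to emulate the proof of Lemma \ref{lem:nonasymp_pdf} almost verbatim, replacing the minimum-norm map $\bX_p^\dagger$ by the ridge operator $M := (\bX_p^\top\bX_p + n\lambda\bI)^{-1}\bX_p^\top$. The two substantive differences from the unregularized case are (i) the linear map from $\by$ to $\hat{\beta}_\lambda$ is now $M$, and (ii) in the $m=1$ case the ridge estimator no longer interpolates the training data, so the simple closed-form $\bx_{0,p}^\top\hat{\beta}_\lambda = y_0$ used in Lemma \ref{lem:nonasymp_pdf} is unavailable and we must carry out an honest marginalization.

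For $m=0$, first condition on $\bX$ and $\beta$. Then $\by\mid\bX,\beta\sim\cN(\bX\beta,\sigma^2\bI_n)$, and since $\hat{\beta}_\lambda=M\by$ is affine in $\by$, $\hat{\beta}_\lambda\mid\bX,\beta\sim\cN(M\bX\beta,\sigma^2 MM^\top)$ with $MM^\top=(\bX_p^\top\bX_p+n\lambda\bI)^{-1}\bX_p^\top\bX_p(\bX_p^\top\bX_p+n\lambda\bI)^{-1}$. Projecting onto $\bx_{0,p}$ yields a scalar Gaussian with the mean $\bx_{0,p}^\top M\bX\beta$ and variance $\sigma^2\|\bx_{0,p}\|^2_{MM^\top}$ that appear in the statement. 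Marginalizing this conditional density against the independent priors $\beta\sim\cN(0,\tfrac{1}{D}\bI_D)$ and $\bX$ with i.i.d.\ standard normal entries, and collecting the three normalization constants $(2\pi)^{-1/2}$ (output density), $(2\pi)^{-D/2}D^{D/2}$ (from $f_\beta$), and $(2\pi)^{-nD/2}$ (from $f_\bX$), delivers the first formula.

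For $m=1$, by exchangeability of the rows of $\bX$ we may without loss of generality take $\bx_0$ to be the first row; write $\tilde{\bX}\in\R^{(n-1)\times D}$ for the remaining rows, which are independent of $\bx_0$. Unlike in Lemma \ref{lem:nonasymp_pdf}, the ridge solution does not interpolate, so I do not reduce to the noise-only Gaussian $\cN(0, \|\bx_0\|^2/D + \sigma^2)$. Instead, I condition on $(\tilde{\bX},\beta,\bx_0)$, which fixes $\bX$ (as the stack of $\bx_0^\top$ on top of $\tilde{\bX}$) and reduces back to the same scalar-Gaussian calculation as in the $m=0$ case, now with the concrete $\bX$ built from $\bx_0$ and $\tilde{\bX}$. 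The marginal is then obtained by integrating only over $\tilde{\bX}$ and $\beta$; the single ``missing row'' of $\bX$ relative to the $m=0$ case is reflected in $\mathrm{Tr}(\tilde{\bX}^\top\tilde{\bX})$ in the exponent and in the normalization $(2\pi)^{(nD+1)/2}$ instead of $(2\pi)^{(nD+D+1)/2}$, since the $(n-1)D$ entries of $\tilde{\bX}$ replace the $nD$ entries of $\bX$.

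The main obstacle is bookkeeping rather than technique: in the $m=1$ case one must be careful that $\bx_0$ still appears inside the integrand through the full $\bX$ (used in $M$ and in the mean $M\bX\beta$), even though the integration variable is only $\tilde{\bX}$. Once this distinction is tracked and the normalization constants from the three Gaussian densities are collected correctly, both formulas follow immediately; no new tools beyond those used in Lemma \ref{lem:nonasymp_pdf} are required.
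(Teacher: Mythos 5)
Your proposal is correct and follows essentially the same route as the paper's proof: condition on $(\bX,\beta)$ to get the scalar Gaussian for $\bx_{0,p}^\top\hat{\beta}_\lambda$ with mean $\bx_{0,p}^\top(\bX_p^\top\bX_p+n\lambda\bI)^{-1}\bX_p^\top\bX\beta$ and variance $\sigma^2\|\bx_{0,p}\|^2_{(\bX_p^\top\bX_p+n\lambda\bI)^{-1}\bX_p^\top\bX_p(\bX_p^\top\bX_p+n\lambda\bI)^{-1}}$, then marginalize over $(\bX,\beta)$ for $m=0$ and over $(\tilde{\bX},\beta)$ only for $m=1$. Your observation that the interpolation shortcut of Lemma~\ref{lem:nonasymp_pdf} is unavailable in the regularized $m=1$ case, and your accounting of the normalization exponents $(nD+D+1)/2$ versus $(nD+1)/2$, both match the paper exactly.
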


\begin{remark}
As in the case of Lemma~\ref{lem:nonasymp_pdf}, one can efficiently sample from the above distribution by first drawing a Gaussian random matrix $\bX$, the Gaussian random vector $\beta$, the Bernoulli random variable $m$, and then either a new test point $\bx_0$ or a row of $\bX$ and learning the ridge-regularized estimator $\hat{\beta}_\lambda$. 
\end{remark}

\begin{proof}[Proof of Lemma~\ref{lem:lem:nonasymp_pdf_regularized}]
Note that conditioned on the design matrix $\bX$ and the true coefficients $\beta$, the labels $\by$ follow $\by \mid \bX, \beta\sim {\cal N}(\bX\beta, \sigma^2 \Id_n)$. Next, for 
\[
\hat{\beta}_\lambda := (\bX_p^\top\bX_p + n\lambda \Id_p)^{-1}\bX_p^\top \by
\]
the $\lambda$-ridge regularized estimator, we have that
\small
\[
\hat{\beta}_\lambda \mid \bX, \beta \sim {\cal N}\left((\bX_p^\top\bX_p + n\lambda \Id_p)^{-1}\bX_p^\top\bX\beta, \sigma^2 (\bX_p^\top\bX_p + n\lambda \Id_p)^{-1}\bX_p^\top\bX_p (\bX_p^\top\bX_p + n\lambda \Id_p)^{-1}\right).
\]
\normalsize
Hence, 
\small
\begin{align*}
&\bx_{0, p}^\top \hat{\beta}_\lambda \mid \bX, \bx_0, \beta \\
&\sim {\cal N}\hspace{-0.05cm}\left(\bx_{0, p}^\top(\bX_p^\top\bX_p + n\lambda \Id_p)^{-1}\bX_p^\top\bX\beta,  \sigma^2 \bx_{0,p}^\top(\bX_p^\top\bX_p + n\lambda \Id_p)^{-1}\bX_p^\top\bX_p (\bX_p^\top\bX_p + n \lambda \Id_p)^{-1}\bx_{0, p}\right),
\end{align*}
\normalsize
where we have additionally conditioned over any randomness in the choice of $\bx_0$.

In the $m=0$ case, where $\bx_0$ is a freshly drawn point independent of the data $\bX$, we may marginalize to remove the conditioning. 
Let $f_{\bx_{0,p}^\top \hat{\beta}_\lambda \mid \bX, \beta, \bx_0, \{m=0\}}$ denote the probability density function of the random variable $\bx_{0, p}^\top \hat{\beta}_\lambda \mid \bX, \beta, \bx_0, \{m=0\}$ and $f_{\bx_{0, p}^\top \hat{\beta} \mid \bX, \bx_0, \{m=1\}}$ be defined similarly. Let $f_{\bX}$ denote the density of $\bX$, a standard matrix-normal random variable, and let $f_\beta$ denote the density of $\beta \sim \cN\left(0, \frac{1}{D}\bI_D\right)$. 
Then we have that 
\begin{align*}
    f_{\bx_{0,p}^\top \hat{\beta} \mid \bx_0, \{m=0\}}(x) = \int_{\R^{n\times D}}\int_{\R^D} f_{\bx_{0, p}^\top \hat{\beta} \mid \bX, \bx_0, \{m=0\}} f_\beta f_{\bX} \, d\beta d\bX.
\end{align*}
Thus, 
\scriptsize
\begin{align*}
    &f_{\bx_{0,p}^\top \hat{\beta} \mid \bx_0, \{m=0\}}(x) = \frac{D^{\frac{D}{2}}}{\sigma(2\pi)^{\frac{nD+D+1}{2}}} \\
    &\quad\quad\times \displaystyle\bigintss\limits_{\R^{n\times D}}\displaystyle\bigintss\limits_{\R^D} \hspace{-0.15cm}\tfrac{\exp\left[-\frac{1}{2}\left[\left(\frac{x - \bx_{0,p}^\top(\bX_{p}^\top\bX_{p} + n\lambda\bI)^{-1}\bX_p^\top\bX\beta}{\sigma\|\bx_{0,p}\|_{(\bX_{p}^\top\bX_{p}+n\lambda\bI)^{-1}\bX_{p}^\top \bX_p (\bX_p^\top \bX_p +n\lambda\bI)^{-1}}}  \right)^2 + D\beta^\top \beta + {\rm{Tr}}\left(\bX^\top\bX
    \right)\right]\right]}{\|\bx_{0,p}\|_{(\bX_{p}^\top\bX_{p}+n\lambda\bI)^{-1}\bX_{p}^\top \bX_p (\bX_p^\top \bX_p +n\lambda\bI)^{-1}}}\,d\beta d\bX.
\end{align*}
\normalsize
In the $m=1$ case, because $\bx_0$ is a row of $\bX$, we condition on $\bx_0$ but not on the remaining rows of $\bX$. Without loss of generality, let $\bx_0$ be the first row of $\bX$ which can be done since $\bx_0$ is selected uniformly and the rows of $\bX$ are independent and identically distributed. Let $\tilde{\bX} \in \R^{(n-1)\times D}$ denote $\bX$ with its first row omitted such that $\bX = [\bx_0; \tilde{\bX}]$. Following the same approach as the preceding marginalization, we have that 
\begin{align*}
    f_{\bx_{0,p}^\top \hat{\beta} \mid \bx_0, \{m=1\}}(x) = \int_{\R^{(n-1)\times D}}\int_{\R^D} f_{\bx_{0, p}^\top \hat{\beta} \mid \bX, \bx_0, \{m=1\}} f_\beta f_{\tilde{\bX}} \, d\beta d\tilde{\bX}
\end{align*}
Thus,
\scriptsize
\begin{align*}
    &f_{\bx_{0,p}^\top \hat{\beta} \mid \bx_0, \{m=1\}}(x) 
     \quad = \frac{D^{\frac{D}{2}}}{\sigma(2\pi)^{\frac{nD+1}{2}}} \\
     &\times  \hspace{-0.2cm} \displaystyle\bigintss\limits_{\R^{(n-1)\times D}}
    \displaystyle\bigintss\limits_{\R^D}
    \hspace{-0.15cm}
    \frac{\exp\left[-\frac{1}{2}\left[\left(\tfrac{x - \bx_{0,p}^\top(\bX_{p}^\top\bX_{p} + n\lambda\bI)^{-1}\bX_p^\top\bX\beta}{\sigma\|\bx_{0,p}\|_{(\bX_{p}^\top\bX_{p}+n\lambda\bI)^{-1}\bX_{p}^\top \bX_p (\bX_p^\top \bX_p +n\lambda\bI)^{-1}}}  \right)^2 + D\beta^\top \beta+ {\rm{Tr}}\left(\tilde{\bX}^\top\tilde{\bX}
    \right)\right]\right]}{\|\bx_{0,p}\|_{(\bX_{p}^\top\bX_{p}+n\lambda\bI)^{-1}\bX_{p}^\top \bX_p (\bX_p^\top \bX_p +n\lambda\bI)^{-1}}}d\beta d\tilde{\bX}.
\end{align*}
\normalsize
\end{proof}

\section{Experimental Implementation Details}
\label{sec:appendix_experiments}

All experiments were ran only on CPUs on our internal servers without GPU processing. Processors used may have included Intel Xeon CPU E5-2630 (256GB RAM), Intel Xeon Silver 4214 CPU (192GB RAM), Intel Xeon Platinum 8260 CPU (192GB RAM), and AMD Ryzen Threadripper 1900X (32GB RAM). Our code is primarily written in Python and mainly uses numpy implementations of linear algebra operations. Please refer to our code on the Github page for more details. 

The histograms in Figure \ref{fig:variances} are obtained as follows. We first sample a vector $\bx_0 \sim \cN(0, \mathbf{I}_D)$, where $D = 20,000$. Then, for each $p = \gamma n$, we perform the following procedure 20,000 times. We sample $\beta \sim \cN(0, \frac{1}{D}\mathbf{I}_D)$. Then, we sample an $n \times D$ matrix $\mathbf{X}$ such that each element is iid standard normal. We then generate the ground truth vector $\by = \mathbf{X}\beta + \epsilon$, where $\epsilon$ is an $n$-dimensional vector whose elements are iid standard normal. We obtain the least squares estimates $\hat{\beta}$ on the first $p$ columns of $\mathbf{X}$ and on the vector $\mathbf{y}$ using numpy's lstsq function. Finally, we collect the $\widehat{y_0} = \bx_{0, p}^\top \hat{\beta}$ of all 20,000 models to form the blue histograms in Figure \ref{fig:variances}. The orange histograms are formed the same way except that the first row of $\mathbf{X}$ is replaced with $\bx_0$ and the first element of $\by$ is replaced with $y_0 = \bx_0^\top \beta + \epsilon_0$ for $\epsilon_0 \sim \cN(0, 1)$.

The experiment in Figure \ref{fig:experiment1} is performed as follows. In the experiment, we estimate the optimal membership advantage. Since the optimal MI adversary requires knowledge of the linear regression model's output distributions when a data point $\bx_0$ is in its training dataset ($m=1$) and when $\bx_0$ is not ($m=0$), we approximate these distributions by forming discrete histograms. To obtain the samples for the histograms, we use the same procedure as detailed in the previous paragraph, except that we obtain 100,000 samples for each histogram for increased precision. From these samples, the discrete histograms for $(\widehat{y_0} \mid m=0)$ and $(\widehat{y_0} \mid m=1)$ for a given $\gamma$ are then formed by splitting the interval between the minimum and maximum values over both $(\widehat{y_0} \mid m=0)$ and $(\widehat{y_0} \mid m=1)$ into 150 equally spaced bins. The histograms are normalized so that they represent probability mass functions (i.e. the bin counts sum to 1). Finally, treating the two histograms as probability mass functions, the membership advantage is calculated according to Definition \ref{definition:advantage}. For Figure \ref{fig:experiment1}, this procedure is repeated 20 times, each with a newly sampled $\bx_0$, and the mean membership advantage over the 20 experiments is plotted.

The experiments in Figure \ref{fig:additional_models} are also obtained by approximating the optimal MI adversary with discrete histograms as in the previous paragraph. The only difference is in how the datasets ($\bX$, $\by$, etc.) are sampled. Specifically, they are sampled according to the distributions for each experiment detailed in Section \ref{sec:experiments}. Again, the histograms are formed by splitting the model's prediction interval for each $\gamma$ into 150 equally spaced bins. 20 experiments are performed for each data model, with the means and standard errors reported in the figures.

\section{Experimental Verification of Ridge Theory}

We verify our theoretical finding that ridge regression increases membership advantage on linear regression models with Gaussian data in the overparameterized regime. The experiment follows the procedure detailed in Section~\ref{sec:appendix_experiments} for Figure \ref{fig:experiment1} except that we only sample 50,000 datasets for each of $m=0$ and $m=1$ for each $\gamma$ and each $\lambda$ for computational efficiency. For this experiment, we set $n=100$, $D=3,000$, and $\sigma=1$, as in Figure~\ref{fig:experiment1}. The results, shown in Figure~\ref{fig:ridge_experiment}, closely resemble the trend shown in the theoretical plot in Figure~\ref{fig:ridge_vs_gamma}, thus verifying our theory.

\begin{figure}
    \centering
    \includegraphics[width=0.6\textwidth]{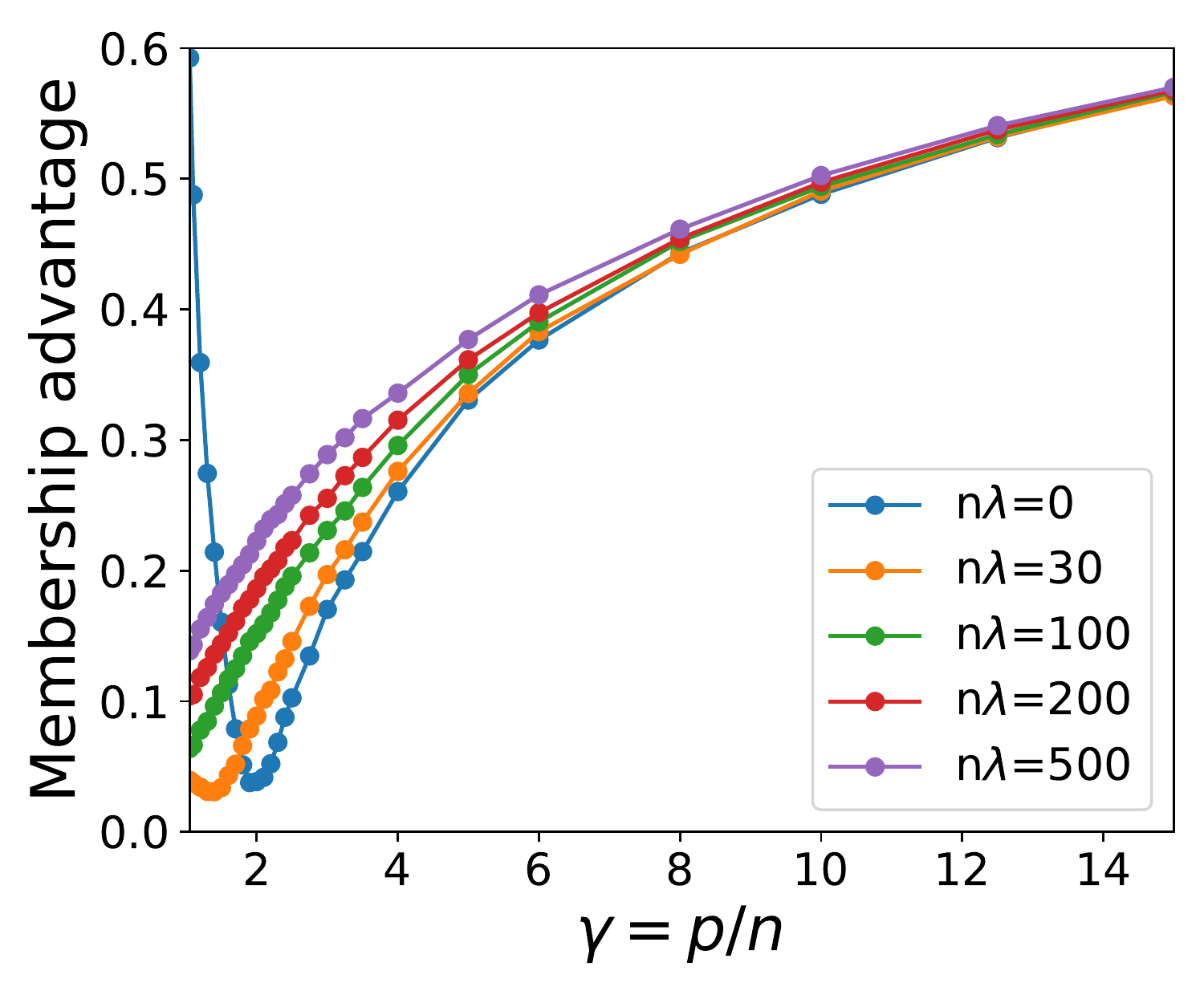}
    \caption{Experimental membership advantages for ridge-regularized linear regression on Gaussian data with $n=100$, $D=3000$, and $\sigma=1$ for different regularization strengths $\lambda$. As predicted by our theory, membership advantage increases with additional regularization in the sufficiently overparameterized regime. This experiment verifies our theoretical findings.}
    \label{fig:ridge_experiment}
\end{figure}

\newpage
\bibliography{refs}
\bibliographystyle{ieeetr}

\end{document}